\documentstyle[fleqn,12pt,graphicx,amssymb]{article}
\setlength{\oddsidemargin}{0.2cm}
\setlength{\evensidemargin}{0.2cm}
\setlength{\topmargin}{0.5cm}
\setlength{\textwidth}{15cm}
\addtolength{\textheight}{3cm}
\makeatletter
\newcounter{stelling}
\def\thestelling{\@arabic\c@stelling}
\newcounter{rulenr}
\def\therulenr{\@arabic\c@rulenr} 
\newenvironment{proof}{\begin{trivlist} \parindent = 0pt \parskip = 1.5ex \item[\hskip \labelsep{\bf Proof}]}{\hspace*{\fill} $\Box$ \end{trivlist}}
\newenvironment{example}{\refstepcounter{stelling} \begin{list}{{\bf Example \thestelling}}{} \item}{\end{list}}
\newenvironment{lemma}{\refstepcounter{stelling} \begin{list}{{\bf Lemma \thestelling}}{} \item}{\end{list}}
\newenvironment{definition}{\refstepcounter{stelling} \begin{list}{{\bf Definition \thestelling}}{} \item}{\end{list}}
\newenvironment{theorem}{\refstepcounter{stelling} \begin{list}{{\bf Theorem \thestelling}}{} \item}{\end{list}}
\newenvironment{property}{\refstepcounter{stelling} \begin{list}{{\bf Property \thestelling}}{} \item}{\end{list}}

\newenvironment{assumption}{\refstepcounter{stelling} \begin{list}{{\bf Assumption \thestelling}}{} \item}{\end{list}}
\newenvironment{ruledef}{\refstepcounter{rulenr} \begin{list}{{\bf Rule \therulenr}}{} \item}{\end{list}}
\newenvironment{remark}{\refstepcounter{stelling} \begin{list}{{\bf Remark \thestelling}}{} \item}{\end{list}}
\newenvironment{corollary}{\refstepcounter{stelling} \begin{list}{{\bf Corollary \thestelling}}{} \item}{\end{list}}
%
%
%

\renewcommand{\models}{\mid\kern-2pt=}
\newcommand{\nm}{\mid\kern-2.1pt\sim}
\renewcommand{\vdash}{\mid\kern-6pt-}
\newcommand{\sat}{\mid\kern-2pt\equiv}
\renewcommand{\emptyset}{\varnothing}
\begin{document}
	\pagenumbering{roman}

	\begin{center}
		\large
		{\bf A logic for reasoning with inconsistent knowledge \\
		{\em A reformulation using nowadays terminology (2024)}}
	\end{center}
	
	\paragraph{Changes to the original paper}
	The paper is a reformulation of the paper ``N.Roos, A logic for reasoning with inconsistent knowledge {\it Artificial Intelligence} {\bf 57} (1992) 69-103'' \cite{Roo-92} using nowadays terminology. The original paper determines `{\em justifications}' for deriving conclusion and resolving inconsistencies in provided knowledge and information. These justifications are actually {\em arguments} that are evaluated using the {\em stable semantics}, and the approach is an \emph{assumption-based argumentation system}. The current version of the paper talks arguments instead of justifications.
	
	Another change concerns the addition of superscripts to some symbols. The first paragraph of Section 4 states that a {linear extension} $\prec'$ of the reliability relation $\prec$ is considered. To make this clearer in the formalization, the superscript $^{\prec'}$ is added to anything that depends on the linear extension $\prec'$ that is currently considered.
	
	Section 9 is new and has been inserted to describe the relation with Dung's argumentation framework \cite{Dun-95}.

	The original text of the paper has not been updated except for a few typing errors and improvements to some of the proofs.
	
	\paragraph{History}
	The work on the topic described in the paper is based on the author's Master Thesis (1987) where a ranked set of premisses were used. The ranking was replaced by a partial order in 1988  \cite{Roo-88a,Roo-88b}. In the latter reports, the grounded semantics was used for drawing conclusions and inconsistencies were not resolved in the absence of a unique least preferred premiss among the premisses from which the inconsistency was derived. In 1989, the grounded semantics was replaced by the stable semantics \cite{Roo-89a,Roo-89b,Roo-89c} and in the absence of a unique least preferred premiss among the premisses from which the inconsistency is derived, every minimal premiss is considered. 
	\cite{Roo-89a} was submitted for publication to the {\it Artificial Intelligence journal} in 1989 after it was rejected for {\it IJCAI-89}. After a rather long review period, it was accepted with revisions in 1991. The AI Journal paper, on which the updated paper presented here is based, was the result of processing the reviewers recommendation.
	
	\paragraph{Related work}
	There is a correspondence between the ``logic for reasoning with inconsistent knowledge'' and Brewka's ``Preferred Subtheories'' \cite{Bre-89}. Both draw conclusions from preferred maximally consistent subsets of the premisses. The ``logic for reasoning with inconsistent knowledge'' \cite{Roo-88a,Roo-88b,Roo-89a,Roo-89b,Roo-89c} was developed independent of Brewka's ``Preferred Subtheories'' \cite{Bre-89}.
	The ``logic for reasoning with inconsistent knowledge'' differs from Brewka's ``Preferred Subtheories'' by also presenting an {\em assumption-based argumentation system} for deriving conclusions, and a {\em preferential model semantics}. 
	
\newpage
	\pagenumbering{arabic}
	\setcounter{page}{1}
\title{A logic for reasoning with inconsistent knowledge\footnote{The research reported on was carried out at 
		Delft University of Technology (TU-Delft) \cite{Roo-88a,Roo-89a} and the
		Royal Netherlands Aerospace Centre (NLR) in Amsterdam \cite{Roo-88b,Roo-89b}. Parts of the research were published in \cite{Roo-89c}} \\
	\Large{\em A reformulation using nowadays terminology (2024)}}
\author{Nico Roos}
\date{}
\maketitle
Current affiliation:
\begin{center}
	Department of Advanced Computing Sciences \\
	Faculty of Science and Engineering \\
	Maastricht University \\
	The Netherlands
\end{center}

This paper has been published in {\it Artificial Intelligence} 
{\bf 57} (1992) 69-103.

\begin{abstract}
In many situations humans have to reason with inconsistent knowledge.
These inconsistencies may occur due to not fully reliable sources of information.
In order to reason with inconsistent knowledge, it is not possible 
to view a set of premisses as absolute truths as is done in 
predicate logic.
Viewing the set of premisses as a set of assumptions, however, it is possible 
to deduce useful conclusions from an inconsistent set of premisses.
In this paper a logic for reasoning with inconsistent 
knowledge is described.
This logic is a generalization of  the work of N.~Rescher \cite{Res-64}.
In the logic a reliability relation is used to choose between 
incompatible assumptions.
These choices are only made when a contradiction is derived.
As long as no contradiction is derived, the knowledge is assumed to be 
consistent.
This makes it possible to define an {\em argumentation}-based deduction process for
the logic.
For the logic a semantics based on the ideas of Y.~Shoham \cite{Sho-86,Sho-87},
is defined.
It turns out that the semantics for the logic is a preferential semantics 
according to the definition S.~Kraus, D.~Lehmann and M.~Magidor \cite{Kra-90}.
Therefore the logic is a logic of system {\bf P} and possesses all
the properties of an ideal non-monotonic logic.
\end{abstract}

\noindent {\bf Keywords:} inconsistent information, argumentation, preferential model semantics, assumption-based argumentation system.

\section{Introduction}
In many situations humans have to reason with inconsistent knowledge.
These inconsistencies may occur due to sources of information which are
not fully reliable.
For example, in daylight information about the position of an object coming from
your eyes is more reliable than the information about the position of the object
coming from your ears.
But even reliable sources such as domain experts, do not always agree.

To be able to reason with inconsistent knowledge it is not possible 
to view a set of premisses as absolute truths, as in 
predicate logic.
Viewing a set of premisses as a set of assumptions, however, makes it possible 
to deduce useful conclusions from an inconsistent set of premisses.
As long as we do not have it proven otherwise, the premisses are assumed to be true
statements about the world.
When, however, a contradiction is derived, we can no longer  make this assumption.
To restore consistency, one of the premisses has to be removed.
To be able to select a premiss to be removed, a reliability relation on the premisses
will be used.
This reliability relation denotes the relative reliability of the premisses.

In the following sections I will first describe the propositional case.
After describing the propositional case, I will describe how to extend the logic to the
first order case.

\section{Basic concepts} \label{just}
The language $L$, that will be used  to express the propositions of the logic,
consists of the propositions that can be generated using a set of atomic propositions
and the logical operators $\neg$ and $\to$.
When in this paper the operators $\wedge$ and $\vee$ are used, they should 
be interpreted as shortcuts: i.e.\ $\alpha \wedge \beta$ for 
$\neg ( \alpha \to \neg \beta)$ and $\alpha \vee \beta$ for 
$\neg \alpha \to \beta$.

To be able to reason with inconsistent knowledge, I will consider
premisses to be assumptions.
These premisses are assumed to be true as long as
we do not derive a contradiction from them.
If, however, a contradiction is derived, we have to determine the premisses on which the 
contradiction is based.
The premisses on which a contradiction is based are the premisses used in the 
derivation of the contradiction.
When we know these premisses, we have to remove one of them to block 
the derivation of the contradiction.
To select a premiss to be removed, I will use a reliability relation.
This reliability relation denotes the relative reliability of the premisses.
It denotes that one premiss is more reliable than some other premiss.
Clearly the relation must be irreflexive, asymmetric and transitive.
I do not demand this  relation to be total,
for a total reliability relation implies complete knowledge about the relative
reliability of the premisses.
This does not always have to be the case.

A set of premisses $\Sigma$ is a subset of the language $L$.
On  the set of premisses $\Sigma$ a partial reliability relation
$\prec$ may be defined.
Together they form a reliability theory.
\begin{definition}
A reliability theory is a tuple $\langle \Sigma, \prec \rangle$ where
$\Sigma \subseteq L$ is a finite set of premisses and 
$\prec \; \subseteq (\Sigma \times \Sigma)$ is an irreflexive, asymmetric and transitive
partial reliability relation.
\end{definition}

Using the reliability relation, we have to remove a least preferred premiss of
the inconsistent set, thereby blocking the derivation of the
contradiction.
\begin{example}
Let $\Sigma$ denote a set of premisses,
\[\Sigma = \{ 1. \; \varphi,2. \; \varphi \rightarrow \psi,3. \; \neg \psi,4. \; \alpha \} \]
and $\prec$ a reliability relation on $\Sigma$:
\[\prec \; = \{ (3,1), (3, 2) \} \]
From $\Sigma $, $\psi$ can be derived using premisses 1 and 2.
Furthermore, a contradiction can be derived from $\psi$ and premiss 3.
Hence, the contradiction is based on the premisses 1, 2 and 3.
Since premiss 3 is the least reliable premiss on which the 
contradiction is based, it has to be removed.
\end{example}
Three problems may arise when trying to block the derivation of a contradiction.
\begin{itemize}
\item
Firstly, we have to be able to determine the premisses on which a
contradiction is based.
These are the premisses that are used in the derivation of the contradiction.
To solve this problem, \emph{supporting arguments} are introduced.
A {\it supporting argument} describes the 
premisses from which a proposition is derived.
\item
Secondly, a premiss that has been removed, may have to be placed back 
because the contradiction causing its removal is also blocked by the removal of 
another premiss.
This may occur because of some other contradiction being derived.
\begin{example}
Let $\Sigma$ be a set of premisses
\[ \Sigma = \{ \alpha, \neg \alpha \wedge \neg \beta, \beta \} \]
and let $\prec$ be a reliability relation on $\Sigma$ given by
\[ \alpha \prec (\neg \alpha \wedge \neg \beta) \prec \beta. \]
From $\alpha$ and $\neg \alpha \wedge \neg \beta$ we can derive a 
contradiction causing the removal of $\alpha$.
From $\neg \alpha \wedge \neg \beta$ and $\beta$ we can also derive a 
contradiction causing the removal of $\neg \alpha \wedge \neg \beta$.
When $\neg \alpha \wedge \neg \beta$ is removed, it is no longer necessary 
that $\alpha$ is also removed from the set of premisses to avoid the derivation of 
a contradiction.
\end{example}
To solve this problem, \emph{undermining arguments}\footnote{Sometimes the term {\em undercutting argument} is used. An undercutting argument attacks the application of a defeasible rule. Defeasible premisses can be described by defeasible rules with an empty antecedent.} are introduced. An {\it undermining argument} describes which premiss must be removed if other premisses are assumed to be true.
It is a constraint on the set of premisses we assume to be true.
\item
Thirdly, there need not exist a single least reliable premiss in
the set of premisses on which a contradiction is based.
This can occur when no 
reliability relation between premisses is specified.
In such a situation we have to consider the results of 
the removal of every alternative separately.

Choosing a premiss to be removed implies that we assume the alternatives to be 
more reliable.
Since the reliability relation is transitive, making such a choice
influences the reliability relation defined on the premisses.
\begin{example} \label{choice-to-pref}
Let $\Sigma = \{ a,b, \neg a, \neg b \}$ be a set of premisses and let \\
$\prec \; = \{ (a,\neg b), (b, \neg a) \}$ be a reliability relation on $\Sigma$.
Since $a$ and $\neg a$ are in conflict and since there is no reliability relation 
defined between them, we have to choose a culprit.
If we choose to remove $\neg a$, $a$ is assumed to be more reliable.
Therefore, $\neg b$ is more reliable than $b$.
Hence, since $b$ and $\neg b$ are also in conflict, $b$ must be removed.
\end{example}
As is illustrated in the example above, the premisses removed depend on
the extension of the reliability relation.
Therefore, in the logic described here, every (strict) linear extension of the reliability
relation will be considered.

Different linear extensions of the reliability relation can result in 
different subsets of the premisses that are assumed to be true statement about
the world ({\it that can be believed}).
The set of theorems is defined as the intersection of all extensions of the logic.
\end{itemize}

As mentioned above, two types of arguments, 
{\it supporting arguments} and 
{\it undermining arguments}, will be used.
A supporting argument is used to denote that a proposition
is believed if the premisses in the antecedent are believed, while an 
undermining argument is used to denote that a premiss can no longer 
be believed (must be withdrawn) if the premisses in the 
antecedent are believed.
\begin{definition}
Let $\Sigma $ be a set of premisses.
Then a {\em supporting argument} is a formula:
\[ P \Rightarrow \varphi \]
where $P$ is a subset of the set of premisses $\Sigma$ and
$\varphi \in L$ is a proposition. $\Rightarrow$ can be viewed as the warrant of the argument \cite{Tou-58}. \\
An {\em undermining argument} is a formula:
\[ P \not\Rightarrow \varphi \] 
where $P$ is a subset of the set of premisses $\Sigma$ and
$\varphi$ is premiss in $\Sigma$, but not in $P$. $\not\Rightarrow$ can be viewed as the warrant of the argument.
\end{definition}

\section{Characterizing the set of theorems}
In this section a characterization, based on the ideas of N.~Rescher
\cite{Res-64}, is given for the
set of theorems of a reliability theory.
As is mentioned in the previous section, linear extensions of the 
reliability relation have to be considered.
For each linear extension a set of premisses that can still be believed can 
be determined.
This set can be determined by enumerating the premisses with respect to the linear
extension of the reliability relation, starting with the most reliable
premiss.
Starting with an empty set $D$, if a premiss may consistently be added to the set $D$, it {\em should} be added.
Otherwise it must be ignored.
Because the most reliable premisses are added first, we get a {\em most reliable 
consistent set of premisses}.\footnote{The original version of this paper \cite{Roo-89a,Roo-89b} does not consider  linear extensions of the reliability relation. The partial ordering of the premisses used in \cite{Roo-89a,Roo-89b} is not a reliability relation, but a preference relation used to choose between incompatible premisses. Considering linear extensions of the reliability relation makes the approach similar to Brewka's \emph{Preferred Subtheories} \cite{Bre-89}.}
\begin{definition} \label{proof-th}
Let $\langle \Sigma,\prec \rangle$ 
be a reliability theory.
Furthermore, let $\sigma_1, \sigma_2, ..., \sigma_m$ be some enumeration
of ${\Sigma}$ such that for every $\sigma_j \prec \sigma_k $: $k < j$.
\begin{list}{}{} \item
Then $D$ is a most reliable consistent set of premisses if and only if:
\[ D = D_m ,D_0  =  \emptyset \] 
and for $0 < i < m $
\[ D_{i+1}  =  \left\{ \begin{array}{ll}
	D_i \cup \{\sigma_i \} & \mbox{if $D_i \cup \{\sigma_i \}$ 
	is consistent} \\ 
	D_i & \mbox{otherwise}
	\end{array} \right.
\]
\end{list}
\end{definition}

Let $\cal R$ be the set of all the most reliable 
consistent sets of premisses that can be determined.
\begin{definition} 
Let $\langle \Sigma,\prec \rangle$ 
be a reliability theory.

Then the set $\cal R$ of all the most reliable 
consistent sets of premisses is defined by:
\[ {\cal R} = \{ D \mid \begin{array}[t]{l} D \mbox{ is a most reliable consistent set of premisses} \\
\mbox{given some enumeration of } \Sigma \mbox{ consistent with} \prec \ \}. \end{array} \]
\end{definition}
The set of theorems of a reliability theory is defined as the 
set of those propositions that are logically entailed by every most reliable 
consistent set of premisses in $\cal R$.
\begin{definition}
Let $\langle \Sigma,\prec \rangle$ 
be a reliability theory and let $\cal R$ be the corresponding set of all the most reliable 
consistent sets of premisses.

Then the set of theorems of $\langle \Sigma,\prec \rangle$ is defined as:
\[ {\it Th}(\langle \Sigma,\prec \rangle) = \bigcap_{D \in {\cal R}} {\it Th}(D). \]
where $ {\it Th}(D) = \{ \varphi \mid D \vdash\varphi \} $
\end{definition}

\section{The deduction process} \label{deduc}
In this section a deduction process for a reliability theory is described.
\emph{Given a strict linear extension $\prec'$ of the reliability relation $\prec$},
the deduction process determines the set of premisses
that can be believed.
\begin{remark}
Instead of starting a deduction process for every strict linear extension of $\prec$,
we can also create different extensions of $\prec$ when a contradiction not based 
on a single least reliable premiss, is derived.
This approach results in one deduction tree instead of a deduction sequence
for every linear extension of $\prec$.
\end{remark}

Instead of deriving new propositions, only new 
{\em arguments} are derived.
These arguments are generated by the inference rules.
The reason why arguments instead of propositions are derived, is that
the propositions that can be believed (the belief set) depend on the set
of premisses that can still be believed.
Since this set of premisses may change because of new information derived, 
the belief set can change in a non-monotonic way.
The arguments, however, do not depend on the information derived. 
Furthermore, they contain all the information needed to determine 
the premisses that can still be believed and the corresponding belief set. 
Note that the set of arguments depends on the linear extension $\prec'$ of $\prec$ that we consider.

Starting with an initial set of arguments ${\cal A}^{\prec'}_0$, the deduction process
generates a sequence of sets of arguments:
\[ {\cal A}^{\prec'}_0, {\cal A}^{\prec'}_1, {\cal A}^{\prec'}_2, ... \]
With each set of arguments ${\cal A}^{\prec'}_i$ there corresponds a belief set $B^{\prec'}_i$.
So we get a sequence of belief sets:
\[ B^{\prec'}_0, B^{\prec'}_1, B^{\prec'}_2, ... \]
Although for the set of arguments there holds:
\[ {\cal A}^{\prec'}_i \subseteq {\cal A}^{\prec'}_{i +1} \]
such a property does not hold for the belief sets.
Because a belief set $B^{\prec'}_i$ is determined by evaluating the arguments ${\cal A}^{\prec'}_i$, the
belief set can change in a non-monotonic way.
J.~W.~Goodwin has called this the process non-monotonicity of 
the deduction process \cite{Goo-87}.
According to Goodwin this process non-monotonicity is just another aspect
of non-monotonic logics.

In the limit, when all the argument ${\cal A}^{\prec'}_\infty$ have been derived, 
the corresponding belief set $B_\infty$ will be equal to an extension 
of the reliability theory.
Goodwin  has called such this process of deriving the set of theorems, the 
{\em logical process theory} of a logic \cite{Goo-87}.
The logical process theory focuses on the deduction process of a logic.
In this it differs from the logic itself, which only focuses on derivability;
i.e.\ logics only characterize the set of theorems that follow from the premisses.

A deduction process for the logic starts with an initial
set of arguments ${\cal A}^{\prec'}_0$.
This initial set ${\cal A}^{\prec'}_0$ contains a supporting argument for every premiss.
These arguments indicate that a proposition is believed if the 
corresponding premiss is believed.
\begin{definition} 
Let $\Sigma$ be a set of premisses.
Then the set of initial arguments ${\cal A}^{\prec'}_0$ is defined as follows:
\[ {\cal A}^{\prec'}_0 = \{ \{ \varphi \} \Rightarrow \varphi \mid \varphi \in 
\Sigma  \}. \]
\end{definition}
Each set of arguments ${\cal A}^{\prec'}_i$ with $i > 0$ is generated from the set 
${\cal A}^{\prec'}_{i - 1}$ by adding a new argument.
How these arguments are determined, depends on the deduction system 
used.
In the following description of the deduction process, I will use an 
axiomatic deduction system for the language $L$, only containing the
logical operators $\rightarrow$ and $\neg$.
\begin{list}{}{} 
\item[{\bf Axioms}]
The logical axioms are the tautologies of a propositional logic.
\end{list}

Because an axiomatic approach is used, 
arguments for the axioms have to be introduced.
Since an axiom is always valid, it must have an 
supporting argument with an antecedent equal to the empty set.
An axiom is introduced by the following 
axiom rule.
\begin{ruledef} \label{rule1}
An axiom $\varphi$ gets a 
supporting argument $\emptyset \Rightarrow \varphi$. 
\end{ruledef}

In the deduction system two inference rules will be used, namely
the modus ponens and the contradiction rule.
Modus ponens introduces a new supporting argument for some proposition.
This argument is constructed from the arguments for the 
antecedents of modus ponens.
\begin{ruledef} \label{rule2}
Let $\varphi$ and $\varphi \rightarrow 
\psi$ be two propositions with arguments, respectively $P \Rightarrow \varphi$ and 
$Q \Rightarrow ( \varphi \rightarrow \psi )$.
\begin{list}{}{} \item
Then the proposition $\psi$ 
gets a supporting argument $(P \cup Q) \Rightarrow \psi$.
\end{list}
\end{ruledef}
While modus ponens introduces a new 
supporting argument, the contradiction rule introduces a new 
undermining argument to eliminate a contradiction.
\begin{ruledef} \label{rule3}
Let $\varphi$ and $\neg \varphi$ 
be propositions with arguments
$P \Rightarrow \varphi$ and $Q \Rightarrow 
\neg \varphi$ and let $\eta = \min_{\prec'}(P \cup Q)$ where the function $\it min$ 
selects the minimal element given the extended reliability relation $\prec'$.
\begin{list}{}{} \item
Then the premiss $\eta$ gets an
undermining argument $((P \cup Q) / \eta ) \not\Rightarrow \eta$.\footnote{In \cite{Roo-89b,Roo-89c}, no linear extensions of $\prec$ where considered and an undermining argument for every premiss in $\min_{\prec}(P \cup Q)$ is constructed.}
\end{list}
\end{ruledef}

In order to guarantee that the current set of believed premisses will approximate 
a most reliable consistent set of premisses,
we have to guarantee that the process creating new arguments is fair; i.e.\
the process does not forever defer the addition of some 
possible argument to the set of arguments.
\begin{assumption} \label{fair}
The reasoning process will not defer the addition of any possible argument 
to the set of arguments forever.
\end{assumption}
If a fair process is used, the following theorems hold.
The first theorem guarantees the soundness of the supporting arguments;
i.e.\ the antecedent of a supporting argument logically entails the
consequent of the supporting argument. 
The second theorem guarantees the completeness of the supporting arguments;
i.e.\ if a proposition is logically entailed by a subset of the premisses, then there
exists a corresponding supporting argument.
Finally, the third and fourth theorem guarantee  respectively the soundness
and the completeness of the undermining arguments.
\begin{theorem} \label{th1} {\it Soundness} \\
For each $i \geq 0$: 
\begin{list}{}{} \item
if $P \Rightarrow \varphi
\in {\cal A}^{\prec'}_i$, then:
\[ P \subseteq {\Sigma} \mbox{ and } P
\models \varphi. \]
\end{list}
\end{theorem}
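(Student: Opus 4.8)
The plan is to argue by induction on the stage index $i$ of the deduction process, exploiting the fact that each ${\cal A}^{\prec'}_{i+1}$ is obtained from ${\cal A}^{\prec'}_i$ by adding a single new argument produced by one of the rules. Since the statement quantifies only over \emph{supporting} arguments $P \Rightarrow \varphi$, the undermining arguments produced by Rule~\ref{rule3} fall outside its scope and can be ignored; the only mechanisms that can create a supporting argument are the initial set ${\cal A}^{\prec'}_0$, the axiom rule (Rule~\ref{rule1}), and modus ponens (Rule~\ref{rule2}). So the induction need only track these three cases.

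For the base case, every argument in ${\cal A}^{\prec'}_0$ has the form $\{\varphi\} \Rightarrow \varphi$ with $\varphi \in \Sigma$, so $P = \{\varphi\} \subseteq \Sigma$ and trivially $\{\varphi\} \models \varphi$.

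For the inductive step, I assume both conclusions hold for every supporting argument in ${\cal A}^{\prec'}_i$ and examine the single argument added to form ${\cal A}^{\prec'}_{i+1}$. If it is introduced by Rule~\ref{rule1}, it is $\emptyset \Rightarrow \varphi$ with $\varphi$ a tautology, whence $\emptyset \subseteq \Sigma$ and $\emptyset \models \varphi$ since a tautology holds under every interpretation. If it is introduced by Rule~\ref{rule2}, it has the form $(P \cup Q) \Rightarrow \psi$ obtained from $P \Rightarrow \varphi$ and $Q \Rightarrow (\varphi \rightarrow \psi)$, both already in ${\cal A}^{\prec'}_i$. The induction hypothesis gives $P, Q \subseteq \Sigma$ and $P \models \varphi$, $Q \models (\varphi \rightarrow \psi)$; hence $P \cup Q \subseteq \Sigma$, and by monotonicity of $\models$ we get $P \cup Q \models \varphi$ and $P \cup Q \models (\varphi \rightarrow \psi)$, so semantic modus ponens yields $P \cup Q \models \psi$.

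The argument is essentially routine and I do not expect a serious obstacle beyond careful bookkeeping. The only points requiring attention are the standard semantic lemmas underlying the two inference rules — monotonicity of $\models$ (adding premisses preserves entailment) and closure of $\models$ under modus ponens — together with the structural observation that the contradiction rule never outputs a supporting argument, which is what keeps the induction confined to the three generating cases above.
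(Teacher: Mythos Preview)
Your proposal is correct and follows essentially the same induction on the stage index~$i$ as the paper, with the same three cases (initial arguments, Rule~\ref{rule1}, Rule~\ref{rule2}). The only cosmetic difference is that the paper first reduces the claim to $P \vdash \varphi$ and then invokes soundness of propositional logic to obtain $P \models \varphi$, whereas you work directly with semantic entailment throughout; the argument is otherwise identical.
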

\begin{theorem} \label{th2} {\it Completeness} \\
For each $P \subseteq {\Sigma}$: 
\begin{list}{}{} \item
if $P \models \varphi$, then there exists a $Q \subseteq P$ such that
for some $i \geq 0$: \[ Q \Rightarrow \varphi \in {\cal A}^{\prec'}_i. \] 
\end{list}
\end{theorem}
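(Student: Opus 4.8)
The plan is to reduce the statement to the completeness of the underlying propositional calculus and then invoke the fairness assumption (Assumption~\ref{fair}) to place the resulting argument in some ${\cal A}^{\prec'}_i$. Since $\Sigma$ is finite, $P = \{p_1,\dots,p_n\}$ is finite, and $P \models \varphi$ holds exactly when the formula $\chi = p_1 \to (p_2 \to \cdots \to (p_n \to \varphi)\cdots)$ is a propositional tautology. By the axiom rule (Rule~\ref{rule1}), $\chi$ therefore receives a supporting argument $\emptyset \Rightarrow \chi$, while the initial set ${\cal A}^{\prec'}_0$ already supplies $\{p_j\} \Rightarrow p_j$ for each $j$.

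Next I would build the argument $P \Rightarrow \varphi$ by $n$ successive applications of modus ponens (Rule~\ref{rule2}). Combining $\{p_1\} \Rightarrow p_1$ with $\emptyset \Rightarrow \chi$ yields $\{p_1\} \Rightarrow (p_2 \to \cdots \to (p_n \to \varphi)\cdots)$; feeding in $\{p_2\} \Rightarrow p_2$ gives $\{p_1,p_2\} \Rightarrow (p_3 \to \cdots)$, and so on. After the $n$-th step the accumulated antecedent is exactly $\{p_1,\dots,p_n\} = P$ and the consequent is $\varphi$, so $Q = P \subseteq P$ is a witness. Equivalently, one can argue by induction on the length of a Hilbert-style derivation of $\varphi$ from $P$: axioms give $\emptyset \Rightarrow \cdot$, premisses give $\{p\}\Rightarrow p$, and each modus-ponens step combines the antecedents by Rule~\ref{rule2}, keeping every antecedent a subset of $P$; this variant has the minor advantage of yielding a possibly smaller $Q$.

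It remains to show that this argument actually occurs in some ${\cal A}^{\prec'}_i$ with $i$ finite, and here I would use Assumption~\ref{fair} together with the monotonicity ${\cal A}^{\prec'}_i \subseteq {\cal A}^{\prec'}_{i+1}$. The construction above uses only finitely many arguments, arranged in a finite dependency tree whose leaves are axiom and initial arguments and whose internal nodes are modus-ponens applications. By induction on the height of this tree: each leaf is a possible argument from the outset, hence by fairness is added at some finite stage; and once both premisses of a modus-ponens node have appeared (say by stage $k$, the maximum of their two indices), that node becomes a possible argument at stage $k$ and so, again by fairness, is added at some finite stage. Taking the maximum of the finitely many stages at which the arguments of the tree appear yields an index $i$ with $P \Rightarrow \varphi \in {\cal A}^{\prec'}_i$.

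The main obstacle is not the propositional reasoning, which is essentially immediate once all tautologies are admitted as axioms, but the bookkeeping in this last step. Fairness only guarantees that each individually \emph{possible} argument is eventually added, so one must respect the dependencies of the tree---a modus-ponens argument becomes possible only after its two premisses are present---and then combine finitely many finite stages into a single $i$. The monotonicity of the sequence $({\cal A}^{\prec'}_i)_i$ is precisely what makes this combination legitimate, since it ensures that arguments, once added, are never lost at a later stage.
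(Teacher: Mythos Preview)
Your proposal is correct, and in fact you sketch two valid routes. The alternative you mention in passing---induction on the length of a Hilbert-style derivation of $\varphi$ from $P$, with axioms giving $\emptyset \Rightarrow \cdot$, premisses giving $\{p\}\Rightarrow p$, and modus ponens combining antecedents via Rule~\ref{rule2}---is exactly the proof the paper gives. Your primary route differs: rather than tracking an arbitrary derivation, you exploit the fact that the axiom schema already contains \emph{all} tautologies, so by the deduction theorem you can package the entire entailment $P\models\varphi$ into a single axiom $\emptyset\Rightarrow\chi$ and then unwind it with exactly $|P|$ applications of Rule~\ref{rule2}. This buys you a shorter, more uniform argument tree (height $n+1$ instead of the length of an arbitrary proof) at the cost of always producing $Q=P$ rather than a potentially smaller witness; the paper's version naturally yields the set of premisses actually used in the derivation. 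Your treatment of fairness and the finite bookkeeping is, if anything, more explicit than the paper's, which simply appeals to Assumption~\ref{fair} at the inductive step without spelling out the dependency-tree argument.
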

\begin{theorem} \label{th3} {\it Soundness} \\
For each $i \geq 0$: 
\begin{list}{}{} \item
if $P \not\Rightarrow \varphi 
\in {\cal A}^{\prec'}_i$, then: 
\[ (P \cup \{ \varphi \}) \subseteq 
{ \Sigma }, \mbox{ and } (P \cup \{ \varphi \}) \mbox{ is not satisfiable}. \]
\end{list}
\end{theorem}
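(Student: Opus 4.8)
The plan is to prove the statement by induction on $i$, exploiting the fact that undermining arguments enter a derivation only through the contradiction rule (Rule~\ref{rule3}). For the base case $i = 0$, the set ${\cal A}^{\prec'}_0$ consists solely of supporting arguments $\{\varphi\} \Rightarrow \varphi$, so it contains no undermining argument and the implication holds vacuously. For the inductive step, recall that ${\cal A}^{\prec'}_i$ arises from ${\cal A}^{\prec'}_{i-1}$ by adding a single new argument. If that argument is a supporting argument (introduced by the axiom rule or by modus ponens), then no new undermining argument appears and the claim follows from the induction hypothesis. Hence the only case that requires work is the one in which the new argument is an undermining argument produced by Rule~\ref{rule3}.

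So suppose $P \not\Rightarrow \varphi$ is the undermining argument added at step $i$. By Rule~\ref{rule3} there are supporting arguments $P' \Rightarrow \psi$ and $Q' \Rightarrow \neg\psi$ in ${\cal A}^{\prec'}_{i-1}$ such that, writing $\eta = \min_{\prec'}(P' \cup Q')$, we have $\varphi = \eta$ and $P = (P' \cup Q') \setminus \{\eta\}$. I would first apply the soundness theorem for supporting arguments (Theorem~\ref{th1}) to each of the two premisses of the rule, obtaining $P' \subseteq \Sigma$, $Q' \subseteq \Sigma$, and the entailments $P' \models \psi$ and $Q' \models \neg\psi$.

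The unsatisfiability half is then immediate: since $P' \cup Q'$ contains both $P'$ and $Q'$, monotonicity of $\models$ gives $P' \cup Q' \models \psi$ and $P' \cup Q' \models \neg\psi$, so $P' \cup Q'$ entails a contradiction and is therefore unsatisfiable. In particular $P' \cup Q'$ is nonempty, which, together with $\prec'$ being a strict linear order on $\Sigma$, guarantees that $\eta = \min_{\prec'}(P' \cup Q')$ is well-defined and is a genuine element of $P' \cup Q'$. For the subset half, I would observe that because $\eta \in P' \cup Q'$ we recover the full union when the minimal element is reinstated:
\[ P \cup \{\varphi\} = \bigl( (P' \cup Q') \setminus \{\eta\} \bigr) \cup \{\eta\} = P' \cup Q'. \]
Since $P' \cup Q' \subseteq \Sigma$, this yields $P \cup \{\varphi\} \subseteq \Sigma$, and the unsatisfiability of $P' \cup Q'$ established above is precisely the unsatisfiability of $P \cup \{\varphi\}$.

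I do not expect a serious obstacle here: the theorem is essentially a corollary of Theorem~\ref{th1}, and the entire content is bookkeeping about how Rule~\ref{rule3} forms its output. The one point demanding care is the identity $((P' \cup Q') \setminus \{\eta\}) \cup \{\eta\} = P' \cup Q'$, which relies on $\eta$ actually belonging to $P' \cup Q'$; this is why it is worth establishing first that $P' \cup Q'$ is unsatisfiable and hence nonempty, so that the $\min_{\prec'}$ operation returns an element of the set rather than being undefined.
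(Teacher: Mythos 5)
Your proposal is correct and follows essentially the same route as the paper: induction on $i$, with the base case vacuous (no undermining arguments in ${\cal A}^{\prec'}_0$), and the inductive step reducing the only nontrivial case---an undermining argument freshly introduced by Rule~\ref{rule3}---to Theorem~\ref{th1} applied to the two supporting arguments, from which both $P \cup \{\varphi\} = R \cup Q \subseteq \Sigma$ and the unsatisfiability of $P \cup \{\varphi\}$ follow. Your extra care in noting that $\eta = \min_{\prec'}(P' \cup Q')$ is a genuine element of $P' \cup Q'$ (so the set identity holds) is a detail the paper leaves implicit, but it is not a different argument.
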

\begin{theorem} \label{th4} {\it Completeness} \\
For each $P \subseteq { \Sigma }$: 
\begin{list}{}{} \item
if $P$ is a minimal 
unsatisfiable set of premisses and $\varphi  = \min_{\prec'}(P)$, where the function
$\it min$ selects the minimal element given the extended reliability relation $\prec'$,
then for some  
$i \geq 0$: 
\[ P \backslash \varphi \not\Rightarrow \varphi \in {\cal A}^{\prec'}_i. \]
\end{list}
\end{theorem}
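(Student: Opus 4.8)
The plan is to manufacture a single application of the contradiction rule (Rule~\ref{rule3}) whose two input arguments have antecedents whose union is exactly $P$, so that the rule is forced to target $\min_{\prec'}(P) = \varphi$ and to emit precisely the undermining argument $P \backslash \varphi \not\Rightarrow \varphi$. The minimality of $P$ is what pins the antecedents down exactly, and the fairness assumption (Assumption~\ref{fair}) is what guarantees the rule is eventually fired.

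First I would extract a contradiction from $P$ around the premiss $\varphi$. Since $P = (P \backslash \{\varphi\}) \cup \{\varphi\}$ is unsatisfiable, every model of $P \backslash \{\varphi\}$ must falsify $\varphi$, so $P \backslash \{\varphi\} \models \neg \varphi$. Applying the completeness of supporting arguments (Theorem~\ref{th2}) to this entailment yields some $Q \subseteq P \backslash \{\varphi\}$ and some stage $j$ with $Q \Rightarrow \neg\varphi \in {\cal A}^{\prec'}_j$. On the other side, the trivial supporting argument $\{\varphi\} \Rightarrow \varphi$ is an initial argument, hence lies in ${\cal A}^{\prec'}_0 \subseteq {\cal A}^{\prec'}_j$.

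The key step is to upgrade the inclusion $Q \subseteq P \backslash \{\varphi\}$ to an equality. If $Q$ were a proper subset, then $Q \models \neg\varphi$ would make $Q \cup \{\varphi\}$ an unsatisfiable subset of $P$ strictly smaller than $P$ (it omits at least one element of $P \backslash \{\varphi\}$), contradicting the minimal unsatisfiability of $P$. Hence $Q = P \backslash \{\varphi\}$, and the two arguments $\{\varphi\} \Rightarrow \varphi$ and $(P \backslash \{\varphi\}) \Rightarrow \neg\varphi$ are both present at stage $j$, with antecedents $P' = \{\varphi\}$ and $Q' = P \backslash \{\varphi\}$ satisfying $P' \cup Q' = P$.

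Finally I would fire the contradiction rule on this pair. It computes $\eta = \min_{\prec'}(P' \cup Q') = \min_{\prec'}(P) = \varphi$ by hypothesis and produces $((P' \cup Q') \backslash \eta) \not\Rightarrow \eta$, i.e.\ $P \backslash \varphi \not\Rightarrow \varphi$. Since both inputs are available from stage $j$ on and the argument sets are increasing, this rule application is permanently enabled; by fairness (Assumption~\ref{fair}) it is not deferred forever, so for some $i \geq j$ we have $P \backslash \varphi \not\Rightarrow \varphi \in {\cal A}^{\prec'}_i$, as required. I expect the only genuinely delicate point to be the minimality argument of the third paragraph, which converts Theorem~\ref{th2}'s ``some subset'' into the exact antecedent $P \backslash \{\varphi\}$; the rest is bookkeeping with the inference rules and the monotonicity of the argument sets. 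A boundary case worth a remark is $P = \{\varphi\}$ (a self-contradictory single premiss), where $P \backslash \{\varphi\} = \emptyset$ and the needed argument $\emptyset \Rightarrow \neg\varphi$ arises from the axiom rule (Rule~\ref{rule1}), since $\neg\varphi$ is then a tautology.
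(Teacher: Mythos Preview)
Your proof is correct and follows essentially the same strategy as the paper: obtain a contradictory pair of supporting arguments whose antecedents are contained in $P$, invoke minimality of $P$ to force their union to equal $P$, then apply Rule~\ref{rule3} and fairness. The only difference is cosmetic: the paper takes an arbitrary $\psi$ with $P \vdash \psi$ and $P \vdash \neg\psi$ and applies Theorem~\ref{th2} twice, whereas you choose $\psi = \varphi$ specifically, which lets you use the initial argument $\{\varphi\} \Rightarrow \varphi$ directly and apply Theorem~\ref{th2} only once---a mild streamlining of the same idea.
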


Given a set of arguments, there exists a set of the 
premisses that can still be believed.
Such a set contains the premisses that do not have to be withdrawn 
because of an undermining argument.
Suppose that ${\cal A}^{\prec'}_i$ is a set of arguments derived by a reasoning agent and that
$\Delta^{\prec'}_i \subseteq {\Sigma}$ is the set of the premisses
that are assumed to be true by the reasoning agent.
Then for each premiss $\varphi$ such that for some undermining argument 
$P \not\Rightarrow \varphi \in {\cal A}^{\prec'}_i$
there holds that $P \subseteq \Delta^{\prec'}_i$,
one may not believe $\varphi$.
The set of premisses that may not be believed given a set of argument ${\cal A}^{\prec'}_i$, is 
denoted by ${\it Out}^{\prec'}_i(\Delta^{\prec'}_i)$.
\begin{definition} \mbox{}
\[ {\it Out}^{\prec'}_i (S)  =  \{ \varphi \mid P \not\Rightarrow 
\varphi \in {\cal A}^{\prec'}_i, \mbox{ and } P \subseteq S \} \]
\end{definition}

The set of premisses $\Delta^{\prec'}_i$ must, of course, be equal to the
set of premisses obtained after removing all the premisses we may not believe;
i.e.\ $\Delta^{\prec'}_i = {\Sigma} \backslash {\it Out}^{\prec'}_i(\Delta^{\prec'}_i)$.
The set of premisses that satisfy this requirement is defined by the following
fixed point definition.
\begin{definition} \label{bel.prem}
Let ${\Sigma}$ be a set of premisses and 
let ${\cal A}^{\prec'}_i$ be a set of arguments.
Then the set of premisses $\Delta^{\prec'}_i$ that can be assumed to be true, is defined as:
\[ \Delta^{\prec'}_i = {\Sigma} \backslash 
{\it Out}^{\prec'}_i ( \Delta^{\prec'}_i ). \]
\end{definition}
\begin{property}
For every $i$, the set $\Delta^{\prec'}_i$ exists and is unique.
\end{property}

After determining the set of premisses that can be believed,
the set of derived propositions that can be believed can be derived from
the supporting arguments.
This set is defined as:
\begin{definition}
Let ${\cal A}^{\prec'}_i$ be a set of arguments and $\Delta^{\prec'}_i$ be the corresponding 
set of premisses that may assumed to be true. \\
The set of propositions $B^{\prec'}_i$ that can be believed ({\em the belief set}) is defined as:
\[ B^{\prec'}_i  =  \{ \psi\mid \begin{array}[t]{l}
P \Rightarrow \psi \in {\cal A}^{\prec'}_i 
\mbox{ and } 
 P \subseteq \Delta^{\prec'}_i \}. 
\end{array} \]
\end{definition}
\begin{property} \label{belief}
$ \mbox{For each } \varphi \in B^{\prec'}_i$: $\Delta^{\prec'}_i \vdash \varphi$.
\end{property}

Let ${\cal A}^{\prec'}_\infty$ be the set of all arguments that can be derived.
\begin{definition} 
$\displaystyle {\cal A}^{\prec'}_\infty = \bigcup_{i \geq 0} {\cal A}^{\prec'}_i $
\end{definition}
The corresponding set of premisses that can be believed and 
the belief set,
will be denoted by respectively $\Delta^{\prec'}_\infty$ and by $B^{\prec'}_\infty$.
\begin{property} \label{consis}
$\Delta^{\prec'}_\infty$ is maximal consistent.
\end{property}
\begin{property} \label{Binf} \mbox{}
\[ B^{\prec'}_\infty = {\it Th}(\Delta^{\prec'}_\infty) \]
where $ {\it Th}(S) = \{ \varphi \mid S \vdash\varphi \} $
\end{property}

The following theorem implies that the characterization of the theorems of the logic,
given in the previous section,
is equivalent to the intersection of the belief sets that can be derived.
\begin{theorem}
Let $\langle \Sigma,\prec \rangle$ 
be a reliability theory.

Then there holds:
\[ {\cal R} = \{ \Delta^{\prec'}_\infty \mid \mbox{ for some linear extension $\prec'$ of $\prec$,
$\Delta^{\prec'}_\infty$ can be derived} \}. \]
\end{theorem}

\begin{corollary} \mbox{}
\[ {\it Th}( \langle \Sigma, \prec \rangle ) = \bigcap \{ B^{\prec'}_\infty \mid \mbox{ for some linear extension $\prec'$ of $\prec$} \}. \]
\end{corollary}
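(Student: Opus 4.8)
The plan is to obtain the identity by chaining together three facts already available: the defining equation of ${\it Th}(\langle \Sigma,\prec \rangle)$, the set equality for ${\cal R}$ established in the preceding Theorem, and Property~\ref{Binf}. No fresh combinatorial work is needed; the whole argument amounts to reindexing a single intersection and substituting inside it.

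First I would recall the definition
\[ {\it Th}(\langle \Sigma,\prec \rangle) = \bigcap_{D \in {\cal R}} {\it Th}(D), \]
and then invoke the preceding Theorem, which gives the set equality
\[ {\cal R} = \{ \Delta^{\prec'}_\infty \mid \prec' \mbox{ is a linear extension of } \prec \}. \]
Because this is a genuine equality between the two collections of premiss-sets, the sets over which $D$ ranges in the intersection are \emph{precisely} the sets $\Delta^{\prec'}_\infty$ obtained by letting $\prec'$ range over all linear extensions of $\prec$. An intersection depends only on the underlying collection of sets being intersected, not on the index used to enumerate it (and not on repetitions, should distinct extensions $\prec'$ yield the same $\Delta^{\prec'}_\infty$), so I may rewrite
\[ \bigcap_{D \in {\cal R}} {\it Th}(D) = \bigcap \{ {\it Th}(\Delta^{\prec'}_\infty) \mid \prec' \mbox{ is a linear extension of } \prec \}. \]
Next I would apply Property~\ref{Binf}, namely ${\it Th}(\Delta^{\prec'}_\infty) = B^{\prec'}_\infty$, term by term inside the intersection, obtaining
\[ \bigcap \{ {\it Th}(\Delta^{\prec'}_\infty) \mid \prec' \} = \bigcap \{ B^{\prec'}_\infty \mid \prec' \mbox{ is a linear extension of } \prec \}, \]
which is exactly the right-hand side of the claimed corollary. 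Concatenating the three displayed equalities yields the result.

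The only point requiring any care — and the closest thing to an obstacle — is the reindexing step: one must confirm that the preceding Theorem truly identifies the two collections as the same set of premiss-sets, so that the intersections literally coincide, rather than merely setting up a looser correspondence between them. This is immediate, since the Theorem is phrased as an equality of sets: every member of ${\cal R}$ equals some $\Delta^{\prec'}_\infty$ and, conversely, every $\Delta^{\prec'}_\infty$ lies in ${\cal R}$. Hence the passage from $\bigcap_{D \in {\cal R}}$ to $\bigcap$ over linear extensions is unambiguous, and the substitution via Property~\ref{Binf} completes the proof.
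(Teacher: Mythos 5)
Your proof is correct and matches the paper's intended derivation exactly: the paper states this corollary without a separate proof precisely because it follows, as you argue, by combining the definition of ${\it Th}(\langle \Sigma, \prec \rangle)$ as $\bigcap_{D \in {\cal R}} {\it Th}(D)$, the preceding theorem's set equality for ${\cal R}$, and Property~\ref{Binf} identifying ${\it Th}(\Delta^{\prec'}_\infty)$ with $B^{\prec'}_\infty$. Your explicit care about the reindexing of the intersection is a sound (if routine) addition rather than a deviation.
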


\section{Determination of the belief set} \label{algor}
In this section I will describe the algorithms that determine
the set of premisses that can be believed and the belief set, 
given a set of undermining arguments.
The first algorithm determines the set $\Delta^{\prec'}_i$ given the arguments ${\cal A}^{\prec'}_i$.
To understand how the algorithm works, recall that the consequent of an undermining argument
is less reliable than the premisses in the antecedent.
Therefore, if  the consequent of an undermining argument $P \not\Rightarrow \varphi$
is the most reliable premiss that can be remove, because we still belief the 
premisses in the antecedent $P$, removing $\varphi$ will never have to be undone.
After having removed $\varphi$ we can turn to the next most reliable consequent 
of an undermining argument.

The time complexity of the algorithm below depends on the {\bf for} 
and the {\bf repeat} loop.
The former loop can be executed in ${\cal O}(n)$ steps where $n$ in the number of 
undermining arguments.
The latter loop can be executed in ${\cal O}(m)$ steps where $m$ in the number of 
premisses in $\Sigma$.
Therefore, the whole algorithm can be executed in ${\cal O}(n \cdot m)$ steps.
\begin{tabbing}
mm\=mm\=mm\=mm\= \kill
{\bf begin} \\
\> $\Delta^{\prec'}_i$ := $\Sigma$; \\
\> {\bf repeat} \\
\> \> $\varphi \in {\it max}(\Sigma)$; \\
\> \> $\Sigma := \Sigma \backslash \varphi$; \\
\> \> {\bf for} each $P \not\Rightarrow \varphi \in {\cal A}^{\prec'}_i$ {\bf do} \\
\> \> \> {\bf if } $P \subseteq \Delta^{\prec'}_i$ \\
\> \> \> {\bf then} $\Delta^{\prec'}_i := \Delta^{\prec'}_i \backslash \varphi$; \\
\> {\bf until} $\Sigma = \emptyset$; \\
\> {\bf return} $\Delta^{\prec'}_i$; \\
{\bf end}.
\end{tabbing}
Using the supporting arguments, the belief set $B^{\prec'}_i$ can be determined in a straightforward way.
Clearly, $B^{\prec'}_i$ can be determined in ${\cal O}(n)$ steps where $n$ is the number of 
supporting arguments.
\begin{tabbing}
mm\=mm\=mm\= \kill
{\bf begin} \\
\> $B^{\prec'}_i = \emptyset$; \\
\> {\bf repeat} \\
\> \> $P \Rightarrow \varphi \in {\cal A}^{\prec'}_i$; \\
\> \> ${\cal A}^{\prec'}_i := {\cal A}^{\prec'}_i \backslash \{P \Rightarrow \varphi \}$; \\
\> \> {\bf if } $P \subseteq \Delta^{\prec'}_i$ \\
\> \> {\bf then} $B^{\prec'}_i := B^{\prec'}_i \cup \{ \varphi \}$; \\
\> {\bf until} ${\cal A}^{\prec'}_i = \emptyset$; \\
\> {\bf return} $B^{\prec'}_i$; \\
{\bf end}.
\end{tabbing}

\section{The semantics for the logic} \label{semantics}
The semantics of the logic is based on the ideas of 
Y.~Shoham \cite{Sho-86,Sho-87}.
In \cite{Sho-86,Sho-87} Shoham argues that the difference between 
monotonic logic and non-monotonic logic is a difference in the definition 
of the entailment relation.
In a monotonic logic a proposition is entailed by the premisses if it is true in 
every model for the premisses.
In a non-monotonic logic, however, a proposition is entailed by the premisses
if it is preferentially entailed by a set of premisses; i.e.\
if it is true in every preferred model for the premisses.
These preferred models are determined by defining an acyclic 
partial preference order on the models.

The semantics for the logic differs slightly from Shoham's approach.
Since the set of premisses may be inconsistent, the set of models for 
these premisses can be empty.
Therefore, instead of defining a preference relation on the models of 
the premisses,
a partial preference relation on the set of semantic interpretations for the language is 
defined.
Given such a preference relation on the interpretations, the models for 
a reliability theory are the most preferred semantic interpretations.
The preference relation used here is based on the following ideas.
\begin{itemize}
\item
The premisses are assumptions about the world we are reasoning about.
\item
We are more willing to give up believing a premiss with a low reliability than 
a premiss with a high reliability.
\end{itemize}
Therefore, an interpretation satisfying more premisses with a 
higher reliability $(\prec)$ than some other interpretation, 
is preferred $(\sqsubset)$ to this interpretation. 
\begin{example}
Let $\cal M$ and $\cal N$ be two interpretations.
Furthermore , let $\cal M$ satisfy $\alpha$ and $\beta$, and let 
$\cal N$ satisfy $\beta$ and $\gamma$.
Finally let $\alpha$ be more reliable than $\gamma$, $\gamma \prec \alpha$.
Clearly, we cannot compare $\cal M$ and $\cal N$ using the premiss $\beta$.
$\cal M$ and $\cal N$ can, however, be compared using the premisses $\alpha$ and $\gamma$.
Since $\alpha$ is more reliable than $\gamma$, since $\cal N$ does not 
satisfy $\alpha$ and since $\cal M$ does not satisfy $\gamma$,
$\cal M$ must be preferred to $\cal N$,
\end{example}

\begin{definition} \label{inter}
An interpretation $\cal M$ is a set containing the atomic propositions that 
are true in this interpretation.
\end{definition}
\begin{definition} \label{prem}
Let $\cal M$ be a semantic interpretation and let ${\Sigma}$
be a set of premisses. \\
Then the premisses ${\it Prem}({\cal M}) \subseteq {\Sigma}$
that are satisfied by $\cal M$, are defined as:
\[ {\it Prem}({\cal M}) = \{ \varphi \mid \varphi \in {\Sigma} 
\mbox{ and } {\cal M} \models \varphi \} \]
\end{definition}
\begin{definition} \label{prefM}
Let $\langle {\Sigma},\prec \rangle$ be a reliability theory.
Furthermore, let $\sqsubset$ be a preference relation on the interpretations. \\
For every interpretation $\cal M, N$ there holds:
\begin{list}{}{}
\item
$ {\cal M} \sqsubset {\cal N}$ if and only if ${\it Prem}({\cal M}) \not= {\it Prem}({\cal 
N})$ and for every \\
$\varphi \in ( {\it Prem}({\cal M}) \backslash {\it Prem}({\cal N}) )$, there is a 
$\psi \in ( {\it Prem}({\cal N}) \backslash {\it Prem}({\cal M}) )$ such that: 
\[ \varphi \prec \psi. \] 
\end{list}
\end{definition}
The preference relation on the interpretations has the following property:
\begin{property} \label{prop:irref-trans}
	Let $\langle {\Sigma},\prec \rangle$ be a reliability theory and let $\sqsubset$ be the preference relation over interpretations defined the reliability theory.
	
	$\sqsubset$ is irreflexive and transitive.
\end{property}

Given the preference relation on the interpretations, the set of models
for the premisses can be defined.
\begin{definition}
Let $\langle {\Sigma},\prec \rangle$ be a reliability theory and 
let ${\it Mod}_\sqsubset (\langle {\Sigma},\prec \rangle)$ denote the models 
for the reliability theory. 
\begin{list}{}{} \item
$ {\cal M} \in {\it Mod}_\sqsubset (\langle {\Sigma},\prec \rangle) $  if and only if 
there exists no interpretation $ {\cal N}$ such that: \[ {\cal M} \sqsubset {\cal N}. \]
\end{list}
\end{definition}
Now the following important theorem, guaranteeing the soundness
and the completeness of the logic, holds:
\begin{theorem} \label{s.c}
Let $\langle {\Sigma},\prec \rangle$ be a reliability theory.
Furthermore, let ${\cal R}$ be the corresponding set of 
all most reliable consistent sets of premisses.
Then:
\[ {\it Mod}_\sqsubset (\langle {\Sigma},\prec \rangle) = \bigcup_{\Delta^{\prec'}_\infty \in {\cal R}} 
{\it Mod}( \Delta^{\prec'}_\infty) \]
where ${\it Mod}(S)$ denotes the set of classical models for a set of propositions $S$. 
\end{theorem}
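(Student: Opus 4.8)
The plan is to prove the asserted identity by two inclusions, after recording three observations that let me pass freely between an interpretation $\mathcal M$ and the premiss set ${\it Prem}(\mathcal M)$ it satisfies. First, $\mathcal M \sqsubset \mathcal N$ holds whenever ${\it Prem}(\mathcal M)\subsetneq{\it Prem}(\mathcal N)$, the universally quantified clause of Definition \ref{prefM} being vacuous in that case; consequently any $\sqsubset$-maximal interpretation $\mathcal M$ must satisfy a $\subseteq$-maximal consistent subset of $\Sigma$, for otherwise ${\it Prem}(\mathcal M)$ could be enlarged by a premiss while staying consistent, and a model of that enlargement would dominate $\mathcal M$. Second, and dually, if $D$ is a maximal consistent subset of $\Sigma$ then every classical model $\mathcal M$ of $D$ has ${\it Prem}(\mathcal M)=D$ exactly, since satisfying a further premiss would contradict maximality. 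Third, by Property \ref{consis} together with the theorem identifying $\mathcal R$ with the derivable sets $\Delta^{\prec'}_\infty$, every $D\in\mathcal R$ is maximal consistent and is produced by the greedy construction of Definition \ref{proof-th} along some linear extension $\prec'$ of $\prec$.

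For the inclusion $\bigcup_{D\in\mathcal R}{\it Mod}(D)\subseteq{\it Mod}_\sqsubset(\langle\Sigma,\prec\rangle)$, I fix $D\in\mathcal R$ and a model $\mathcal M\in{\it Mod}(D)$, so ${\it Prem}(\mathcal M)=D$ by the second observation, and suppose for contradiction that $\mathcal M\sqsubset\mathcal N$ for some $\mathcal N$; write $P={\it Prem}(\mathcal N)$. Let $\prec'$ be a linear extension whose greedy run yields $D$, and let $\chi$ be the $\prec'$-most reliable element of the symmetric difference $D\triangle P$. If $\chi\in D\setminus P$, then $\chi\in{\it Prem}(\mathcal M)\setminus{\it Prem}(\mathcal N)$, so Definition \ref{prefM} supplies $\psi\in P\setminus D$ with $\chi\prec\psi$, whence $\chi\prec'\psi$; but then $\psi\in D\triangle P$ is more reliable than $\chi$, contradicting the choice of $\chi$. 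Hence $\chi\in P\setminus D$. Now every premiss strictly more reliable than $\chi$ lies outside $D\triangle P$, so the set accumulated by the greedy run just before $\chi$ is processed, which equals $D$ intersected with the more reliable premisses, is contained in $D\cap P\subseteq P$; together with $\chi\in P$ and the consistency of $P$ this makes that accumulated set plus $\chi$ consistent, so the construction would have added $\chi$ to $D$, contradicting $\chi\notin D$. Thus no dominating $\mathcal N$ exists and $\mathcal M\in{\it Mod}_\sqsubset(\langle\Sigma,\prec\rangle)$.

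For the reverse inclusion I take $\mathcal M\in{\it Mod}_\sqsubset(\langle\Sigma,\prec\rangle)$ and set $D={\it Prem}(\mathcal M)$; by the first observation $D$ is maximal consistent and $\mathcal M\in{\it Mod}(D)$, so it remains to show $D\in\mathcal R$, that is, that some linear extension of $\prec$ greedily reproduces $D$. Concretely it suffices to order $\Sigma$ consistently with $\prec$ so that for every discarded premiss $\bar d\in\Sigma\setminus D$ some subset $F\subseteq D$ with $F\cup\{\bar d\}$ unsatisfiable is scheduled entirely before $\bar d$, forcing $\bar d$ to be blocked, while no element of $D$ is ever blocked (which is automatic, since the accumulated set stays inside the consistent $D$). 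I expect \emph{this} step to be the main obstacle: a blanket ``pull $D$ to the front'' extension need not exist, because $\prec$ can force two discarded premisses above two retained ones in a crossed pattern, so the required extension must be argued into existence rather than written down. The route I would take is the contrapositive: assuming no admissible extension reproduces $D$, I would isolate a stage at which every $\prec$-maximal not-yet-placed premiss lies outside $D$ yet is consistent with the committed premisses of $D$, and then build from $\mathcal M$ an interpretation $\mathcal N$ satisfying those committed premisses together with enough of the offending maximal premisses to cover, via $\prec$, every element of $D$ that $\mathcal N$ fails to satisfy; such an $\mathcal N$ would witness $\mathcal M\sqsubset\mathcal N$ by Definition \ref{prefM}, contradicting maximality of $\mathcal M$. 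The delicate point, and the crux of the whole argument, is selecting that collection of premisses so that it is simultaneously satisfiable and covers all the relevant elements of $D$; securing this feasibility is exactly where the $\subseteq$-maximality of $D$ and the transitivity of $\prec$ have to be combined.
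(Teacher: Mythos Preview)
Your first inclusion (that every classical model of some $D\in\mathcal R$ is $\sqsubset$-maximal) is correct and is essentially the paper's argument, recast in terms of the greedy construction of Definition~\ref{proof-th} rather than via the undermining arguments $P\not\Rightarrow\psi$. The two proofs match line for line: your step ``$\chi\in P\setminus D$, hence the accumulated set together with $\chi$ is contained in $P$ and the greedy run would have accepted $\chi$'' is exactly the paper's step ``$\psi\notin\Delta^{\prec'}_\infty$, hence some $P\not\Rightarrow\psi$ with $P\subseteq\Delta^{\prec'}_\infty$, hence some $\mu\in P\setminus{\it Prem}(\mathcal N)$ with $\psi\prec'\mu$,'' just read through Theorem~\ref{th4} instead of Definition~\ref{proof-th}.

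The reverse inclusion, however, is not proved. You correctly isolate the task---produce a linear extension $\prec'$ of $\prec$ whose greedy run returns $D={\it Prem}(\mathcal M)$---and you correctly observe that simply pulling $D$ to the front need not respect $\prec$. But your contrapositive plan halts exactly where the content lies: from a failed attempt you would need a single interpretation $\mathcal N$ whose satisfied premisses are simultaneously consistent \emph{and} $\prec$-dominate every element of $D\setminus{\it Prem}(\mathcal N)$, and you offer no mechanism for securing both at once; you yourself flag this as ``the crux of the whole argument.'' The paper does not argue by contrapositive; it \emph{constructs} $\prec'$ directly. Starting from $\prec^*_0=\prec$, it processes the elements of $D$ from $\prec$-most to $\prec$-least reliable; when $\varphi\in D$ is processed, for every minimal inconsistent $P\subseteq\Sigma$ with $\varphi\in P$ and every $\eta\in P\setminus\{\varphi\}$ not already $\prec^*$-above $\varphi$, it adds the pair $\eta\prec^*\varphi$ and takes the transitive closure. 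A short case split (whether $\eta\in D$, and if so whether it was processed earlier) shows no cycle is introduced; maximal consistency of $D$ enters only in the case $\eta\notin D$, where it guarantees a minimal inconsistent set with $\eta$ least, so pushing $\eta$ down is compatible. Any linear completion of the resulting $\prec^*$ then serves as the required $\prec'$. This explicit construction is the idea missing from your proposal.
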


\section{Some properties of the logic}
In this section I will discuss some properties of the logic.
Firstly, I will relate the logic to the general framework for 
non-monotonic logics described by S.~Kraus, D.~Lehmann and 
M.~Magidor \cite{Kra-90}.
Secondly, I will compare the behaviour of the logic when new information is added
with G\"ardenfors's theory for belief revision \cite{Gar-88}.

\subsection{Preferential models and cumulative logics}
In \cite{Kra-90} Kraus et al. describe a general framework for the study of non-monotonic 
logics.
They distinguish five general logical systems and show how each of them
can be characterized by the properties of the consequence relation.
Furthermore, for each consequence relation a different class of models is defined.
The consequence relations and the classes of models are related to each other by
representation theorems.

The consequence relation relevant for the logic discussed here is the preferential 
consequence relation of system {\bf P}.
I will show that the preference relation on the semantic interpretations, described 
in the previous section, corresponds to a preferential model described by 
Kraus et al.
\begin{lemma} \label{lem}
Let $\langle {\Sigma},\prec \rangle$ be a reliability theory.
Furthermore, let $\widehat{\alpha} = \{ {\cal M} \mid {\cal M} \models \alpha \} $,
let $\Sigma' = \Sigma \cup \{ \alpha \}$ and 
let $\prec' \; = ( \prec \ \cap \; ( \Sigma \backslash \alpha \times 
\Sigma \backslash \alpha ) )  \cup \{ \left< \varphi, \alpha \right> \mid
\varphi \in \Sigma \backslash \alpha \}$.

Then $ {\cal M} \in {\it Mod}_{\sqsubset'} (\langle {\Sigma'},\prec' \rangle) $
if and only if
$ {\cal M} \in \widehat{\alpha} $
and for no ${\cal N} \in \widehat{\alpha}$: \[ {\cal M \sqsubset N}. \]
\end{lemma}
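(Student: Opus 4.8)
The plan is to reduce the claim to three comparison facts that relate the preference relation $\sqsubset'$ induced by $\langle \Sigma',\prec' \rangle$ to the original preference relation $\sqsubset$ induced by $\langle \Sigma,\prec \rangle$, exploiting that $\prec'$ makes $\alpha$ the unique most reliable premiss of $\Sigma'$. Throughout I would write ${\it Prem}'({\cal M})$ for the premisses of $\Sigma'$ satisfied by ${\cal M}$, recording first the bookkeeping that ${\it Prem}'({\cal M}) = {\it Prem}({\cal M}) \cup \{\alpha\}$ when ${\cal M} \models \alpha$ and ${\it Prem}'({\cal M}) = {\it Prem}({\cal M})$ otherwise, where ${\it Prem}$ is taken over $\Sigma \setminus \alpha$. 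This is where the use of $\Sigma \setminus \alpha$ throughout the definition of $\prec'$ earns its keep: it guarantees the bookkeeping is clean whether or not $\alpha$ already occurred in $\Sigma$, since in either case $\prec'$ wipes any reliability relations involving $\alpha$ and reinstalls $\alpha$ at the top.

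The three facts I would establish are the following. First, if ${\cal M} \models \alpha$ and ${\cal N} \models \alpha$, then ${\cal M} \sqsubset' {\cal N}$ if and only if ${\cal M} \sqsubset {\cal N}$: here $\alpha$ lies in both ${\it Prem}'$ sets, so it cancels out of the symmetric differences, leaving a comparison over $\Sigma \setminus \alpha$ on which $\prec'$ coincides with $\prec$. Second, if ${\cal M} \not\models \alpha$ but ${\cal N} \models \alpha$, then ${\cal M} \sqsubset' {\cal N}$: the premiss $\alpha$ witnesses ${\it Prem}'({\cal M}) \neq {\it Prem}'({\cal N})$, and every $\varphi \in {\it Prem}'({\cal M}) \setminus {\it Prem}'({\cal N})$ is necessarily in $\Sigma \setminus \alpha$, whence $\varphi \prec' \alpha$ by construction, so $\psi = \alpha$ always satisfies the defining condition of $\sqsubset'$. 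Third, if ${\cal M} \models \alpha$ but ${\cal N} \not\models \alpha$, then ${\cal M} \not\sqsubset' {\cal N}$: now $\alpha \in {\it Prem}'({\cal M}) \setminus {\it Prem}'({\cal N})$ would require some $\psi$ with $\alpha \prec' \psi$, which is impossible because $\prec'$ adds no pair with $\alpha$ on the left, so the condition fails for $\varphi = \alpha$.

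With these in hand I would prove both directions through maximality. For the forward direction, assume ${\cal M} \in {\it Mod}_{\sqsubset'}(\langle \Sigma',\prec' \rangle)$, i.e.\ ${\cal M}$ is $\sqsubset'$-maximal. If ${\cal M} \not\models \alpha$, then picking any ${\cal N} \in \widehat{\alpha}$ and applying the second fact gives ${\cal M} \sqsubset' {\cal N}$, contradicting maximality; hence ${\cal M} \in \widehat{\alpha}$. If some ${\cal N} \in \widehat{\alpha}$ satisfied ${\cal M} \sqsubset {\cal N}$, the first fact would yield ${\cal M} \sqsubset' {\cal N}$, again contradicting maximality; hence no such ${\cal N}$ exists. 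For the converse, assume ${\cal M} \in \widehat{\alpha}$ with no ${\cal N} \in \widehat{\alpha}$ such that ${\cal M} \sqsubset {\cal N}$, and suppose toward a contradiction that ${\cal M} \sqsubset' {\cal N}$ for some ${\cal N}$. The third fact rules out ${\cal N} \not\models \alpha$, while the first fact turns the remaining case ${\cal N} \models \alpha$ into ${\cal M} \sqsubset {\cal N}$ with ${\cal N} \in \widehat{\alpha}$, contradicting the hypothesis; hence ${\cal M}$ is $\sqsubset'$-maximal.

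The load-bearing observation is that $\alpha$ is the top of $\prec'$, i.e.\ $\alpha \prec' \psi$ holds for no $\psi$, which is exactly what forces every model of $\langle \Sigma',\prec' \rangle$ into $\widehat{\alpha}$; the symmetric-difference cancellation in the first fact is the only point needing genuine care. I would also flag the degenerate case: the argument that a model must satisfy $\alpha$ relies on $\widehat{\alpha} \neq \emptyset$, so the intended reading is that $\alpha$ is satisfiable. When $\alpha$ is unsatisfiable the right-hand condition ${\cal M} \in \widehat{\alpha}$ is never met, yet $\sqsubset'$-maximal interpretations still exist, so the hypothesis $\widehat{\alpha} \neq \emptyset$ is needed for the equivalence to hold and should be stated.
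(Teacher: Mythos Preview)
Your proposal is correct and follows essentially the same route as the paper's proof: both establish (i) that $\sqsubset'$ and $\sqsubset$ agree on pairs of interpretations in $\widehat{\alpha}$, and (ii) that any interpretation outside $\widehat{\alpha}$ is $\sqsubset'$-dominated by any interpretation inside, and then read off both directions of the equivalence from these facts. Your third fact (${\cal M}\models\alpha$, ${\cal N}\not\models\alpha$ implies ${\cal M}\not\sqsubset'{\cal N}$) is handled by the paper only implicitly, via the asymmetry of $\sqsubset'$ obtained from its irreflexivity and transitivity; your direct argument through the maximality of $\alpha$ in $\prec'$ is cleaner and self-contained. Your observation about the degenerate case $\widehat{\alpha}=\emptyset$ is also a genuine refinement that the paper does not address.
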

%
%

\begin{theorem} \label{modP}
	Let $\langle {\Sigma},\prec \rangle$ be a reliability theory. Moreover, let $\langle S,l,< \rangle$ be a triple where the set of states $S$ is the set of all possible interpretations for the language $L$, where $l : S \to S$ is the identity function, and where for each
	${\cal M,N} \in S$: 
	\begin{list}{}{}
		\item
		$\cal M < N$ if and only if $\cal N \sqsubset M$.
	\end{list}
	
	Then $\langle S,l,< \rangle$ is a \textit{preferential model} \cite{Kra-90}.
\end{theorem}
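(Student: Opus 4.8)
The plan is to verify the two defining conditions of a \emph{preferential model} in the sense of Kraus et al.~\cite{Kra-90}: that $<$ is a strict partial order on $S$ (irreflexive and transitive), and that $<$ satisfies the smoothness (stoppering) condition, i.e.\ that for every formula $\alpha$ and every state $\mathcal{M} \in \widehat{\alpha} = \{ \mathcal{K} \in S \mid \mathcal{K} \models \alpha \}$, either $\mathcal{M}$ is $<$-minimal in $\widehat{\alpha}$ or there is a $<$-minimal $\mathcal{N} \in \widehat{\alpha}$ with $\mathcal{N} < \mathcal{M}$. Since $l$ is the identity and the states are exactly the interpretations, $\widehat{\alpha}$ coincides with the set of interpretations satisfying $\alpha$, so both conditions can be phrased entirely in terms of $\sqsubset$.

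The first condition is immediate. Because $<$ is defined as the converse of $\sqsubset$ (that is, $\mathcal{M} < \mathcal{N}$ iff $\mathcal{N} \sqsubset \mathcal{M}$), and Property~\ref{prop:irref-trans} already establishes that $\sqsubset$ is irreflexive and transitive, the converse relation $<$ inherits both: irreflexivity transfers trivially, and transitivity of $<$ follows from transitivity of $\sqsubset$ by reversing the two comparisons. Hence $<$ is a strict partial order.

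The substance of the proof is the smoothness condition, which I would restate dually. An interpretation is $<$-minimal in $\widehat{\alpha}$ exactly when it is $\sqsubset$-maximal (most preferred) in $\widehat{\alpha}$, and $\mathcal{N} < \mathcal{M}$ means $\mathcal{M} \sqsubset \mathcal{N}$; so it suffices to show that every $\mathcal{M} \in \widehat{\alpha}$ is either $\sqsubset$-maximal in $\widehat{\alpha}$ or lies strictly $\sqsubset$-below some $\sqsubset$-maximal element of $\widehat{\alpha}$. The crucial observation is that, by Definition~\ref{prefM}, $\mathcal{M} \sqsubset \mathcal{N}$ forces ${\it Prem}(\mathcal{M}) \neq {\it Prem}(\mathcal{N})$, so comparability under $\sqsubset$ requires distinct premiss-profiles; and since $\Sigma$ is finite, only finitely many such profiles exist.

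From this I would derive that $\sqsubset$ admits no infinite ascending chain: along any chain $\mathcal{M}_0 \sqsubset \mathcal{M}_1 \sqsubset \cdots$, transitivity gives $\mathcal{M}_i \sqsubset \mathcal{M}_j$ for all $i < j$, and each such comparison forces ${\it Prem}(\mathcal{M}_i) \neq {\it Prem}(\mathcal{M}_j)$, so the profiles along the chain are pairwise distinct, which is impossible beyond finitely many steps. Consequently, for a fixed $\mathcal{M} \in \widehat{\alpha}$ the set $U = \{ \mathcal{K} \in \widehat{\alpha} \mid \mathcal{K} = \mathcal{M} \text{ or } \mathcal{M} \sqsubset \mathcal{K} \}$ has a $\sqsubset$-maximal element $\mathcal{N}$; a short transitivity argument shows $\mathcal{N}$ is in fact $\sqsubset$-maximal in all of $\widehat{\alpha}$ (any $\mathcal{P} \in \widehat{\alpha}$ with $\mathcal{N} \sqsubset \mathcal{P}$ would satisfy $\mathcal{M} \sqsubset \mathcal{P}$, hence lie in $U$), and $\mathcal{N} \in U$ yields either $\mathcal{N} = \mathcal{M}$ (so $\mathcal{M}$ is itself maximal) or $\mathcal{M} \sqsubset \mathcal{N}$. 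Reversing back to $<$ gives precisely the smoothness condition. I expect the only genuine obstacle to be this finiteness-to-well-foundedness step together with the careful dualization between $<$ and $\sqsubset$ and between minimal and maximal; the remainder is bookkeeping.
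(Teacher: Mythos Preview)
Your proposal is correct and follows essentially the same approach as the paper: invoke Property~\ref{prop:irref-trans} for the strict-partial-order condition, and establish smoothness by observing that any strictly $\sqsubset$-ascending chain must have pairwise distinct premiss profiles ${\it Prem}(\cdot)$, which is impossible since $\Sigma$ is finite. The only cosmetic difference is that the paper routes the smoothness condition through Lemma~\ref{lem} before running the chain argument, whereas you dualize $<$ to $\sqsubset$ directly; the core finiteness argument is identical.
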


Now I will relate the consequence relation of system {\bf P} to the logic.
To motivate the relation I will describe below, recall that $\alpha \nm \beta$
should be interpreted as: `if $\alpha$, normally $\beta$'.
Hence, if we assume $\alpha$, we must assume that $\alpha$ is true beyond any doubt.
To realize this, we must add $\alpha$ as a premiss. 
Furthermore, $\alpha$ must be more reliable than any other premiss, 
otherwise we cannot guarantee that $\alpha$ is an element of the set 
of theorems ${\it Th}(\langle \Sigma, \prec \rangle)$.
It is possible that $\alpha$ is an element of the original  set of premisses.
In that case we should use the most reliable knowledge source for
a premiss; i.e.\ the assumption that $\alpha$ is true beyond any doubt.
If $\alpha$ is indeed an element of $B_\infty$, we must prove that $\beta$ 
will also be an element of ${\it Th}(\langle \Sigma, \prec \rangle)$.
\begin{theorem}
Let $W= \left< S,l,< \right>$ be a preferential model for $\langle {\Sigma},\prec \rangle$.
Then the following equivalence holds:
\begin{list}{}{} \item
$\alpha \nm_W \beta$ if and only if 
\[ \Sigma' = \Sigma \cup \{ \alpha \}, \] 
\[ \prec' \; = ( \prec \ \cap \; ( \Sigma \backslash \alpha \times 
\Sigma \backslash \alpha ) ) \cup \{ \left< \varphi, \alpha \right> \mid
\varphi \in \Sigma \backslash \alpha \} \] 
and $\beta \in {\it Th}(\langle \Sigma', \prec' \rangle)$.
\end{list}
\end{theorem}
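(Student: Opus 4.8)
The plan is to chain together the Kraus--Lehmann--Magidor characterization of the induced consequence relation $\nm_W$ with the earlier Lemma~\ref{lem} and the soundness/completeness Theorem~\ref{s.c}, so that the whole biconditional reduces to a sequence of equivalences. First I would recall the definition of the consequence relation induced by the fixed preferential model $W = \langle S, l, < \rangle$ of Theorem~\ref{modP}: $\alpha \nm_W \beta$ holds if and only if every state that is minimal with respect to $<$ among the states satisfying $\alpha$ also satisfies $\beta$. Because Theorem~\ref{modP} already establishes that $W$ is a genuine preferential model, the smoothness (stoppering) condition is available, so these minimal states exist whenever $\alpha$ is satisfiable and they correctly determine the consequence relation; I would not need to reprove this.

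Next I would translate ``minimal in $<$'' into the vocabulary of the reliability semantics. Since in $W$ the state set $S$ is the set of all interpretations, $l$ is the identity, and $\mathcal{M} < \mathcal{N}$ iff $\mathcal{N} \sqsubset \mathcal{M}$, a state $\mathcal{M}$ is $<$-minimal among those satisfying $\alpha$ exactly when $\mathcal{M} \in \widehat{\alpha}$ and there is no $\mathcal{N} \in \widehat{\alpha}$ with $\mathcal{M} \sqsubset \mathcal{N}$; that is, the $<$-minimal states satisfying $\alpha$ are precisely the $\sqsubset$-maximal interpretations inside $\widehat{\alpha}$. This is exactly the right-hand side of Lemma~\ref{lem}, so by that lemma these interpretations coincide with ${\it Mod}_{\sqsubset'}(\langle \Sigma', \prec' \rangle)$ for the modified theory $\langle \Sigma', \prec' \rangle$ defined in the statement. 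Hence $\alpha \nm_W \beta$ holds if and only if $\mathcal{M} \models \beta$ for every $\mathcal{M} \in {\it Mod}_{\sqsubset'}(\langle \Sigma', \prec' \rangle)$.

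Finally I would apply Theorem~\ref{s.c} to the modified theory $\langle \Sigma', \prec' \rangle$, obtaining ${\it Mod}_{\sqsubset'}(\langle \Sigma', \prec' \rangle) = \bigcup_{\Delta \in {\cal R}'} {\it Mod}(\Delta)$, where ${\cal R}'$ is the corresponding set of most reliable consistent sets of $\langle \Sigma', \prec' \rangle$. Then $\beta$ is true in all these interpretations iff it is true in every classical model of every $\Delta \in {\cal R}'$, iff $\Delta \models \beta$ for each such $\Delta$, iff (by classical soundness and completeness of the underlying propositional calculus, i.e.\ $\Delta \models \beta$ iff $\Delta \vdash \beta$) $\beta \in {\it Th}(\Delta)$ for each $\Delta \in {\cal R}'$, iff $\beta \in \bigcap_{\Delta \in {\cal R}'} {\it Th}(\Delta) = {\it Th}(\langle \Sigma', \prec' \rangle)$. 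Concatenating the equivalences yields $\alpha \nm_W \beta$ iff $\beta \in {\it Th}(\langle \Sigma', \prec' \rangle)$, as required.

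The main obstacle I anticipate is bookkeeping about the direction of the orders rather than any deep argument: I must be careful that ``$<$-minimal'' translates to ``$\sqsubset$-maximal'' (because $<$ reverses $\sqsubset$) and that this matches the exact wording of Lemma~\ref{lem}, whose right-hand side speaks of $\sqsubset$-maximality within $\widehat{\alpha}$. A secondary point to verify is that the preferential-entailment definition I invoke genuinely quantifies over the $<$-minimal states and not over all states satisfying $\alpha$, which again relies on the smoothness guaranteed by Theorem~\ref{modP}; with that in hand, every step is a direct appeal to an already-established result.
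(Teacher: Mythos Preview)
Your proposal is correct and follows essentially the same route as the paper's own proof: the paper also chains Theorem~\ref{s.c} (to pass between ${\it Th}(\langle \Sigma',\prec'\rangle)$ and ${\it Mod}_{\sqsubset'}(\langle \Sigma',\prec'\rangle)$), Lemma~\ref{lem} (to identify these models with the $\sqsubset$-maximal interpretations in $\widehat{\alpha}$), and the definition of $\nm_W$. The only difference is cosmetic --- the paper runs the chain starting from the ${\it Th}$-side rather than from $\alpha \nm_W \beta$, and it leaves the order-reversal and smoothness remarks implicit.
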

\begin{corollary}
Let $W= \left< S,l,< \right>$ be a preferential model for $\langle {\Sigma},\prec \rangle$. \\
Then: \[ {\it Th}(\langle \Sigma, \prec \rangle) = \{ \alpha \mid \mbox{} \nm_W \alpha \} \]
\end{corollary}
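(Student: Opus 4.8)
The plan is to read the empty-antecedent assertion $\nm_W \alpha$ semantically and then invoke Theorem \ref{s.c}. Following the reading of a preferential model in \cite{Kra-90}, $\nm_W \alpha$ holds exactly when $\alpha$ is satisfied in every $<$-minimal state of $W$. First I would identify these minimal states: by the definition of $<$ in Theorem \ref{modP} we have ${\cal N} < {\cal M}$ iff ${\cal M} \sqsubset {\cal N}$, so ${\cal M}$ is $<$-minimal precisely when there is no ${\cal N}$ with ${\cal M} \sqsubset {\cal N}$, that is, precisely when ${\cal M} \in {\it Mod}_\sqsubset(\langle \Sigma, \prec \rangle)$. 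Their existence is guaranteed by Theorem \ref{modP}, which certifies that $W$ is a genuine preferential model. Hence $\nm_W \alpha$ holds iff every interpretation in ${\it Mod}_\sqsubset(\langle \Sigma, \prec \rangle)$ satisfies $\alpha$.

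Next I would rewrite this class of models using Theorem \ref{s.c}, which gives ${\it Mod}_\sqsubset(\langle \Sigma, \prec \rangle) = \bigcup_{\Delta \in {\cal R}} {\it Mod}(\Delta)$. Therefore $\nm_W \alpha$ holds iff $\alpha$ is true in every classical model of every $\Delta \in {\cal R}$, i.e.\ iff $\Delta \models \alpha$ for each $\Delta \in {\cal R}$, which is exactly $\alpha \in \bigcap_{\Delta \in {\cal R}} {\it Th}(\Delta) = {\it Th}(\langle \Sigma, \prec \rangle)$. This chain of equivalences is precisely the claimed set equality, and I would present it as the main line of argument.

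A second, more syntactic route would specialise the preceding theorem to a tautological antecedent $\tau$: reading $\nm_W \alpha$ as $\tau \nm_W \alpha$, that theorem gives $\nm_W \alpha$ iff $\alpha \in {\it Th}(\langle \Sigma \cup \{ \tau \}, \prec' \rangle)$, where $\prec'$ makes $\tau$ more reliable than every premiss of $\Sigma$. One then checks that a valid premiss changes nothing: in any enumeration consistent with $\prec'$ the premiss $\tau$ comes first and, being satisfiable together with any set, is always added by Definition \ref{proof-th}; so the sets in ${\cal R}$ for $\langle \Sigma \cup \{ \tau \}, \prec' \rangle$ are exactly those for $\langle \Sigma, \prec \rangle$ with $\tau$ adjoined, and since ${\it Th}(\Delta \cup \{ \tau \}) = {\it Th}(\Delta)$ the two theorem sets coincide.

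The main obstacle is not the algebra of either route but the correct handling of the empty antecedent: I would need to pin down that $\nm_W \alpha$ in the corollary is the $\top \nm_W \alpha$ instance and that this reading is independent of the representative tautology chosen, and—on the syntactic route—to verify rigorously that $\tau$ never participates in any derived contradiction and is consequently forced into every most reliable consistent set. The semantic route sidesteps this bookkeeping by passing directly through the minimality characterisation of Theorem \ref{modP} together with Theorem \ref{s.c}, which is why I would favour it and mention the reduction to the preceding theorem only as an alternative.
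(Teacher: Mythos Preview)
Your proposal is correct. The paper does not give an explicit proof of this corollary; it is stated immediately after Theorem~35 as an evident consequence, so the intended argument is your second, syntactic route: specialise the preceding theorem to a tautological antecedent and observe that adjoining a valid, maximally reliable premiss leaves the set ${\cal R}$, and hence ${\it Th}(\langle \Sigma,\prec\rangle)$, unchanged.

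Your preferred semantic route is a genuine alternative. It bypasses Theorem~35 entirely and goes straight through Theorem~\ref{modP} and Theorem~\ref{s.c}: the $<$-minimal states are exactly ${\it Mod}_\sqsubset(\langle \Sigma,\prec\rangle)$, and the soundness/completeness theorem then identifies truth in all of them with membership in $\bigcap_{\Delta\in{\cal R}}{\it Th}(\Delta)$. This is cleaner in that it avoids the bookkeeping you flag (verifying that the tautology never disturbs the most reliable consistent sets), at the cost of not exhibiting the corollary as a direct instance of the theorem it is meant to follow. Both arguments are short and either would be acceptable.
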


\subsection{Belief revision}
In \cite{Gar-88}, G\"ardenfors  describes three different ways in which a 
belief set can be revised, viz.\ {\it expansion, revision} and {\it contraction}.
Expansion is a simple change that follows from the addition of a new proposition.
Revision is a more complex form of adding a new proposition. Here the belief set 
must be changed in such a way that the resulting belief set is consistent.
Contraction is the change necessary to stop believing some proposition.
For each of these forms of belief revision, G\"ardenfors has formulated a set of {\it
rationality postulates}.

In this subsection I will investigate which of the postulates are satisfied by the logic.
To be able to do this, the set of theorems of a reliability theory is identified 
as a belief set as defined by G\"ardenfors.
Here expansion, revision and contraction of the belief set $K$, with respect to the proposition
$\alpha$, will be denoted by respectively: $K^+ [\alpha]$, $K^*[\alpha]$ 
and $K^-[\alpha]$.

\subsubsection*{Expansion}
To expand a belief set $K$ with respect to a proposition $\alpha$, $\alpha$ should be 
added to the set of premisses that generate the belief set.
Since the logic does not allow an inconsistent belief set, 
$\alpha$ can be added if the belief set does not already contain $\neg \alpha$.
Otherwise, the logic would start revising the belief set.
Adding $\alpha$ to the set of premisses, however, is not sufficient to 
guarantee that $\alpha$ will belong to the new belief set.
Take for example the following reliability theory.
\[ \Sigma = \{ 1: \alpha \wedge \beta, 2: \neg \alpha \wedge \beta, 
3: \alpha \wedge \neg \beta, 4: \neg \alpha \wedge \neg \beta \} \]
\[ \prec \; = \{ (3, 2), (4, 1) \} \]
Clearly, adding $\alpha$ to $\Sigma$ does not result in believing $\alpha$.
Hence, the second postulate of expansion is not satisfied.
To guarantee that $\alpha$ belongs to the new belief set, we have to prefer
$\alpha$ to any other premiss.
If, however, we prefer $\alpha$ to every other premiss in the example above,
the third postulate for expansion will not be satisfied.
Hence, expansion of a belief set is not possible in the logic.
The reason for this is that the reasons for believing a proposition in a belief set
are not taken into account by the postulates for expansion.
Because of this internal structure, revision instead of expansion takes place.

\subsubsection*{Revision}
For revision of a  belief set $K$ with respect to a proposition $\alpha$,
we have to add $\alpha$ as a premiss and prefer it to any other premiss.
With this implementation of the revision process, some of the postulates for 
revision of the belief set with respect to $\alpha$ are satisfied.
The postulates not being satisfied relate revision to expansion.
Expansion, however, is not defined for the logic.
\begin{theorem}
Let belief set $K = {\it Th}(\langle \Sigma, \prec \rangle)$ be the 
set of theorems of the reliability theory 
$\langle \Sigma, \prec \rangle$. \\
Suppose that $K^*[\alpha]$ is the belief set of the premisses 
$\Sigma \cup \{ \alpha \}$ with 
reliability relation: \[ \prec' \; = ( \prec \cap \ ( \Sigma \backslash \alpha \times 
\Sigma \backslash \alpha ) ) \cup \{ \left< \varphi, \alpha \right> \mid
\varphi \in \Sigma \backslash \alpha \}; \]
i.e.\ $K^*[\alpha] = \{ \beta \mid \alpha \nm_W \beta \}$ where $W$
is a preferential model for 
$\langle {\Sigma},\prec \rangle$. \\
Then the following postulates are satisfied.
\begin{enumerate}
\item
$K^*[\alpha]$ is a belief set.
\item
$\alpha \in K^*[\alpha]$.
\setcounter{enumi}{5}
\item
If $\vdash \alpha \leftrightarrow \beta$, then $K^*[\alpha] = K^*[\beta]$.
\end{enumerate}
\end{theorem}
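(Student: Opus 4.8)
The plan is to treat the three postulates separately, in each case unfolding $K^*[\alpha] = {\it Th}(\langle \Sigma', \prec' \rangle)$, with $\Sigma' = \Sigma \cup \{ \alpha \}$ and $\prec'$ the relation that makes $\alpha$ strictly the most reliable premiss, through the characterizations already established. I would work with two equivalent descriptions of $K^*[\alpha]$. The \emph{syntactic} one, $K^*[\alpha] = \bigcap_{D \in {\cal R}'} {\it Th}(D)$, comes directly from the definition of ${\it Th}(\langle \Sigma', \prec' \rangle)$, where ${\cal R}'$ is the set of most reliable consistent sets of the revised theory. The \emph{semantic} one comes from Theorem \ref{s.c} together with propositional soundness and completeness: since ${\it Th}(D) = \{ \varphi \mid {\it Mod}(D) \subseteq \widehat{\varphi} \}$ and $\bigcup_{D \in {\cal R}'} {\it Mod}(D) = {\it Mod}_{\sqsubset'}(\langle \Sigma', \prec' \rangle)$, we get that $\varphi \in K^*[\alpha]$ iff every ${\cal M} \in {\it Mod}_{\sqsubset'}(\langle \Sigma', \prec' \rangle)$ satisfies $\varphi$. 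Throughout I assume $\alpha$ (and $\beta$) satisfiable, which is exactly the condition under which Lemma \ref{lem} applies and under which the two descriptions agree.

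For postulate 1 the argument is short: $K^*[\alpha] = \bigcap_{D \in {\cal R}'} {\it Th}(D)$ is an intersection of deductively closed sets and is therefore deductively closed; and since each $D$ is consistent, each ${\it Th}(D)$ is consistent, so the intersection is consistent as well. Hence $K^*[\alpha]$ is a belief set. For postulate 2 I would invoke Lemma \ref{lem}, which states that the models ${\it Mod}_{\sqsubset'}(\langle \Sigma', \prec' \rangle)$ are precisely the $\sqsubset$-maximal interpretations lying inside $\widehat{\alpha}$; in particular every model of the revised theory satisfies $\alpha$, so by the semantic description $\alpha \in K^*[\alpha]$. The same conclusion also follows syntactically: because $\alpha$ is strictly most reliable under $\prec'$ it is the first element $\sigma_1$ of every enumeration consistent with $\prec'$ and, being satisfiable, is added to $D$ at the first step, so $\alpha \in D$ for every $D \in {\cal R}'$.

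Postulate 6 is where the real work lies, and I expect it to be the main obstacle. Assuming $\vdash \alpha \leftrightarrow \beta$, the two formulas have the same models, $\widehat{\alpha} = \widehat{\beta}$. By Lemma \ref{lem}, the models of the $\alpha$-revision are the $\sqsubset$-maximal interpretations in $\widehat{\alpha}$ and those of the $\beta$-revision are the $\sqsubset$-maximal interpretations in $\widehat{\beta}$, where in both cases $\sqsubset$ is the preference relation induced by the common background $\langle \Sigma, \prec \rangle$. Since $\widehat{\alpha} = \widehat{\beta}$, once I show that the restriction of $\sqsubset$ to this common set is the same in the two situations, the two maximal sets coincide, hence the revised model sets coincide, hence by the semantic description $K^*[\alpha] = K^*[\beta]$.

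The delicate point in that last step is the possibility that $\alpha$, or $\beta$, already occurs in $\Sigma$, so that formally one of them may appear in some ${\it Prem}({\cal M})$ while the other does not. The key observation to push through is that every interpretation in $\widehat{\alpha} = \widehat{\beta}$ satisfies \emph{both} $\alpha$ and $\beta$; hence for any two such interpretations ${\cal M}, {\cal N}$ neither $\alpha$ nor $\beta$ can belong to ${\it Prem}({\cal M}) \setminus {\it Prem}({\cal N})$ or to ${\it Prem}({\cal N}) \setminus {\it Prem}({\cal M})$, so these two premisses never enter the comparison defining ${\cal M} \sqsubset {\cal N}$ on the common set. Consequently the restriction of $\sqsubset$ to $\widehat{\alpha} = \widehat{\beta}$ depends only on the premisses of $\Sigma \setminus \{ \alpha, \beta \}$ and is identical in the two cases. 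This invariance is what closes the argument, and verifying it cleanly, rather than the surrounding bookkeeping, is the part I would spend the most care on.
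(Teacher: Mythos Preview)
Your argument is correct, but it does considerably more work than the paper does. The key observation you are not exploiting is that the theorem statement already hands you $K^*[\alpha] = \{\beta \mid \alpha \nm_W \beta\}$, and Theorem~\ref{modP} has established that $W$ is a preferential model in the sense of Kraus, Lehmann and Magidor. Hence $\nm_W$ automatically satisfies all the rules of system~{\bf P}. The paper's proof of postulate~2 is literally the single word ``reflexivity'' ($\alpha \nm_W \alpha$), and postulate~6 is the single rule ``left logical equivalence'' ($\models \alpha \leftrightarrow \beta$ and $\alpha \nm_W \gamma$ imply $\beta \nm_W \gamma$). Postulate~1 is referred to Property~\ref{Binf}.

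What you do instead is essentially re-derive those two KLM properties in this particular model: you go through Lemma~\ref{lem} to identify ${\it Mod}_{\sqsubset'}(\langle \Sigma',\prec'\rangle)$ with the $\sqsubset$-maxima inside $\widehat{\alpha}$, and then argue directly that this set is contained in $\widehat{\alpha}$ (reflexivity) and depends only on $\widehat{\alpha}$ (left logical equivalence). Your careful discussion of why $\alpha$ and $\beta$ never appear in the symmetric differences ${\it Prem}({\cal M})\setminus{\it Prem}({\cal N})$ when ${\cal M},{\cal N}\in\widehat{\alpha}$ is exactly the content of the second bullet in the paper's proof of Lemma~\ref{lem}, so you are in effect reproving that lemma's relevant half. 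Nothing is wrong, and your explicit handling of the satisfiability assumption and of the case $\alpha\in\Sigma$ is more scrupulous than the paper; but once Theorem~\ref{modP} is available, the intended shortcut is simply to quote the system~{\bf P} rules.
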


\subsubsection*{Contraction}
It is not possible to realise contraction of a belief set in the logic in a straight
forward way.
To be able to contract a proposition $\alpha$ from a belief set $K$, we have to 
determine the premisses on which belief in this proposition is based.
This information can be found in the applicable supporting argument that 
supports the proposition $\alpha$.
When we have determined these premisses, we have to remove some of them.
I.e. for each linear extension of the reliability relation, we must add the following 
undermining arguments to ${\cal A}^{\prec'}_\infty$
\[ \{ P\backslash\varphi \not\Rightarrow \varphi \mid P \Rightarrow \alpha \in {\cal A}^{\prec'}_\infty, 
\varphi = \min_{\prec'}(P) \}. \]
Unfortunately, this solution, which requires a modification of the logic,
can only be applied after ${\cal A}^{\prec'}_\infty$ has been determined.
Furthermore, only the most trivial postulates 1, 3, 4 and 6
will be satisfied.

\section{Extension to first order logic}
The logic described in the previous sections can be extended to a first order logic.
To realize this we have to replace the propositional language $L$ by
a first order language, which only contains the logical operators $\neg$ 
and $\to$, and the quantifier $\forall$.
Furthermore we have to replace the logical axioms for a propositional
logic by the logical axioms for a first order logic with the modus ponens 
as the only inference rule.
We can for example use the following axiom scheme, which originate from \cite{End}.
\begin{list}{}{} 
\item[{\bf Axioms}]
Let $\varphi$ be a generalization of $\psi$ if and only if for some $n \geq 0$
and variables $x_1,...,x_n$:
\[ \varphi = \forall x_1,...,\forall x_n \; \psi. \]
Since this definition includes the case $n=0$, any formula is a generalization of itself.

The logical axioms are all the generalizations of the formulas described by the following
schemata.
\begin{enumerate} 
\item Tautologies.
\item
$\forall x \varphi (x) \rightarrow \varphi (t)$
where $t$ is a term containing no variables that occur in $\varphi$.
\item
$\forall x ( \varphi  \rightarrow \psi ) \rightarrow ( \forall x 
\varphi \rightarrow \forall x \psi )$.
\item
$\varphi \rightarrow \forall x \varphi$ where $x$ does not occur 
in $\varphi$.
\end{enumerate}
\end{list}
Finally, we have to replace the definition of the semantic interpretations by 
a definition for the semantic interpretations of a first order logic.

When these modification are made we have a first order logic for reasoning 
with inconsistent knowledge.
For this first order logic all the results that can be found in the preceding section
also hold.

\section{Argumentation framework (a new section)}
Dung \cite{Dun-95} observed that
argumentation systems proposed in the literature, use the same
types of semantics and that these semantics can be studied
independent of the underlying argumentation system. He also showed
that several other forms of non-monotonic reasoning can be
reformulated as argumentation systems. 
To study the different semantics of argumentation systems independent of the underlying argumentation system, he introduces the notion of an {\em argumentation framework}.

An argumentation framework is a tuple%
\[ \langle \mathcal{A},\longrightarrow \rangle \] %
where $\mathcal{A}$ is a set of arguments and $\longrightarrow \
\subseteq \mathcal{A} \times \mathcal{A}$ is a attack relation
over the arguments. The relation $\longrightarrow$ denotes for
every $(A,B) \in \ \longrightarrow$ that the argument $B$ cannot
be valid if $A$ is valid. How the arguments
$\mathcal{A}$ and the attack relation over the arguments are
derived and what is supported by the arguments is not taken into
consideration. Note that instead of $(A,B) \in \ \longrightarrow$,
below I will use the infix notation $A \longrightarrow B$.

Section \ref{just} defined two types of arguments. Only the undermining arguments of the form $P \not\Rightarrow \varphi$ attack other arguments. Given an set of arguments ${\cal A}^{\prec'}_i$, an argumentation framework $\langle {\cal A}^{\prec'}_i,\longrightarrow^{\prec'}_i \rangle$ can be defined. 
\begin{definition}
	Let ${\cal A}$ be a set of arguments. 
	
	$\langle \mathcal{A},\longrightarrow \rangle$ is a corresponding argumentation framework where  $A \longrightarrow B$ iff 
	\begin{itemize}
		\item$\{ A,B \} \subseteq {\cal A}$, 
		\item
		$A= (P \not\Rightarrow \varphi)$, 
		\item
		$B = (Q \not\Rightarrow \psi)$ or $B = (Q \Rightarrow \psi)$, and 
		\item
		$\varphi \in Q$.
	\end{itemize}
\end{definition}
Note that the argumentation framework is an instance of {\it assumption-based argumentation} \cite{DungKT09}.

Because we are considering a linear extension $\prec'$ of $\prec$, which is a total ordering of $\Sigma$,
there is a unique stable argument extension ${\cal E}^{\prec'}_i$ for an argumentation framework $\langle {\cal A}^{\prec'}_i,\longrightarrow^{\prec'}_i \rangle$, which is also the unique grounded extension. This stable  extension ${\cal E}^{\prec'}_i$ determines the premisses $\Delta^{\prec'}_i$ that can be assumed to be true:
\[ \Delta^{\prec'}_i = \Sigma \backslash \{ \varphi \mid P \not\Rightarrow \varphi \in {\cal E}^{\prec'}_i \}\]
as well as the belief set:
\[ B^{\prec'}_i = \{ \varphi \mid P \Rightarrow \varphi \in {\cal E}^{\prec'}_i \}\]

In the original version \cite{Roo-89a,Roo-89b} of the AI journal paper \cite{Roo-92}, no linear extensions $\prec'$ of $\prec$ was considered. Instead, whenever two supporting arguments support a proposition and its negation, for every least preferred supporting premiss given the partial order $\prec$ on the premisses, an undermining argument is formulated.
\begin{ruledef} \label{rule4}
	Let $\varphi$ and $\neg \varphi$ 
	be propositions with arguments
	$P \Rightarrow \varphi$ and $Q \Rightarrow 
	\neg \varphi$.
	
	Let $\eta \in \min_\prec (P \cup Q)$ be a minimal element given the reliability relation $\prec$.
	\begin{list}{}{} \item
		Then the premiss $\eta$ gets an
		undermining argument $((P \cup Q) / \eta ) \not\Rightarrow \eta$.
	\end{list}
\end{ruledef}
So, if there is no unique least preferred premiss in $P \cup Q$ given $\prec$, multiple undermining arguments are formulated. Moreover, the stable semantics may result in multiple argument extensions. 
Some of these argument extensions may determine a set of premisses $\Delta^{\prec'}_i$, which can be assumed to be true, but give rise to the problem illustrated in Example \ref{choice-to-pref}. That is, selecting a minimal element in $\min_\prec (P \cup Q)$ introduces a preference, and all the introduced preferences combined with $\prec$ do not correspond to any linear extension of $\prec$ because the combination contains cycles. 

Instead of considering all linear extensions of $\prec$ as was described in Section 4, we can also apply Rule \ref{rule4} and determine all stable argument extensions. Some of these stable extensions may not correspond to a linear extension of $\prec$ and have to be ignored. An argument extension has to be ignored if $\prec$ together with the additional preferences introduced by the argument extension contains a cycle. Formally:
\begin{definition}
	Let $\prec$ be a partial order on the premisses $\Sigma$, and let $\cal E$ be an argument extension.
	
	The argument extension $\cal E$ must be {\em ignored} if and only if the partial order
	\[ \prec^* \; = \; \prec \cup \ \{ (\varphi,\psi) \mid P \not\Rightarrow \varphi \in {\cal A}, \psi \in P \} \]
	over $\Sigma$ contains a cycle.
\end{definition}
Note that there exists at least one linear extension of $\prec^*$ if $\prec^*$ contains no cycles. This approach is more efficient than considering all linear extensions of $\prec$, which has a worst case time complexity that is factorial in the number of premisses $\Sigma$.

\section{Related work}
In this section I will discuss some related approaches.
Firstly, the relation with of N. Rescher's work will be discussed.
Rescher's work is closely related to the logic described here.
Secondly, the relation with Poole's framework for default reasoning,
which is a special case of Rescher's work, will be discussed.
Thirdly, the difference between paraconsistent logics and the logic
described here, will be discussed.
Finally the relation with Truth Maintenance Systems, and especially 
J. de Kleer's ATMS will be  discussed.

\subsection{Hypothetical reasoning}
In his book `Hypothetical Reasoning', Rescher describes  how to reason with
an inconsistent set of premisses \cite{Res-64}.
He introduces his reasoning method, because he wants to formalize
hypothetical reasoning.
In particular, he wants to formalize reasoning with belief contravening
hypotheses, such as counterfactuals.
In the case of counterfactual reasoning, we make an assumption that 
we know to be false.
For example, let us suppose that Plato had lived during the middle ages.
To be able to make such a counter factual assumption, we, temporally, have to give up
some of our beliefs to maintain consistency.
It is, however, not always clear which of our beliefs we have to give up.
The following example gives an illustration.
\begin{example}
\hspace*{\fill} \\ \vspace{-6mm}
\begin{description}
\item[Beliefs]
\hspace*{\fill} \\ \vspace{-6mm}
\begin{enumerate}
\item
Bizet was of French nationality.
\item
Verdi was of Italian nationality.
\item
Compatriots are persons who share the same nationality.
\end{enumerate}
\item[Hypothesis]
Assume that Bizet and Verdi are compatriots.
\end{description}
\end{example}
There are three possibilities to restore consistency.
Clearly, we do not wish to witdraw 3, but we are indifferent whether we should
give up 1 or 2.

To model this behaviour in a logical system, Rescher divides the set of premisses into
modal categories.
The modalities Rescher proposes are: alethic modalities, epistemic modalities,
modalities based on inductive warrant, and modalities based on probability or 
confirmation.
Given a set of modal categories, he selects Preferred Maximal Mutually-Compatible subsets
(PMMC subsets) from them.
The procedure for selecting these subsets is as follows:
\begin{list}{}{}
\item
Let $M_0,...,M_n$ be a family of modal categories.
\begin{enumerate}
\item
Select a maximal consistent subset of $M_0$ and let this be the set $S_0$.
\item 
Form $S_i$ by adding as many premisses of $M_i$ to $S_{i-1}$ as possible 
without disturbing the consistency of $S_i$.
\end{enumerate}
$S_n$ is a PMMC-subset.
\end{list}
Given these PMMC-subsets, Rescher defines two entailment relations.
\begin{itemize}
\item
Compatible-Subset (CS) entailment. A proposition is CS entailed if it follows from every
PMMC-subset.
\item
Compatible-Restricted (CR) entailment.
A proposition is CR entailed if it follows from some PMMC-subset.
\end{itemize}

It is not difficult to see that Rescher's modal categories can be represented
by a partial reliability relation on the premisses.
For every modal category $M_i$, $M_j$ with $i <j$, there must hold that 
each premiss in $M_i$
is more reliable than any premiss in $M_j$.
Given this ordering, from Definition \ref{proof-th} it follows immediately
that the PMMC-subsets are equal to the most reliable consistent sets of premisses.

\subsection{A framework for default reasoning}
The central idea behind Poole's approach is that default reasoning should be viewed 
as {\it scientific theory formation} \cite{Poo-88}.
Given a set of facts about the world and a set of hypotheses, a subset of the hypotheses
which together with the facts can explain an {\it observation},
have to be selected.
Of course, this selected set of hypotheses has to be consistent with the facts.
A default rule is represented in Poole's framework by a hypothesis containing free
variables.
Such a hypothesis represents a set of ground instances of the hypothesis.
Each of these ground instances can be used independently of the other instances
in an explanation.
An explanation for a proposition $\varphi$ is a maximal (with respect to the
inclusion relation) {\em scenario} that implies $\varphi$.
Here a scenario is a consistent set containing all the facts and some ground instances
of the hypotheses.

This framework can be viewed as a special case of Rescher's work.
Poole's framework consists of only two modal categories, the facts $M_0$ and
the hypotheses $M_1$.
Since Rescher's work is a special case of the logic described in this paper,
so is Poole's framework.
Poole, however, extends his framework with constraints.
These constraints are added to be able to eliminate some scenarios as 
possible explanations for a formula $\varphi$.
A scenario is eliminated when it is not consistent with the constraints.

The constraints can be interpreted as describing that some scenarios
are preferred to others.
Since in the logic described in this paper a reliability relation on the premisses 
generates a preference
relation on consistent subsets of the premisses, an obvious question is
whether 
the preference relation described by the constraints can be modelled with an 
appropriate reliability relation.
Unfortunately, the answer is `no'.
This is illustrated by the following example.
\begin{example} \label{poole}
\hspace*{\fill} \\ \vspace{-6mm}
\begin{description}
\item[Facts:] $\varphi, \psi$.
\item[Defaults:] $\varphi \rightarrow \alpha, \varphi \rightarrow \neg \beta, 
\psi \rightarrow  \neg \alpha, \psi \rightarrow \beta$.
\item[Constraints:] $\neg ( \alpha \wedge \beta ), 
\neg ( \neg \alpha \wedge \neg \beta )$. 
\end{description}
Without the constraints this theory has four different extensions. 
These extensions are the logical 
consequences of the following scenarios.
\[ S_1 = \{ \varphi, \psi, \varphi \rightarrow \alpha, \varphi \rightarrow \neg \beta \} \]
\[ S_2 = \{ \varphi, \psi, \psi \rightarrow  \neg \alpha, \psi \rightarrow \beta \} \]
\[ S_3 = \{ \varphi, \psi, \varphi \rightarrow \alpha, \psi \rightarrow \beta \} \]
\[ S_4 = \{ \varphi, \psi, \varphi \rightarrow \neg \beta, \psi \rightarrow  \neg \alpha \} \]
Only the first two scenarios are consistent with constraints.
If this default theory has to be modelled in the logic, 
a reliability relation
has to be specified in such a way that $\{ S_1, S_2 \} ={\cal R}$.
To determine the required reliability relation on the hypotheses, combinations of two scenarios are considered.
To ensure that $S_1 \in {\cal R}$  and $S_3 \not\in {\cal R}$, 
$\varphi \rightarrow \neg \beta$
has to be more reliable than $\psi \rightarrow \beta$.
To ensure that $S_2 \in {\cal R}$ and $S_4 \not\in {\cal R}$,
$\psi \rightarrow \beta$
has to be more reliable than $\varphi \rightarrow \neg \beta$.
Hence, the reliability relation would be reflexive, violating the requirement
of irreflexivity in a strict partial order.
This means that not every ordering of explanations in Poole's framework
can be modelled, using the logic described in this paper.
\end{example}

Although Poole's framework without constraints
can be expressed in the logic described in this paper,
the philosophies behind the two approaches are 
quite different.
Poole's work is based on the idea that default reasoning is a process
of selecting consistent sets of hypotheses, which can explain a set of 
observations.
The purpose of the logic described in this paper, however, is to derive useful conclusions
from an inconsistent set of premisses.

\subsection{Paraconsistent logics}
Paraconsistent logics are a class of logics developed for reasoning 
with inconsistent knowledge \cite{Arr-80}.
Unlike classical logics, in paraconsistent logics there need not hold $\neg ( \varphi
\wedge \neg\varphi )$ for some proposition $\varphi$.
Hence, an inconsistent set of premisses is not equivalent to the trivial theory; it does not
imply the set of all propositions.

Unlike the logic described in this paper, a paraconsistent logic does not resolve
an inconsistency.
Instead it simply avoids that everything follows from an inconsistent theory.
To illustrate this more clearly, consider the following a reliability theory, 
without a reliability relation.
\[ \Sigma = \{ \alpha \wedge \beta, \neg \beta \wedge \gamma \} \]
In the logic described in this paper, all maximal consistent subsets will 
be generated.
\[ \{ \alpha \wedge \beta \} \mbox{ and } \{ \neg \beta \wedge \gamma \} \]
In a paraconsistent logic the proposition $\beta$ will be contradictory but 
the propositions $\alpha$ and $\gamma$ will consistently be entailed by the premisses.

The difference between the two approaches can be interpreted as the difference 
between a credulous and a sceptical view of knowledge sources.
With a credulous view of a knowledge source, we try to derive as much as is 
consistently possible.
According to Arruda \cite{Arr-80}, scientific theories for different domains, which
conflict with each other on some overlapping aspect, are treated in this way.
With a sceptical view of a knowledge source, we only believe one of the knowledge
sources that support the conflicting information.
So if a part of someone statement turns out to be wrong, we will not belief the rest of
his/her statement.
Although a credulous view of knowledge sources seems to be acceptable for scientific 
theories for different domains, a sceptical view seems to be better for knowledge 
based systems, which have to act on the information available.

\subsection{Truth maintenance systems}
In the here presented logic arguments are used. 
Justifications in the JTMS of J.~Doyle \cite{Doy-79}
or the ATMS of J.~de~Kleer \cite{Kle-86} have a similar function as arguments.
Unlike the arguments used here, these justifications are not part of the deduction process.
The arguments used here follow directly from the requirement for the
deduction process (Section \ref{just}).
Moreover, in a(n) (A)TMS the justifications describe dependencies between propositions,
while in the here presented logic, the supporting arguments describe dependencies between
propositions and premisses, and undermining arguments describe dependencies among premisses.
The supporting arguments of the  logic, however, can be compared with the
labels in the ATMS \cite{Kle-86}.
Like a label, a supporting argument describes from which premisses a proposition is 
derived.
The undermining arguments have more or less the same function as a {\bf nogood}
in the ATMS.
As with an element from the set representing a {\bf nogood}, the consequent and the antecedents 
of an undermining argument may not be assumed to be true simultaneously.
Unlike an element of the set nogood, an undermining argument describes which
element has to be removed from the set of premisses (assumptions).

Because supporting arguments and labels are closely related, it is possible to describe
an ATMS using a propositional  logic.
Let $\langle A,N,J \rangle$ be an ATMS where:
\begin{itemize}
\item
$A$ is a set of assumptions,
\item
$N$ is a set of nodes, and
\item
$J$ is a set of justifications.
\end{itemize}
We can model the ATMS in the  logic using the following construction.
Let $A \cup N$ be the set of atomic propositions of the logic.
Furthermore, let the set of premisses $\Sigma$ be equal to $A \cup J$ where the 
justifications $J$ are described by rules of the form:
\[ p_1 \wedge ... \wedge p_n \to q. \]
Finally, let every justification be more reliable than any assumption.
Then the set ${\cal R}$ is equal to the set of maximal (under the inclusion relation)
environments of an 
ATMS.
Furthermore, for any linear extension of the reliability relation, the label for a node 
$n \in N$ is equal to the set:
\[ \{ P \mid P \Rightarrow n \in {\cal A}^{\prec'}_\infty \mbox{ and for no 
$Q \Rightarrow n \in {\cal A}^{\prec'}_\infty$: }  Q \subset P \}. \]
The set of nogoods is equal to the set:
\[ \{ (P \cup \{ p \} ) \cap A \mid \begin{array}[t]{l} 
P \not\Rightarrow p \in {\cal A}^{\prec'}_\infty 
\mbox{ and for no $Q \not\Rightarrow q \in {\cal A}^{\prec'}_\infty$: } \\
(Q \cup \{q\}) \cap A \subset (P \cup \{p\}) \cap A \}. 
\end{array} \]

\section{Applications}
In the previous sections a logic for reasoning with inconsistent
knowledge was described.
In this section two applications will be discussed.

\subsubsection*{Unreliable  knowledge sources}
In situations where we must deal with sensor data the logic described in
the previous sections can be applied.
To be able to reason with sensor data, the data has to be 
translated into statements about the world.
Because of measurements errors and of misinterpretation of the data,
these statements can be incorrect.
This may  result in inconsistencies.
These inconsistencies may be resolved by considering the reliability of 
the knowledge sources used.
To illustrate this consider the following example.
\begin{example}
Suppose that we want to determine the type of an airplane by using the
characteristic of its radar reflection.
The radar reflection of an airplane depends on the size and the 
shape of plane.
Suppose that we have some pattern recognition system that outputs
a proposition stating the type of plane, or a disjunction of possible
types in case of uncertainty.
Furthermore, suppose that we have an additional system that determines
the speed and the course of the plane.
The output of this system will also be stated as a proposition.
Given the output of the two systems, we can verify whether they are 
compatible.
If a plane is recognized as a Dakota and its speed is 1.5 Mach, then, 
knowing that a Dakota cannot go through the sound barrier, we can derive 
a conflict.
Since the speed measuring system is more reliable than the
type identifying system, we must remove the proposition stating
that the plane is a Dakota.
\end{example}

In this example, the reliability relation can be interpreted
as denoting that if two premisses are involved in a conflict the least reliable premiss has the highest probability of being wrong.
Since the relative probability is conditional on inconsistencies,
information from one reliable source cannot be overruled by information from many unreliable knowledge sources.
For example, the position of an object determined by seeing it is normally
more reliable then the position determined hearing it, independent of the 
number persons that heard it at some position.
Notice that fault probabilities have no meaning because faults are context
dependent.
The positions where you hear an object can be incorrect because of reflections and the limited speed of sound.
Usually, these factors cannot be predicted in advance.

\subsubsection*{Planning}
Another possible application for the logic can be found in the 
area of planning.
In \cite{Gin-88}, Ginsberg and Smith describe a possible worlds 
approach for reasoning about actions.
What they propose is an alternative way of determining
the consequences of an action. 
Instead of using frame axioms,
default rules, or add and delete lists. 
They propose to determine the nearest {\em world}
that is consistent with the consequences of an action.
The advantage of this approach is that we do not have to know all
possible consequences of an action in advance.
For example, in general, we cannot know whether putting a plant
on a table will obscure a picture on the wall.
Hence, if we know that a picture is not obscured before an action,
we may assume that it is still not obscured after the action when this 
fact is consistent with the consequences of the action.
\begin{figure}[ht]
\centerline{\includegraphics[scale=1]{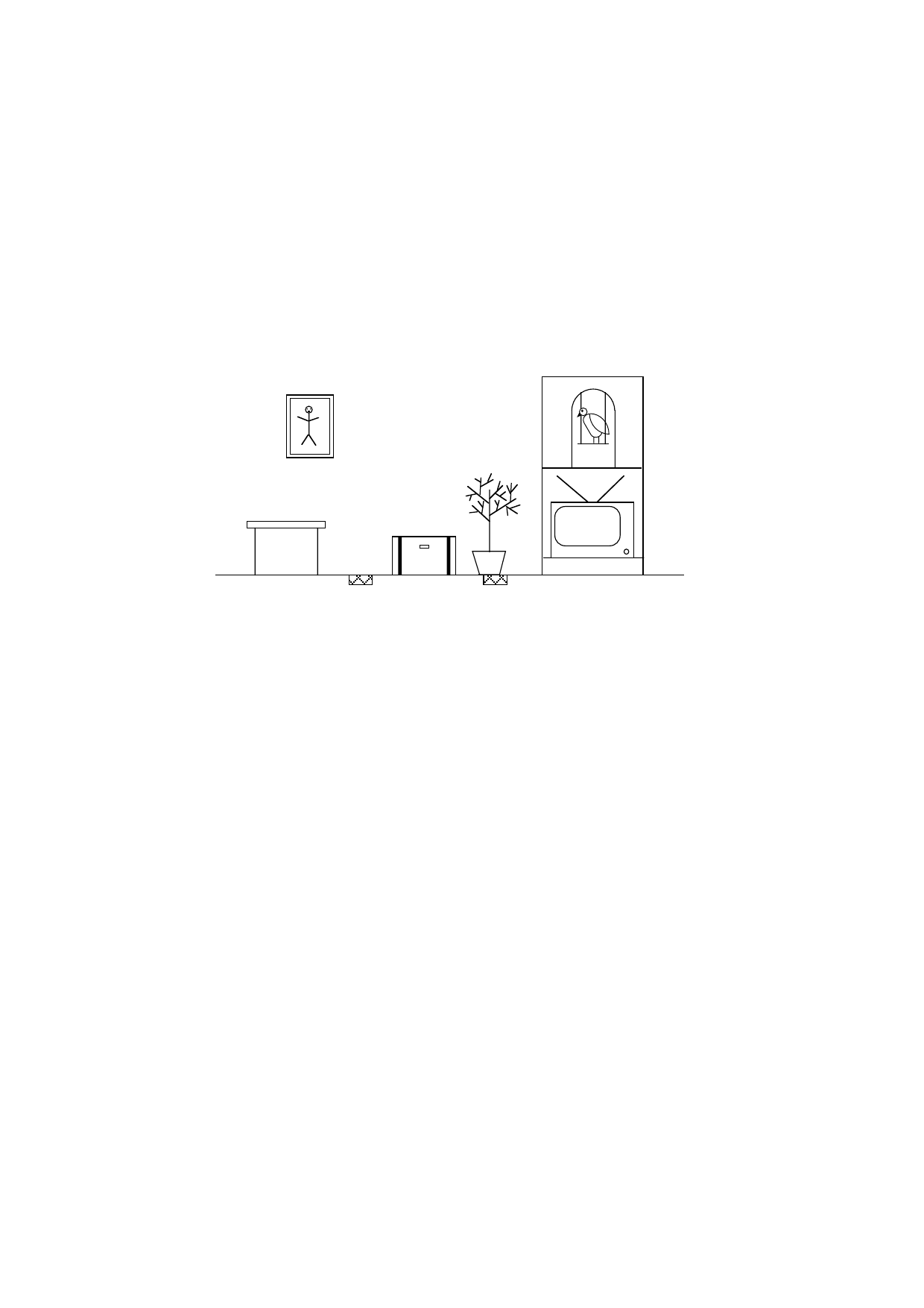}}
\caption{living-room} \label{room}
\end{figure}
\begin{example}
Figure \ref{room} can be described a set of premisses.
This set of premisses is divided in to three subsets, viz.\
the domain constraints, the structural facts and the remaining facts.
The domain constraints are:
\begin{enumerate}
\item ${\it on}(x,y) \wedge y \not= z \to \neg on (x,z)$
\item ${\it on}(x,y) \wedge z \not= x \wedge y \not= {\it floor} \to \neg on (z,y)$
\item ${\it rounded}(x) \to \neg on (x,y)$
\item ${\it duct}(d) \wedge \exists x. {\it on}(x,d) \to {\it blocked}(d)$
\item $\exists x. {\it on}(x,{\it table}) \leftrightarrow {\it obscured}({\it picture})$
\item ${\it blocked}({\it duct1}) \wedge {\it blocked}({\it duct2}) \leftrightarrow {\it stuffy}({\it room})$
\end{enumerate}
The structural facts are:
\begin{enumerate}
\setcounter{enumi}{6}
\item ${\it rounded}({\it bird})$
\item ${\it rounded}({\it plant})$
\item ${\it duct}({\it duct1})$
\item ${\it duct}({\it duct2})$
\item ${\it in}({\it bottom\_shelf}, {\it bookcase})$
\item ${\it in}({\it top\_shelf}, {\it bookcase})$
\end{enumerate}
The situational facts are:
\begin{enumerate}
\setcounter{enumi}{12}
\item ${\it on}({\it bird},{\it top\_shelf})$
\item ${\it on}(tv,{\it bottom\_shelf})$
\item ${\it on}({\it chest}, {\it floor})$
\item ${\it on}({\it plant}, {\it duct2})$
\item ${\it on}({\it bookcase}, {\it floor})$
\item ${\it blocked}({\it duct2})$
\item $\neg {\it obscured}({\it picture})$
\item $\neg {\it stuffy}({\it room})$
\end{enumerate}
\end{example}
The domain constraints are complemented with the {\em unique name assumption} (UNA). 

Clearly, the situational facts are less reliable than the structural facts
and the domain constraints.
Furthermore, facts added by recent actions are on average more reliable than 
facts added by less recent actions.

Now suppose that we move the {\it plant} from {\it duct2} to the {\it table}.
This can be described by adding the situational fact $on({\it plant}, {\it table})$.
From the new set of premisses we can derive two inconsistencies;
\begin{eqnarray*}
\lefteqn{\{ \exists x. {\it on}(x,{\it table}) \leftrightarrow {\it obscured}({\it picture}), } \\
 & & \neg {\it obscured}({\it picture}), {\it on}({\it plant}, {\it table}) \} 
\end{eqnarray*} and 
\begin{eqnarray*}
\lefteqn{\{ {\it on}(x,y) \wedge y \not= z \to \neg on (x,z), } \\
 & & {\it on}({\it plant}, {\it duct2}), {\it on}({\it plant}, {\it table}) \}. 
\end{eqnarray*}
The least reliable premisses in these sets of premisses are
respectively the facts $\neg {\it obscured}({\it picture})$ and ${\it on}({\it plant}, {\it duct2})$.
Hence, they have to be removed from the set of premisses.

\section{Conclusions}
In this paper a logic for reasoning with inconsistent knowledge has been described.
One of the original motivations for developing this logic was based on the view that
default reasoning is as a special case of reasoning with inconsistent
knowledge.
To describe defaults in this logic, such as Poole's framework for default reasoning,
formulas containing free variables can be used.
These formulas denote a set of ground instances.
If we do not generate these ground instances, but, by using unification of terms
containing free variables, we reason with formulas containing free variables, 
we can derive conclusions representing sets of instances.
This would seem to be a very useful property.

Since, in the logic described here a default rule can only be described by
using  material implication, a default rule has a contraposition. 
It is possible, however, the contraposition may not hold for default rules.
For example,  the contraposition of the default rule: `someone who owns a 
driving licence, may drive a car' is not valid.
A better candidate for a default reasoning would be Reiter's Default 
logic \cite{Rei-80} or Brewka's aproach \cite{Bre-91}.

Although it is likely that the logic is not suited for default reasoning, it
is suited for reasoning with knowledge coming from different and not fully reliable 
knowledge sources.
For this use of the logic, it seems plausible that the logic satisfies the 
properties of system {\bf P}.
As was shown in the examples described in Section 10, the reliability relation can be given a plausible probabilistic and ontological 
interpretations.
Furthermore, the current belief set with respect to the inferences made
can be determined efficiently.
One important disadvantage is that, given a set of premisses containing many inconsistencies and insufficient knowledge about the 
relative reliability, the number of possible belief sets can grow 
exponentially in the number minimal inconsistencies detected.

\section*{Appendix}
\setcounter{stelling}{11}
\begin{theorem} {\it Soundness} \\
For each $i \geq 0$: 
\begin{list}{}{} \item
if $P \Rightarrow \varphi
\in {\cal A}^{\prec'}_i$, then:
\[ P \subseteq {\Sigma} \mbox{ and } P
\models \varphi. \]
\end{list}
\end{theorem}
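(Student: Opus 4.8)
The plan is to proceed by induction on the stage index $i$, exploiting the fact that each $\mathcal{A}^{\prec'}_i$ is built from $\mathcal{A}^{\prec'}_{i-1}$ by adding a single new argument, and that a supporting argument $P \Rightarrow \varphi$ can only be introduced in one of three ways: as an initial argument in $\mathcal{A}^{\prec'}_0$, by the axiom rule (Rule \ref{rule1}), or by modus ponens (Rule \ref{rule2}). The contradiction rule (Rule \ref{rule3}) produces only undermining arguments of the form $P \not\Rightarrow \eta$, so it never contributes a supporting argument and can be set aside entirely for this claim.

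For the base case $i = 0$, every argument has the form $\{\varphi\} \Rightarrow \varphi$ with $\varphi \in \Sigma$, so the antecedent satisfies $\{\varphi\} \subseteq \Sigma$ and trivially $\{\varphi\} \models \varphi$. Hence the statement holds for $\mathcal{A}^{\prec'}_0$.

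For the inductive step, I would assume the claim holds for every supporting argument in $\mathcal{A}^{\prec'}_{i-1}$ and examine the single new argument added to form $\mathcal{A}^{\prec'}_i$. If it is an undermining argument there is nothing to prove, and every supporting argument of $\mathcal{A}^{\prec'}_i$ already lies in $\mathcal{A}^{\prec'}_{i-1}$, where the hypothesis applies. If the new argument $\emptyset \Rightarrow \varphi$ is produced by Rule \ref{rule1}, then $\varphi$ is a tautology, so $\emptyset \subseteq \Sigma$ and $\emptyset \models \varphi$ by validity of $\varphi$. If instead $(P \cup Q) \Rightarrow \psi$ is produced by Rule \ref{rule2} from $P \Rightarrow \varphi$ and $Q \Rightarrow (\varphi \to \psi)$ in $\mathcal{A}^{\prec'}_{i-1}$, the induction hypothesis yields $P, Q \subseteq \Sigma$ with $P \models \varphi$ and $Q \models (\varphi \to \psi)$, whence $P \cup Q \subseteq \Sigma$.

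The only step demanding a little care — the main, though modest, obstacle — is establishing $P \cup Q \models \psi$ in the modus ponens case. Here I would invoke monotonicity of classical entailment to lift $P \models \varphi$ and $Q \models (\varphi \to \psi)$ to $P \cup Q \models \varphi$ and $P \cup Q \models (\varphi \to \psi)$, and then argue semantically: any interpretation satisfying all of $P \cup Q$ satisfies both $\varphi$ and $\varphi \to \psi$, and therefore satisfies $\psi$. This discharges the inductive step and, together with the base case, establishes the claim for all $i \geq 0$.
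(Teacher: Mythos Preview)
Your proof is correct and follows essentially the same route as the paper's: induction on the stage index $i$, with a case analysis on how a supporting argument can enter $\mathcal{A}^{\prec'}_i$ (initial argument, Rule~\ref{rule1}, or Rule~\ref{rule2}). The only cosmetic difference is that the paper first reduces $\models$ to $\vdash$ via the soundness of propositional logic and then carries out the induction with syntactic derivability, whereas you work directly with semantic entailment throughout; the structure and content of the argument are otherwise the same.
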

\begin{proof}
By the soundness of propositional logic, 
\begin{list}{}{} \item
if $P
\vdash \varphi $, then $P\models \varphi $. 
\end{list}
Therefore, we only have to prove that for each $i \geq 0$: 
\begin{list}{}{} \item
if $P \Rightarrow \varphi
\in {\cal A}^{\prec'}_i$, then 
$ P \subseteq {\Sigma} \mbox{ and } P
\vdash \varphi. $ 
\end{list}
We can prove this by induction on the index $i$ of ${\cal A}^{\prec'}_i$. 
\begin{itemize}
\item
For $i=0$: 
\begin{list}{}{} \item
$\{ \varphi \} \Rightarrow \varphi 
\in {\cal A}^{\prec'}_0$ if and only if 
$\varphi \in \Sigma $. 
\end{list}
Therefore, 
$\{ \varphi \} \vdash \varphi. $
\item
Proceeding inductively, suppose that $P \Rightarrow \varphi \in {\cal A}^{\prec'}_{i}$. \\
Then: 
\begin{list}{}{} \item
$P \Rightarrow \varphi \in {\cal A}^{\prec'}_{i}$ if and only if 
$P \Rightarrow \varphi \in {\cal A}^{\prec'}_{i-1}$ or $P \Rightarrow \varphi$
has been added by Rule~\ref{rule1} or~\ref{rule2}.
\end{list}
\begin{itemize}
\item
If $P \Rightarrow \varphi  
\in {\cal A}^{\prec'}_{i-1}$, then, by the induction hypothesis, 
\[ P \subseteq { \Sigma } \mbox{ and } P \vdash \varphi. \] 
\item
If $P \Rightarrow \varphi $ is introduced
by Rule~\ref{rule1}, then it is an axiom. \\
Therefore, $P = \emptyset$ and $ \vdash \varphi. $
\item
If $P \Rightarrow \varphi $ is introduced 
by Rule~\ref{rule2}, then there is a $Q \Rightarrow \psi
\in {\cal A}^{\prec'}_{i-1}$, $R \Rightarrow ( \psi 
\rightarrow \varphi )\in {\cal A}^{\prec'}_{i-1}$. \\
Therefore, $ P = (Q \cup R)$. \\
According to the induction hypothesis there holds:
 \[ Q, R \subseteq { \Sigma }, \]
\[ Q \vdash \psi \] and 
\[R \vdash  \psi \rightarrow \varphi . \]
Hence: \[ P \subseteq { \Sigma } \mbox{ and } P \vdash \varphi. \]
\end{itemize}
\end{itemize}
\end{proof}
\begin{theorem} {\it Completeness} \\
For each $P \subseteq {\Sigma}$: 
\begin{list}{}{} \item
if $P \models \varphi$, then there exists a $Q \subseteq P$ such that
for some $i \geq 0$: \[ Q \Rightarrow \varphi \in {\cal A}^{\prec'}_i. \] 
\end{list}
\end{theorem}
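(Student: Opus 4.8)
The plan is to reduce the statement to the completeness of propositional logic and then track the premisses through a Hilbert-style derivation. First I would invoke the completeness of the propositional calculus: since $P \models \varphi$, there holds $P \vdash \varphi$, so there is a finite proof $\psi_1, \psi_2, \ldots, \psi_n = \varphi$ in the axiomatic system, where each line $\psi_k$ is an axiom (a tautology), a member of $P$, or is obtained from two earlier lines by modus ponens. Because $\Sigma$ is finite and $P \subseteq \Sigma$, this proof is finite and uses only finitely many premisses, all drawn from $P$.

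The core of the argument is an induction on the position $k$ of each line $\psi_k$ in this proof, establishing the claim that for every $k$ there is a set $Q_k \subseteq P$ and an index $i_k$ with $Q_k \Rightarrow \psi_k \in {\cal A}^{\prec'}_{i_k}$. I would distinguish the three cases matching the rules of the deduction process. If $\psi_k$ is a tautology, then by Rule~\ref{rule1} (together with the fairness Assumption~\ref{fair} guaranteeing its eventual introduction) the argument $\emptyset \Rightarrow \psi_k$ appears, and $\emptyset \subseteq P$. If $\psi_k \in P$, then $\psi_k \in \Sigma$, so the initial argument $\{\psi_k\} \Rightarrow \psi_k$ already lies in ${\cal A}^{\prec'}_0$ and $\{\psi_k\} \subseteq P$. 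If $\psi_k$ is obtained by modus ponens from earlier lines $\psi_a$ and $\psi_b = (\psi_a \rightarrow \psi_k)$, then by the induction hypothesis there are arguments $Q_a \Rightarrow \psi_a$ and $Q_b \Rightarrow (\psi_a \rightarrow \psi_k)$ with $Q_a, Q_b \subseteq P$; Rule~\ref{rule2}, again invoking fairness, then yields $(Q_a \cup Q_b) \Rightarrow \psi_k$, and $Q_a \cup Q_b \subseteq P$. Taking $Q = Q_n$ for the final line $\psi_n = \varphi$ gives the desired supporting argument with $Q \subseteq P$.

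The main obstacle is the bookkeeping around the fairness assumption. Rule~\ref{rule2} requires that both antecedent arguments be present in the same set of arguments before the new one can be formed, yet the induction only provides them at possibly different stages $i_a$ and $i_b$. Because the sequence ${\cal A}^{\prec'}_0 \subseteq {\cal A}^{\prec'}_1 \subseteq \cdots$ is monotone increasing, both antecedents are simultaneously present from stage $\max(i_a, i_b)$ onward, and Assumption~\ref{fair} then guarantees that the modus ponens argument is not deferred forever, hence appears at some later stage. Since the statement only asks for existence of some $i$, this monotonicity-plus-fairness observation suffices and no quantitative control of the indices is needed. The only remaining care is to maintain the inductive invariant $Q_k \subseteq P$, which is immediate from the three cases since each combines only subsets of $P$.
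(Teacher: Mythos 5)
Your proposal is correct and follows essentially the same route as the paper's own proof: reduce to $P \vdash \varphi$ via propositional completeness, then induct over the Hilbert-style deduction sequence with the three cases (premiss, axiom, modus ponens), invoking the fairness assumption to guarantee that Rule~\ref{rule1} and Rule~\ref{rule2} arguments eventually appear. Your explicit remark that monotonicity of the sequence ${\cal A}^{\prec'}_0 \subseteq {\cal A}^{\prec'}_1 \subseteq \cdots$ makes both antecedent arguments simultaneously available before fairness is applied is a slightly more careful rendering of a step the paper leaves implicit, but it is the same argument.
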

\begin{proof}
Let $P \subseteq {\Sigma}$ and $P \models \varphi$. \\
By the completeness of propositional logic, 
\begin{list}{}{} \item
if $P \models \varphi $, then $P \vdash \varphi$. 
\end{list}
Since $P \vdash \varphi$, there exists a deduction sequence
$\langle \varphi_1 , \varphi_2 , ... , \varphi_n \rangle$
such that $\varphi_n = \varphi$ and for each $j < n$: either
\begin{itemize}
\item
$\varphi_j \in P$, or
\item
$\varphi_j$ is an axiom, or 
\item
there exists a $\varphi_k$ and a $\varphi_l$ with $k,l < j$ and 
$\varphi_l = \varphi_k \rightarrow \varphi_j$.
\end{itemize}
The theorem will be proven, using induction on the length $n$ of the 
deduction sequence.
\begin{itemize}
\item
For $n=1$,  $\langle \varphi_1 \rangle$
is the deduction sequence for $P \vdash \varphi$.
\begin{itemize}
\item
If $\varphi_1 \in P$, then $\{ \varphi_1 \} \Rightarrow 
\varphi_1 \in {\cal A}^{\prec'}_0$. 
\item
If $\varphi_1$ is an axiom, then there exists some $i_0 \geq 1$ such that:
\begin{list}{}{} \item
${\cal A}^{\prec'}_{i_1} = {\cal A}^{\prec'}_{i_1 - 1} \cup \{ \emptyset \Rightarrow \varphi_1 \}$ and 
$\emptyset \Rightarrow \varphi_1$ is added by Rule~\ref{rule1}.
\end{list}
\end{itemize}
Hence, the theorem holds for deduction sequences of length 1.
\item
Proceeding inductively,
let $\langle \varphi_1 , \varphi_2 , ... , \varphi_{n} \rangle$
be a deduction sequence for \\ $P {\vdash} \varphi_{n}$.
\begin{itemize}
\item
If $\varphi_{n} \in P$, then $\{ \varphi_{n} \} \Rightarrow 
\varphi_{n} \in {\cal A}^{\prec'}_0$. 
\item
If $\varphi_{n}$ is an axiom, then there exists an $i_{n}$ such that:
\begin{list}{}{} \item
${\cal A}^{\prec'}_{i_{n}} = {\cal A}^{\prec'}_{i_{n} - 1} \cup \{ \emptyset \Rightarrow \varphi_{n} \}$
and $\emptyset \Rightarrow \varphi_{n}$ is added by Rule~\ref{rule1}.
\end{list}
\item
If there exists a $\varphi_k$ and a $\varphi_l$ with $k,l < n$ and
$\varphi_l = \varphi_k \rightarrow \varphi_{n}$, then, by the 
induction hypothesis, there exists some $i_k$ and some $i_l$ such that:
\[ R \Rightarrow \varphi_k \in {\cal A}^{\prec'}_{i_k}, \] 
\[ S \Rightarrow ( \varphi_k \rightarrow \varphi_{n} ) \in {\cal A}^{\prec'}_{i_l} \]
and \[ R,S \subseteq P. \]
Because of the fairness Assumption~\ref{fair}, there must exist an
$i_{n}$ with $i_k, i_l < i_{n}$ 
such that: 
\begin{list}{}{} \item
	${\cal A}^{\prec'}_{i_{n}} = {\cal A}^{\prec'}_{i_{n} - 1} \cup \{ R \cup S \Rightarrow \varphi_{n} \}$
	and $R \cup S \Rightarrow \varphi_{n}$ is added by Rule~\ref{rule2}.
\end{list}
\end{itemize}
Hence there exists some $i_{n}$  
such that $Q \Rightarrow \varphi_{n} \in {\cal A}^{\prec'}_{i_{n}}$ and $Q \subseteq P$.
\end{itemize}
\end{proof}
\begin{theorem} {\it Soundness} \\
For each $i \geq 0$: 
\begin{list}{}{} \item
if $P \not\Rightarrow \varphi 
\in {\cal A}^{\prec'}_i$, then: 
\[ (P \cup \{ \varphi \}) \subseteq 
{ \Sigma }, \mbox{ and } (P \cup \{ \varphi \}) \mbox{ is not satisfiable}. \]
\end{list}
\end{theorem}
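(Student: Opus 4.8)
The plan is to exploit the fact that undermining arguments enter the construction through a single mechanism, namely the contradiction rule (Rule~\ref{rule3}), and then to reduce the claim to the already established soundness of supporting arguments (Theorem~\ref{th1}). First I would argue by induction on the index $i$. The initial set ${\cal A}^{\prec'}_0$ consists only of supporting arguments $\{\varphi\} \Rightarrow \varphi$, and Rules~\ref{rule1} and~\ref{rule2} likewise produce only supporting arguments; hence no undermining argument occurs for $i=0$ and the base case holds vacuously.

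For the inductive step, suppose $P \not\Rightarrow \varphi \in {\cal A}^{\prec'}_i$. Either this argument already belongs to ${\cal A}^{\prec'}_{i-1}$, in which case the induction hypothesis applies directly, or it was freshly added. Since only Rule~\ref{rule3} introduces undermining arguments, the fresh case means there are supporting arguments $R \Rightarrow \psi$ and $S \Rightarrow \neg\psi$ in ${\cal A}^{\prec'}_{i-1}$ with $\eta = \min_{\prec'}(R \cup S)$, and the new undermining argument is $((R\cup S)\backslash\eta) \not\Rightarrow \eta$; so $P = (R\cup S)\backslash\eta$ and $\varphi = \eta$.

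Now I would invoke Theorem~\ref{th1} on the two supporting arguments to obtain $R, S \subseteq \Sigma$ together with $R \models \psi$ and $S \models \neg\psi$. Since $\eta \in R \cup S$, we have $P \cup \{\varphi\} = ((R\cup S)\backslash\eta) \cup \{\eta\} = R \cup S \subseteq \Sigma$, which gives the first conclusion. For the second, any interpretation satisfying $R \cup S$ would satisfy every element of $R$ and of $S$, hence both $\psi$ and $\neg\psi$, which is impossible; therefore $P \cup \{\varphi\} = R \cup S$ is not satisfiable.

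There is no serious obstacle here: the substance of the theorem is carried by Theorem~\ref{th1}. The only points demanding care are bookkeeping ones, namely confirming that $\eta \in R \cup S$ so that adding $\eta$ back to $(R\cup S)\backslash\eta$ recovers exactly $R \cup S$, and checking that the form of the argument produced by Rule~\ref{rule3} matches the pattern $P \not\Rightarrow \varphi$ under the identifications $P = (R\cup S)\backslash\eta$ and $\varphi = \eta$. The induction itself serves only to dispatch arguments inherited from earlier stages; the entire content lies in the single application of Rule~\ref{rule3} combined with the soundness of supporting arguments.
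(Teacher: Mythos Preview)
Your proposal is correct and follows essentially the same approach as the paper's own proof: induction on $i$ with a vacuous base case, the same case split in the inductive step, and an appeal to Theorem~\ref{th1} on the two supporting arguments underlying the application of Rule~\ref{rule3}. The only cosmetic difference is that the paper phrases the contradiction via $\vdash$ and then passes from inconsistency to unsatisfiability, whereas you work directly with $\models$ as given by Theorem~\ref{th1}; the content is the same.
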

\begin{proof}
The theorem will be proven using induction to the index $i$
of the set of arguments ${\cal A}^{\prec'}_i$.
\begin{itemize}
\item
For $i = 0$ the theorem holds vacuously, because there is no 
$P \not\Rightarrow \varphi  \in {\cal A}^{\prec'}_0$.
\item
Proceeding inductively,
suppose that $P \not\Rightarrow \varphi \in {\cal A}^{\prec'}_{i}$. \\
$P \not\Rightarrow \varphi  \in {\cal A}^{\prec'}_{i}$ 
if and only if $P \not\Rightarrow \varphi 
\in {\cal A}^{\prec'}_{i-1}$ or $P \not\Rightarrow \varphi $ 
has been added by Rule~\ref{rule3}.
\begin{itemize}
\item
If $P \not\Rightarrow \varphi  \in {\cal A}^{\prec'}_{i-1}$, 
then, by the induction hypothesis, 
$ (P \cup \{ \varphi \}) \subseteq { \Sigma }$ and 
$(P \cup \{ \varphi \})$ is not satisfiable.
\item
If $P \not\Rightarrow \varphi$ is 
introduced by Rule~\ref{rule3}, then there exists an $R \Rightarrow 
\psi \in {\cal A}^{\prec'}_{i-1}$ and a $Q \Rightarrow \neg \psi \in {\cal A}^{\prec'}_{i-1}$ such that:
\[ \varphi = \min_{\prec'}(Q \cup R) \mbox{ and } P = ( R \cup Q ) \backslash \varphi . \] 
By Theorem~\ref{th1}:
\[ R, Q \subseteq { \Sigma }, \]
\[ R \vdash \psi \mbox{ and } Q \vdash \neg \psi. \]
Hence $ (P \cup
\{ \varphi \} ) \subseteq { \Sigma }, $ and
$ (P \cup \{ \varphi \})$ is inconsistent. \\
Since inconsistency implies unsatisfiability: 
\[ (P \cup \{ \varphi \} ) \subseteq { \Sigma } \mbox{ and } 
(P \cup \{ \varphi \}) \mbox{ is not satisfiable}. \]
\end{itemize}
\end{itemize}
\end{proof}
\begin{theorem} {\it Completeness} \\
For each $P \subseteq { \Sigma }$: 
\begin{list}{}{} \item
if $P$ is a minimal 
unsatisfiable set of premisses and $\varphi  = \min_{\prec'}(P)$, then for some  
$i \geq 0$: 
\[ P \backslash \varphi \not\Rightarrow \varphi \in {\cal A}^{\prec'}_i. \]
\end{list}
\end{theorem}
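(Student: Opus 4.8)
The plan is to produce the required undermining argument by a single application of the contradiction rule (Rule~\ref{rule3}) to two supporting arguments: the trivial argument for $\varphi$ and a supporting argument for $\neg \varphi$ whose antecedent is \emph{exactly} $P \backslash \varphi$. The whole difficulty is concentrated in showing that the antecedent delivered by the completeness theorem is all of $P \backslash \varphi$, since only then does the minimum computed inside Rule~\ref{rule3} coincide with $\varphi$ and does the undermining argument acquire the antecedent demanded by the statement.

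First I would extract the relevant semantic facts from the minimality of $P$. As $P$ is minimal unsatisfiable and $\varphi \in P$, the proper subset $P \backslash \varphi$ is satisfiable, whereas $P = (P \backslash \varphi) \cup \{ \varphi \}$ is not; hence every model of $P \backslash \varphi$ falsifies $\varphi$, so $P \backslash \varphi \models \neg \varphi$. Moreover $P \backslash \varphi$ is a \emph{minimal} set entailing $\neg \varphi$: for any proper subset $Q \subsetneq P \backslash \varphi$ the set $Q \cup \{ \varphi \}$ is a proper subset of $P$ and is therefore satisfiable by the minimality of $P$, so some interpretation satisfies both $Q$ and $\varphi$, and consequently $Q$ cannot entail $\neg \varphi$.

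Next I would invoke the completeness of the supporting arguments (Theorem~\ref{th2}) on $P \backslash \varphi \models \neg \varphi$, obtaining some $Q \subseteq P \backslash \varphi$ with $Q \Rightarrow \neg \varphi \in {\cal A}^{\prec'}_j$ for some $j \geq 0$. By the soundness of the supporting arguments (Theorem~\ref{th1}) we have $Q \models \neg \varphi$, and combining this with the minimality established above forces $Q = P \backslash \varphi$. Together with the initial argument $\{ \varphi \} \Rightarrow \varphi \in {\cal A}^{\prec'}_0$, we now have supporting arguments for $\varphi$ and for $\neg \varphi$ present in the common set ${\cal A}^{\prec'}_j$.

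Finally I would apply Rule~\ref{rule3} to these two arguments. Their antecedents are $\{ \varphi \}$ and $P \backslash \varphi$, whose union is exactly $P$, so $\eta = \min_{\prec'}( \{ \varphi \} \cup ( P \backslash \varphi ) ) = \min_{\prec'}(P) = \varphi$, and the rule prescribes the undermining argument $(P / \varphi) \not\Rightarrow \varphi$, that is $P \backslash \varphi \not\Rightarrow \varphi$. By the fairness Assumption~\ref{fair} the addition of this argument cannot be deferred forever, so $P \backslash \varphi \not\Rightarrow \varphi \in {\cal A}^{\prec'}_i$ for some $i \geq 0$, as required. As indicated, the main obstacle is the identification $Q = P \backslash \varphi$; once that is secured, the remainder is a direct reading of Rule~\ref{rule3} together with the fairness assumption.
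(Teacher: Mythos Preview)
Your proof is correct and follows essentially the same approach as the paper: obtain supporting arguments whose antecedents union to exactly $P$ via Theorem~\ref{th2} together with the minimality of $P$, then apply Rule~\ref{rule3} and the fairness Assumption~\ref{fair}. The only difference is that the paper works with an arbitrary contradictory pair $\psi,\neg\psi$ derivable from $P$ and concludes $S\cup T=P$ directly from minimal inconsistency, whereas you fix the specific pair $\varphi,\neg\varphi$ and use the initial argument $\{\varphi\}\Rightarrow\varphi$; this buys you an explicit identification of the two antecedents as $\{\varphi\}$ and $P\backslash\varphi$, at the cost of the extra (but easy) step showing $P\backslash\varphi$ is \emph{minimal} for $\neg\varphi$ so that Theorem~\ref{th1} forces $Q=P\backslash\varphi$.
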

\begin{proof}
Let $P$ be a minimal unsatisfiable subset of ${\Sigma}$ with
$\varphi = \min_{\prec'}(P)$. \\
Since $P$ is a minimal unsatisfiable set, $P$ is a minimal inconsistent set. \\
Therefore, there exists a proposition $\psi$ such that:
\[ P \vdash \psi \mbox{ and } P \vdash \neg \psi. \]
By Theorem~\ref{th2} there exists a  $j,k \geq 0$ such that:
\[ S \Rightarrow \psi \in {\cal A}^{\prec'}_j, S \subseteq P \] and 
\[ T \Rightarrow \neg \psi \in {\cal A}^{\prec'}_k, T \subseteq P. \] 
Hence, $(S \cup T) \subseteq P. $ \\
Since $P$ is a minimal inconsistent set of premisses: \[ (S \cup T) = P. \]
Because of the fairness Assumption \ref{fair} 
there exists an $i >j, k$ such that:
\[ (P \backslash \varphi) \not\Rightarrow \varphi \in {\cal A}^{\prec'}_i. \]
\end{proof}
\setcounter{stelling}{17}
\begin{property}
For every $i$, the set $\Delta^{\prec'}_i$ exists and is unique.
\end{property}
\begin{proof}
\begin{description}
\item[Existence]
Let $\delta_0 \supset \delta_1 \supset ... \supset \delta_k$
be a sequence of sets of premisses such that:
\begin{itemize}
\item
$\Sigma = \delta_0$,
\item
$\delta_{j+1} = \delta_j \backslash \{ \varphi \}$ where $\varphi$ is the most reliable 
premiss in $\delta_j$ such that \\ $P \not\Rightarrow \varphi \in {\cal A}^{\prec'}_i$ and $P \subseteq \delta_j$.
\end{itemize}
Then, by induction on the index of the sequence, we can prove that:
\[ \Sigma \backslash {\it Out}^{\prec'}_i(\delta_j) \subseteq \delta_j. \]
\begin{itemize}
\item
For $j =0$, clearly, there holds $\Sigma \backslash {\it Out}^{\prec'}_i(\delta_0) \subseteq \delta_0$.
\item
Proceeding inductively, let the induction hypothesis hold for
$\ell \leq j$. \\
If $\Sigma \backslash {\it Out}^{\prec'}_i(\delta_j) \subset \delta_j$, then there exists a most reliable
$\varphi \in \delta_j$ such that $P \not\Rightarrow \varphi$ and $P \subseteq \delta_j$.

Now suppose that $\Sigma \backslash {\it Out}^{\prec'}_i(\delta_{j+1}) \not\subseteq \delta_{j+1}$. \\
Then there exists a $\psi \not\in {\it Out}^{\prec'}_i(\delta_{j+1})$ and $\psi \not\in \delta_{j+1}$.

Suppose that $\psi \in \delta_j$. \\
Then $\psi = \varphi$. \\
Since $\varphi$ is the most reliable premiss such that 
$P \not\Rightarrow \varphi$ and $P \subseteq \delta_j$, $P \subseteq \delta_{j+1}$. \\
Hence, $\psi \in {\it Out}^{\prec'}_i(\delta_{j+1})$. \\
Contradiction.

Hence, $\psi \not\in \delta_j$ and, by the construction of $\delta_j$, $\varphi \prec' \psi$. \\
Since $\psi \not\in \delta_j$, by the induction hypothesis, $\psi \in {\it Out}(\delta_j)$. \\
Therefore, there exists a $Q \not\Rightarrow \psi \in {\cal A}^{\prec'}_i$ and $Q \subseteq \delta_j$. \\
Since $\varphi \prec' \psi$, $Q \subseteq \delta_{j+1}$. \\
Hence, $\psi \in {\it Out}^{\prec'}_i(\delta_{j+1})$. \\
Contradiction.

Hence, $\Sigma \backslash {\it Out}^{\prec'}_i(\delta_{j+1}) \subseteq \delta_{j+1}$.
\end{itemize}
Let $k$ be the highest index in the sequence. \\
Then there does not exist a $\varphi \in \delta_k$ such that 
$P \not\Rightarrow \varphi \in {\cal A}^{\prec'}_i$ and $P \subseteq \delta_k$. \\
Hence, $\Sigma \backslash {\it Out}^{\prec'}_i(\delta_k) = \delta_k$, otherwise there would  exist
a $\varphi \in \delta_k$ such that 
$P \not\Rightarrow \varphi \in {\cal A}^{\prec'}_i$ and $P \subseteq \delta_k$. 

Hence, there exists at least one $\Delta^{\prec'}_i$ such that:
\[ \Delta^{\prec'}_i = \Sigma \backslash {\it Out}^{\prec'}_i(\Delta^{\prec'}_i). \]
\item[Uniqueness]
Suppose $\Delta^{\prec'}_i$ is not unique. \\
Then there exist at least two different subsets 
$\Delta^{\prec'}_i, \Delta'^{\prec'}_i \subset{\Sigma}$ satisfying Definition \ref{bel.prem}. \\
Let $\varphi$ be the most reliable proposition in $\Sigma$ such that:
\[ \varphi \not\in \Delta^{\prec'}_i \mbox{ and } \varphi \in \Delta'^{\prec'}_i. \] or 
\[ \varphi \not\in \Delta'^{\prec'}_i \mbox{ and } \varphi \in \Delta^{\prec'}_i. \]
Let us consider the first case. Note that the second case is similar. Then, there exists a $P \not\Rightarrow \varphi \in {\cal A}^{\prec'}_i$. \\
By Theorem~\ref{th4} there holds: 
\begin{list}{}{} \item
$P \cup \{ \varphi \}$ is unsatisfiable. 
\end{list}
Therefore, there exists a minimal inconsistent set of premisses $Q$ 
with \\ $\varphi =\min_{\prec'}(Q)$. \\
Since $\varphi \not\in \Delta^{\prec'}_i $ and $\varphi \in \Delta'^{\prec'}_i$, there exists a
$\psi \in Q$ such that: 
\[ \psi \in \Delta^{\prec'}_i, \psi \not\in \Delta'^{\prec'}_i \mbox{ and } \varphi \prec \psi. \]
Hence, $\varphi$ is not the most reliable proposition in $\Sigma$ such that:
\[ \varphi \not\in \Delta^{\prec'}_i \mbox{ and } \varphi \in \Delta'^{\prec'}_i. \] or 
\[ \varphi \not\in \Delta'^{\prec'}_i \mbox{ and } \varphi \in \Delta^{\prec'}_i. \]
Contradiction.

Hence $\Delta^{\prec'}_i$ is unique.
\end{description}
\end{proof}
\setcounter{stelling}{19}
\begin{property}
$ \mbox{For each } \varphi \in B^{\prec'}_i$: $\Delta^{\prec'}_i \vdash \varphi$.
\end{property}
\begin{proof}
Suppose $\varphi \in B^{\prec'}_i$. \\
Then there exists a $ P \Rightarrow 
\varphi \in {\cal A}^{\prec'}_i $ such that: \[ P \subseteq \Delta^{\prec'}_i. \]
Therefore, by Theorem~\ref{th1}: \[ P \vdash \varphi \mbox{ and } P \subseteq \Delta^{\prec'}_i. \]
Hence, $\Delta^{\prec'}_i \vdash \varphi. $
\end{proof}
\setcounter{stelling}{21}
\begin{property} 
$\Delta^{\prec'}_\infty$ is maximal consistent.
\end{property}
\begin{proof}
Suppose that $\Delta^{\prec'}_\infty$ is inconsistent. \\
Then there exists a minimal inconsistent subset $M$ of $\Delta^{\prec'}_\infty$. \\
Let $\varphi = \min_{\prec'}(M)$. \\
Then by Theorem~\ref{th4} there exists an $i$ with 
\[ P \not\Rightarrow \varphi \in {\cal A}^{\prec'}_i \]
Hence $P \not\Rightarrow \varphi \in {\cal A}^{\prec'}_\infty$. \\
Because $P \subseteq \Delta^{\prec'}_\infty$, $\varphi \not\in 
\Delta^{\prec'}_\infty$. \\
Contradiction. 

Suppose that some $\Delta^{\prec'}_\infty$ is not maximal consistent. \\
Then there exists a $\varphi \in ( {\Sigma} \backslash \Delta^{\prec'}_\infty )$ and
$\{\varphi\} \cup \Delta^{\prec'}_\infty$ is consistent. \\
Since $\varphi \in ( {\Sigma} \backslash \Delta^{\prec'}_\infty )$,
$\varphi \in {\it Out}^{\prec'}_\infty ( \Delta^{\prec'}_\infty)$. \\
Therefore, there exists a $P \not\Rightarrow \varphi \in {\cal A}^{\prec'}_\infty$
and $P \subseteq \Delta^{\prec'}_\infty$. \\
Since $P \not\Rightarrow \varphi \in {\cal A}^{\prec'}_\infty$, $P \cup \{ \varphi \}$
is inconsistent. \\
Hence $\Delta^{\prec'}_\infty \cup \{ \varphi \} $ is inconsistent. \\
Contradiction.
\end{proof}
\begin{property} \mbox{}
\[ B_\infty = {\it Th}(\Delta^{\prec'}_\infty) \]
where $ {\it Th}(S) = \{ \varphi \mid S \vdash\varphi \} $
\end{property}
\begin{proof}
According to Property \ref{belief}:
\begin{list}{}{} \item
if $\varphi \in B_\infty$, then $\Delta^{\prec'}_\infty \vdash \varphi$.
\end{list}
Suppose there exists a $\varphi$ such 
that: \[ \varphi \not\in B_\infty \mbox{ and }
\varphi \in  {\it Th}(\Delta^{\prec'}_\infty). \]
Since $\varphi \in  {\it Th}(\Delta^{\prec'}_\infty)$,
$ \Delta^{\prec'}_\infty \vdash \varphi$. \\
By Theorem~\ref{th2} there exists some $i$ 
and some $P \Rightarrow \varphi \in {\cal A}^{\prec'}_i$ such that:
\[ P \subseteq \Delta^{\prec'}_\infty. \]
Therefore, there exists a
$P \Rightarrow \varphi \in {\cal A}^{\prec'}_\infty$ such that:
\[ P \subseteq \Delta^{\prec'}_\infty. \]
Hence, by Definition \ref{belief}: \[ \varphi \in B_\infty. \] 
Contradiction. 

Hence $B_\infty = {\it Th}( \Delta^{\prec'}_\infty )$.
\end{proof}
\begin{theorem}
Let $\langle \Sigma,\prec \rangle$ 
be a reliability theory.

Then there holds:
\[ {\cal R} = \{ \Delta^{\prec'}_\infty \mid \mbox{ for some linear extension $\prec'$ of $\prec$,
$\Delta^{\prec'}_\infty$ can be derived} \}. \]
\end{theorem}
\begin{proof}
Let $\Delta^{\prec'}_\infty$ be a set of believed premisses given a linear 
extension $\prec'$ of $\prec$. \\
Furthermore, let $\sigma_1,..., \sigma_m$ be an enumeration of $\Sigma$
such that for every $\sigma_j \prec' \sigma_k $: $k < j$. \\
Clearly, given this enumeration of $\Sigma$, $\Delta^{\prec'}_\infty$ will satisfy 
Definition \ref{proof-th}.

Let $D$ be a most reliable consistent set of premisses according to Definition \ref{proof-th} given an enumeration
$\sigma_1,...,\sigma_n$ of $\Sigma$.
Furthermore, let $\prec'$ be a linear extension of $\prec$ such that for
each $k <j$: $\sigma_j \prec' \sigma_k$. \\
%

Now suppose that: 
\[ D \not= {\Sigma} \backslash {\it Out}^{\prec'}_\infty (D). \] 
Hence, there exists a most reliable premiss $\varphi \in \Sigma$ such that either 
\[ \varphi \in D \mbox{ and } \varphi \in  {\it Out}^{\prec'}_\infty (D) \]
or \[ \varphi \not\in D \mbox{ and } \varphi \not\in  {\it Out}^{\prec'}_\infty (D). \]

Suppose that $\varphi \in D$ and $\varphi \in  {\it Out}^{\prec'}_\infty (D)$. \\
Since $\varphi \in  {\it Out}^{\prec'}_\infty (D)$, for some $P \not\Rightarrow \varphi \in {\cal A}^{\prec'}_\infty$
there holds:
\[ P \subseteq D. \]
Since $P \subseteq D$ and since $\varphi \in D$, $D$ is inconsistent. \\
By Definition \ref{proof-th}, however, $D$ must be consistent. \\
Contradiction. 

Suppose that $\varphi \not\in D$ and $\varphi \not\in  {\it Out}^{\prec'}_\infty (D)$. \\
Since $\varphi \not\in D$,  there exists a
minimal inconsistent of premisses $\{ \sigma_{i_1},...,\sigma_{i_l} \}$ where $i_j$ are indexes of the enumeration of $\Sigma$, $i_j < i_{j-1}$, $\{ \sigma_{i_1},...,\sigma_{i_{l-1}} \} \subseteq D$ and $\varphi = \sigma_{i_l}$. \\
Therefore, by Theorem \ref{th4} an by the above given definition of $\prec'$: 
\[ \{ \sigma_{i_1},...,\sigma_{i_{l-1}} \} \not\Rightarrow \sigma_{i_l} \in {\cal A}^{\prec'}_\infty \]
Hence, $\varphi \in {\it Out}^{\prec'}_\infty (D)$ \\
Contradiction.
\end{proof}

\setcounter{stelling}{29}

\begin{property}
	Let $\langle {\Sigma},\prec \rangle$ be a reliability theory and let $\sqsubset$ be the preference relation over interpretations defined the reliability theory.
	
	$\sqsubset$ is irreflexive and transitive.
\end{property}
\begin{proof}
	Suppose the $\sqsubset$ is not irreflexive. \\
	Then for some interpretations $\cal M, N$, $\cal M \sqsubset N$ and $\cal N \sqsubset M$. \\
	Since ${\it Prem}({\cal M}) \not= {\it Prem}({\cal N})$, for some $\varphi$, $\varphi \in {\it Prem}({\cal M}) \backslash {\it Prem}({\cal N})$ or $\varphi \in {\it Prem}({\cal N}) \backslash {\it Prem}({\cal M})$. 
	
	Consider the former case. \\
	Let $\varphi$ be a most reliable premiss such that $\varphi \in {\it Prem}({\cal M}) \backslash {\it Prem}({\cal N})$. \footnote{A most reliable premiss exist because $\Sigma$ is finite and $\prec$ is defined over $\Sigma$.} \\
	Since $\cal M \sqsubset N$, there is a $\psi \in {\it Prem}({\cal N}) \backslash {\it Prem}({\cal M})$ such that $\varphi \prec \psi$. \\
	Since $\psi \in {\it Prem}({\cal N}) \backslash {\it Prem}({\cal M})$ and $\cal N \sqsubset M$, there is an $\eta \in {\it Prem}({\cal M}) \backslash {\it Prem}({\cal N})$ such that $\psi \prec \eta$. \\
	Since $\varphi \prec \psi \prec \eta$, $\varphi$ cannot be a most reliable premiss such that $\varphi \in {\it Prem}({\cal M}) \backslash {\it Prem}({\cal N})$. \\
	Contradiction.
	
	The latter case is similar and also results in a contradiction.
	
	Hence, $\sqsubset$ is irreflexive.
	
	Suppose that $\sqsubset$  is not transitive. \\
	Then there exist structures $\cal L, M, N$ such that $\cal  L \sqsubset M \sqsubset N$ but $\cal  L \not\sqsubset N$. \\
	Therefore, for some $\varphi \in {\it Prem}({\cal L}) \backslash {\it Prem}({\cal N})$ there is no $\psi \in {\it Prem}({\cal N}) \backslash {\it Prem}({\cal L})$ such that $\varphi \prec \psi$.
	Let $\varphi$ be the most reliable premiss for which the above holds. \\
	The following cases can be distinguished:
	\begin{itemize}
		\item 
		Suppose that $\varphi \in  {\it Prem}({\cal M})$. \\ 
		Then $\varphi \in {\it Prem}({\cal M}) \backslash {\it Prem}({\cal N})$ and therefore, there is an $\eta \in {\it Prem}({\cal N}) \backslash {\it Prem}({\cal M})$ such that $\varphi \prec \eta$. \\
		Let $\eta$ be the most reliable premiss such that  $\eta \in {\it Prem}({\cal N}) \backslash {\it Prem}({\cal M})$ and $\varphi \prec \eta$.
		
		\begin{itemize}
			\item[]		
			Suppose $\eta \not\in {\it Prem}({\cal L})$. \\
			Then there is a $\psi \in {\it Prem}({\cal N}) \backslash {\it Prem}({\cal L})$, namely $\psi= \eta$, such that $\varphi \prec \psi$. \\
			Contradiction.
			
			Hence, $\eta \in {\it Prem}({\cal L})$. \\
			Therefore, $\eta \in {\it Prem}({\cal L}) \backslash {\it Prem}({\cal M})$, implying that there is a \\ $\mu \in {\it Prem}({\cal M}) \backslash {\it Prem}({\cal L})$ such that $\eta \prec \mu$. 
			
			Suppose that $\mu \in  {\it Prem}({\cal N})$. \\ 
			Then there is a $\psi \in {\it Prem}({\cal N}) \backslash {\it Prem}({\cal L})$, namely $\psi= \mu$, such that $\varphi \prec \psi$ because $\varphi \prec \eta \prec \mu$ and $\prec$ is transitively closed. \\
			Contradiction.
			
			Hence, $\mu \not\in {\it Prem}({\cal N})$. \\
			Therefore, there is a $\xi \in {\it Prem}({\cal N}) \backslash {\it Prem}({\cal M})$ such that $\mu \prec \xi$. \\
			Since $\varphi \prec \eta \prec \mu \prec \xi$ and $\prec$ is transitive, $\eta$ is not the most reliable premiss such that  $\eta \in {\it Prem}({\cal N}) \backslash {\it Prem}({\cal M})$ and $\varphi \prec \eta$. \\
			Contradiction.
		\end{itemize}
		
		\item 
		Suppose that $\varphi \not\in  {\it Prem}({\cal M})$. \\
		Then there is a $\eta \in {\it Prem}({\cal M}) \backslash {\it Prem}({\cal L})$ such that $\varphi \prec \eta$. \\
		Let $\eta$ be the most reliable premiss such that  $\eta \in {\it Prem}({\cal M}) \backslash {\it Prem}({\cal L})$ and $\varphi \prec \eta$.
		
		\begin{itemize}
			\item[]	
			Suppose that $\eta \in  {\it Prem}({\cal N})$. \\ 
			Then there is a $\psi \in {\it Prem}({\cal N}) \backslash {\it Prem}({\cal L})$, namely $\psi= \eta$, such that $\varphi \prec \psi$. \\
			Contradiction.
			
			Hence, $\eta \not\in {\it Prem}({\cal N})$. \\
			Therefore, there is a $\mu \in {\it Prem}({\cal N}) \backslash {\it Prem}({\cal M})$ such that $\eta \prec \mu$. 
			
			Suppose $\mu \not\in {\it Prem}({\cal L})$. \\
			Then there is a $\psi \in {\it Prem}({\cal N}) \backslash {\it Prem}({\cal L})$, namely $\psi= \mu$, such that $\varphi \prec \psi$ because $\varphi \prec \eta \prec \mu$ and $\prec$ is transitively closed. \\
			Contradiction. 
			
			Hence, $\mu \in {\it Prem}({\cal L})$. \\
			Therefore, there is a $\xi \in {\it Prem}({\cal M}) \backslash {\it Prem}({\cal L})$ such that $\mu \prec \xi$. \\
			Since $\varphi \prec \eta \prec \mu \prec \xi$ and $\prec$ is transitive, $\eta$ is not the most reliable premiss such that  $\eta \in {\it Prem}({\cal M}) \backslash {\it Prem}({\cal L})$ and $\varphi \prec \eta$. \\
			Contradiction.
		\end{itemize}
	\end{itemize}
	
	Hence, $\sqsubset$ is transitive.
\end{proof}

\setcounter{stelling}{31}

\begin{theorem}
Let $\langle {\Sigma},\prec \rangle$ be a reliability theory.
Furthermore, let ${\cal R}$ be the corresponding set of 
all most reliable consistent sets of premisses.
Then:
\[ {\it Mod}_\sqsubset (\langle {\Sigma},\prec \rangle) = \bigcup_{\Delta^{\prec'}_\infty \in {\cal R}} 
{\it Mod}( \Delta^{\prec'}_\infty) \]
where ${\it Mod}(S)$ denotes the set of classical models for a set of propositions $S$. 
\end{theorem}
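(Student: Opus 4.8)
The plan is to first simplify the right-hand side. By Property~\ref{consis} each $\Delta^{\prec'}_\infty$ is maximal consistent, and for a maximal consistent $\Delta\subseteq\Sigma$ an interpretation $\cal M$ satisfies $\Delta$ exactly when ${\it Prem}({\cal M})=\Delta$ (if ${\cal M}\models\Delta$ then $\Delta\subseteq{\it Prem}({\cal M})$, and equality follows from maximality). Hence ${\it Mod}(\Delta^{\prec'}_\infty)=\{{\cal M}\mid{\it Prem}({\cal M})=\Delta^{\prec'}_\infty\}$, and by the preceding theorem the union over ${\cal R}$ equals $\{{\cal M}\mid{\it Prem}({\cal M})\in{\cal R}\}$. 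So the whole statement reduces to the biconditional: $\cal M$ is $\sqsubset$-maximal (no $\cal N$ with ${\cal M}\sqsubset{\cal N}$) if and only if ${\it Prem}({\cal M})\in{\cal R}$. I would prove the two directions separately.

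For the direction ``${\it Prem}({\cal M})\in{\cal R}\Rightarrow{\cal M}$ maximal'', write $D={\it Prem}({\cal M})$ and fix a linear extension $\prec'$ for which $D$ is the set produced by Definition~\ref{proof-th}. The crucial consequence of that greedy construction is that for every $e\in\Sigma\setminus D$ the set $\{d\in D\mid e\prec' d\}\cup\{e\}$ is inconsistent, since $e$ was rejected at the moment all more reliable $D$-premisses had already been collected. Now suppose ${\cal M}\sqsubset{\cal N}$ and let $\varphi$ be the $\prec'$-most reliable premiss in the symmetric difference of $D$ and ${\it Prem}({\cal N})$. If $\varphi\in D\setminus{\it Prem}({\cal N})$, then Definition~\ref{prefM} supplies a $\psi\in{\it Prem}({\cal N})\setminus D$ with $\varphi\prec\psi$, a strictly more reliable element of the symmetric difference, contradicting the choice of $\varphi$. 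If $\varphi\in{\it Prem}({\cal N})\setminus D$, then every $D$-premiss more reliable than $\varphi$ lies outside the symmetric difference, hence is satisfied by $\cal N$, so $\{d\in D\mid\varphi\prec' d\}\cup\{\varphi\}\subseteq{\it Prem}({\cal N})$ is consistent, contradicting the rejection property. Either way $\cal M$ is $\sqsubset$-maximal.

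For the converse I would first show that a $\sqsubset$-maximal $\cal M$ has $D={\it Prem}({\cal M})$ maximal consistent: if some $e\in\Sigma\setminus D$ were consistent with $D$, a model $\cal N$ of $D\cup\{e\}$ would satisfy ${\it Prem}({\cal M})\subsetneq{\it Prem}({\cal N})$, so ${\cal M}\sqsubset{\cal N}$ holds vacuously (the difference ${\it Prem}({\cal M})\setminus{\it Prem}({\cal N})$ is empty), contradicting maximality. It then remains to exhibit a linear extension $\prec'$ of $\prec$ whose greedy run of Definition~\ref{proof-th} returns exactly $D$; equivalently, one for which every $e\in\Sigma\setminus D$ conflicts with the $D$-premisses declared more reliable than it. I would build $\prec'$ as a topological sort of $\prec$ (most reliable first) driven by the following rule: at each step, among the currently $\prec$-maximal unprocessed premisses, choose one that is inconsistent with the set of premisses already added if such a premiss exists, and otherwise choose a premiss of $D$. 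Because the already-added set is always a subset of the consistent set $D$, every $D$-premiss is added and every non-$D$-premiss chosen under the first clause is rejected, so the output is contained in $D$ and, being maximal consistent, equals $D$.

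The main obstacle is to show that this rule never forces the addition of a premiss outside $D$; that is, at any step where no unprocessed $D$-premiss is currently $\prec$-maximal, some $\prec$-maximal unprocessed premiss must already conflict with the added set $G\subseteq D$. I would argue by contradiction through $\sqsubset$-maximality: if every $\prec$-maximal unprocessed premiss were consistent with $G$, I would pick such an $e$, extend $G\cup\{e\}$ (and as much of $D$ as possible) to a model $\cal N$, and show ${\cal M}\sqsubset{\cal N}$. The delicate point is matching witnesses: for each $d\in D$ that $\cal N$ fails to satisfy I must produce a $\psi\in{\it Prem}({\cal N})\setminus D$ that is strictly more reliable than $d$, and I expect to obtain these from the minimal inconsistent sets located by Theorem~\ref{th4} together with the fact that $e$ and the other discarded premisses are $\prec$-maximal among what remains. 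Verifying that these witnesses always exist, and hence that the constructed $\prec'$ indeed reproduces $D$, is the technically demanding heart of the argument.
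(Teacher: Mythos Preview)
Your preliminary reduction and the direction ``${\it Prem}({\cal M})\in{\cal R}\Rightarrow{\cal M}$ is $\sqsubset$-maximal'' are essentially the paper's completeness argument. The paper phrases the key step through undermining arguments: if $\psi\notin\Delta^{\prec'}_\infty$ then some $P\not\Rightarrow\psi\in{\cal A}^{\prec'}_\infty$ with $P\subseteq\Delta^{\prec'}_\infty$ exists, and every member of $P$ is $\prec'$-above $\psi$. That is precisely your ``rejection property'' of the greedy construction, and the contradiction via the $\prec'$-most reliable element of the symmetric difference is the same.

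For the converse, your route is genuinely different from the paper's. You build the linear extension $\prec'$ directly by a guided topological sort of $\prec$ and plan to contradict $\sqsubset$-maximality of ${\cal M}$ if the sort ever stalls. The paper instead extends $\prec$ to a larger partial order before linearising: it runs through the premisses of $D={\it Prem}({\cal M})$ from most to least $\prec$-reliable, and for each $\varphi\in D$ and each minimal inconsistent $P\ni\varphi$ adds $\eta\prec^*\varphi$ for every $\eta\in P\setminus\{\varphi\}$ not already placed above $\varphi$; any linear extension of the resulting $\prec^*$ then works. The paper's verification is local---one only has to argue that each inserted pair does not create a cycle---whereas your verification is global, requiring you to manufacture a dominating interpretation ${\cal N}$ from the stalled state.

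The gap you flag is real, and your sketch does not close it. When your sort stalls with accepted set $G\subseteq D$ and unprocessed set $U$, each unprocessed $d\in D\cap U$ lies $\prec$-below some $\prec$-maximal $e_d\in U\setminus D$; but these $e_d$ may differ for different $d$, and nothing forces $G\cup\{e_d:d\in D\cap U\}$ to be consistent. If you pick a single $e$ and extend $G\cup\{e\}$ (plus as much of $D$ as possible) to a model ${\cal N}$, a dropped $d\in D\setminus{\it Prem}({\cal N})$ need not have \emph{its} witness $e_d$ inside ${\it Prem}({\cal N})$, and Theorem~\ref{th4} only tells you some member of a minimal conflict lies outside ${\it Prem}({\cal N})$, not that a strictly $\prec$-larger non-$D$ premiss lies inside it. The paper avoids this difficulty by never constructing ${\cal N}$: manipulating the order directly reduces the problem to an acyclicity check on each added pair, which can be discharged by a short case split on whether $\eta$ belongs to $D$ and whether it was processed earlier.
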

\begin{proof}
The proof of \[ {\it Mod}_\sqsubset (\langle {\Sigma},\prec \rangle) = 
\bigcup_{\Delta^{\prec'}_\infty \in 
{\cal R}} {\it Mod}( \Delta^{\prec'}_\infty) \] can be divided into the proof of the 
soundness 
\[ {\it Mod}_\sqsubset (\langle {\Sigma},\prec \rangle) \subseteq \bigcup_{\Delta^{\prec'}_\infty \in {\cal R}} {\it Mod}( \Delta^{\prec'}_\infty) \]
and the proof of the completeness 
\[ \bigcup_{\Delta^{\prec'}_\infty \in {\cal R}} {\it Mod}( \Delta^{\prec'}_\infty) \subseteq {\it Mod}_\sqsubset (\langle {\Sigma},\prec \rangle) \]
of the logic.
\begin{description}
\item[Completeness]
Suppose that for some $\Delta^{\prec'}_\infty \in {\cal R}$ and some ${\cal M} \in
{\it Mod}(\Delta^{\prec'}_\infty)$: \[ {\cal M} \not\in {\it Mod}_\sqsubset (\langle {\Sigma},\prec \rangle). \]
Then there exists a structure $\cal N$: \[ \cal M \sqsubset N. \]
${\it Prem}({\cal M}) = \Delta^{\prec'}_\infty$ because $\Delta^{\prec'}_\infty$ is a maximal consistent set of premisses. \\
Therefore, according to Proposition \ref{consis}: 
\[ \Delta^{\prec'}_\infty \not\subset {\it Prem}({\cal N}). \]
Let $\varphi \in \Delta^{\prec'}_\infty$ be the\emph{ most reliable} premiss
according to the linear extension $\prec'$ of $\prec$, such that
$\varphi \in (\Delta^{\prec'}_\infty \backslash {\it Prem}({\cal N}))$. \\
Now by Definition \ref{prefM} 
there exists a $\psi \in ({\it Prem}({\cal N}) \backslash \Delta^{\prec'}_\infty)$ 
such that $\varphi \prec \psi$. \\
Since $\psi \not\in \Delta^{\prec'}_\infty$, there exists a $P \not\Rightarrow \psi \in {\cal A}^{\prec'}_\infty$
such that:  \[ P \subseteq \Delta^{\prec'}_\infty. \]
Now, $P \not\subseteq {\it Prem}({\cal N})$, otherwise ${\it Prem}({\cal N})$
would be inconsistent. \\
Hence, there exists a $\mu \in P$: 
\[  \mu \in (\Delta^{\prec'}_\infty \backslash {\it Prem}({\cal N})). \]
Since $P \not\Rightarrow \psi \in {\cal A}^{\prec'}_\infty$, $\psi \prec' \mu$. \\
Hence, $\varphi \prec' \psi \prec' \mu. $ \\
Contradiction. 

Hence, \[\bigcup_{\Delta^{\prec'}_\infty \in {\cal R}} {\it Mod}( \Delta^{\prec'}_\infty) \subseteq 
{\it Mod}_\sqsubset (\langle {\Sigma},\prec \rangle). \]

\item[Soundness]
Let ${\cal M} \in {\it Mod}_\sqsubset (\langle {\Sigma},\prec \rangle)$. \\
Then, ${\cal M} \in \bigcup_{\Delta^{\prec'}_\infty \in {\cal R}} {\it Mod}( \Delta^{\prec'}_\infty)$ has to be proven. \\
Note that ${\it Prem}({\cal M})$ is a maximal consistent set of premisses because otherwise, there would exists an $\cal N$ such that $\cal M \sqsubset N$. \\
Hence, for some linear extension $\prec'$ of $\prec$,  ${\it Prem}({\cal M}) = \Delta^{\prec'}_\infty$ has to be proven. 

The proof is based on constructing a linear extension $\prec'$ of $\prec$. \\
Staring from the most reliable premiss in ${\it Prem}({\cal M})$ given the reliability relation $\prec$, and an initial reliability relation $\prec^*_0 = \prec$, an extension  $\prec^*_{|{\it Prem}({\cal M})|}$ of $\prec$ is constructed. 

Let $\varphi \in {\it Prem}({\cal M})$ be a most reliable premiss given $\prec$ that has not been addressed yet, and let $\prec^*_i$ be the reliability relation constructed so far. \\
Create $\prec^*_{i+1}$ by adding $\eta \prec \varphi$ to $\prec^*_i$ for every minimal inconsistent subset $P$ of $\Sigma$ such that $\varphi \in P$, and for every $\eta \in P \backslash \varphi$ such that $\varphi \not\prec^*_i \eta$, and subsequently taking the transitive closure. 

The correctness of the of the reliability relation $\prec^*_{i+1}$ depends on the condition $\varphi \not\prec^*_i \eta$. There are three cases:
\begin{itemize}
	\item 
	Suppose $\eta \in  {\it Prem}({\cal M})$ and that $\eta$ is considered before $\varphi$. \\
	Then clearly, $\varphi \prec^*_i \eta$. Moreover, there must be a $\psi \in P$, $\psi \not\in {\it Prem}({\cal M})$ and the construction of $\prec^*$ guarantees that $\psi \prec^*_{i+1} \varphi$. \\
	Hence, the construction is correct.
	\item 
	Suppose $\eta \in  {\it Prem}({\cal M})$ and that $\eta$ is not considered before $\varphi$. \\
	Then, $\varphi \not\prec \eta$, and therefore, $\varphi \not\prec^*_i \eta$. \\
	Hence, the construction is correct.
	\item 
	Suppose $\eta \not\in  {\it Prem}({\cal M})$. \\
	Then there is a minimal inconsistent subset $Q$ of $\Sigma$ such that $\eta$ is a least preferred premiss $Q$ given $\prec$, and $Q \backslash \eta \subseteq {\it Prem}({\cal M})$. \\
	Hence, the construction is correct.
\end{itemize}

After constructing a reliability relation $\prec^*_{|{\it Prem}({\cal M})|}$, the final step is to take a linear extension of $\prec^*_{|{\it Prem}({\cal M})|}$ in order to get $\prec'$.
\end{description}
\end{proof}

\begin{lemma}
Let $\langle {\Sigma},\prec \rangle$ be a reliability theory.
Furthermore, let $\widehat{\alpha} = \{ {\cal M} \mid {\cal M} \models \alpha \} $,
let $\Sigma' = \Sigma \cup \{ \alpha \}$ and 
let $\prec' \; = ( \prec \cap \ ( \Sigma / \alpha \times 
\Sigma / \alpha ) )  \cup \{ \left< \varphi, \alpha \right> \mid
\varphi \in \Sigma / \alpha \}$.

Then $ {\cal M} \in {\it Mod}_{\sqsubset'} (\langle {\Sigma'},\prec' \rangle) $
if and only if
$ {\cal M} \in \widehat{\alpha} $
and for no ${\cal N} \in \widehat{\alpha}$: \[ {\cal M \sqsubset N}. \]
\end{lemma}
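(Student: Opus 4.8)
The plan is to prove both directions by contradiction, exploiting the single structural fact about the modified theory: in $\langle \Sigma',\prec' \rangle$ the premiss $\alpha$ is made strictly more reliable than every other premiss, so it is the unique $\prec'$-maximal element. Before starting I would record two bookkeeping facts. Writing ${\it Prem}'({\cal M})$ for the premisses of $\Sigma'$ satisfied by ${\cal M}$, we have ${\it Prem}'({\cal M}) = {\it Prem}({\cal M}) \cup \{\alpha\}$ when ${\cal M} \models \alpha$ and ${\it Prem}'({\cal M}) = {\it Prem}({\cal M})$ otherwise; moreover $\prec'$ restricted to $\Sigma \backslash \alpha$ coincides with $\prec$, while the only new pairs are $\varphi \prec' \alpha$ for $\varphi \in \Sigma \backslash \alpha$, so $\alpha \prec' \psi$ never holds. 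I would also note the standing assumption that $\alpha$ is satisfiable; otherwise $\widehat{\alpha} = \emptyset$ and the statement degenerates.

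For the forward direction, assume ${\cal M} \in {\it Mod}_{\sqsubset'}(\langle \Sigma',\prec' \rangle)$. To show ${\cal M} \models \alpha$, suppose not and pick any ${\cal N} \models \alpha$; then every $\varphi \in {\it Prem}'({\cal M}) \backslash {\it Prem}'({\cal N})$ lies in $\Sigma \backslash \alpha$ and satisfies $\varphi \prec' \alpha$, with $\alpha \in {\it Prem}'({\cal N}) \backslash {\it Prem}'({\cal M})$, so ${\cal M} \sqsubset' {\cal N}$, contradicting maximality. Having ${\cal M} \models \alpha$, suppose some ${\cal N} \in \widehat{\alpha}$ had ${\cal M} \sqsubset {\cal N}$. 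Since $\alpha$ lies in both ${\it Prem}'({\cal M})$ and ${\it Prem}'({\cal N})$, the two symmetric-difference sets are unchanged on passing from ${\it Prem}$ to ${\it Prem}'$ and are contained in $\Sigma \backslash \alpha$, where $\prec'$ agrees with $\prec$; hence the witnessing $\psi$ for each $\varphi$ transfers verbatim and ${\cal M} \sqsubset' {\cal N}$, again contradicting maximality. This yields the right-hand side.

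For the backward direction, assume ${\cal M} \models \alpha$ and that no ${\cal N} \in \widehat{\alpha}$ satisfies ${\cal M} \sqsubset {\cal N}$, and suppose toward a contradiction that ${\cal M} \sqsubset' {\cal N}$ for some ${\cal N}$. I would split on whether ${\cal N} \models \alpha$. If ${\cal N} \models \alpha$, the same cancellation of $\alpha$ turns ${\cal M} \sqsubset' {\cal N}$ into ${\cal M} \sqsubset {\cal N}$ with ${\cal N} \in \widehat{\alpha}$, contradicting the hypothesis. If ${\cal N} \not\models \alpha$, then $\alpha \in {\it Prem}'({\cal M}) \backslash {\it Prem}'({\cal N})$, so ${\cal M} \sqsubset' {\cal N}$ would demand a $\psi \in {\it Prem}'({\cal N}) \backslash {\it Prem}'({\cal M})$ with $\alpha \prec' \psi$, which is impossible since nothing is $\prec'$-above $\alpha$. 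Either way we reach a contradiction, so ${\cal M}$ is $\sqsubset'$-maximal.

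The symmetric-difference bookkeeping is routine; the one genuinely load-bearing step, which I would state most carefully, is the second case of the backward direction, where maximality of $\alpha$ in $\prec'$ is exactly what forbids a dominating interpretation that drops $\alpha$. I would also flag the satisfiability of $\alpha$ as the precise hypothesis justifying the ``pick any ${\cal N} \models \alpha$'' step in the forward direction.
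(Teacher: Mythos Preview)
Your proof is correct and follows essentially the same approach as the paper's. The paper factors out two preliminary observations --- that any ${\cal N}\notin\widehat{\alpha}$ satisfies ${\cal N}\sqsubset'{\cal M}$ for every ${\cal M}\in\widehat{\alpha}$, and that $\sqsubset'$ and $\sqsubset$ coincide on pairs from $\widehat{\alpha}$ --- and then applies them to each direction, whereas you weave the same reasoning directly into the two implications; your direct argument in the backward case ${\cal N}\not\models\alpha$ (no $\psi$ with $\alpha\prec'\psi$) avoids the appeal to asymmetry of $\sqsubset'$ that the paper implicitly uses, and your explicit satisfiability caveat for $\alpha$ is a point the paper leaves tacit.
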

\begin{proof}
The results presented in the following two items, will be used in the proof.
\begin{itemize}
\item 
Suppose that ${\cal M} \in \widehat{\alpha}$ and ${\cal N} \not\in \widehat{\alpha}$,
i.e.\ ${\cal M} \models {\alpha}$ and ${\cal N} \not\models {\alpha}$.\\
Then by Definition \ref{prem}: \[ {\it Prem}({\cal M}) \not= {\it Prem}({\cal N}). \]
Therefore, \[ \alpha \in ({\it Prem}({\cal M}) \backslash {\it Prem}({\cal N})) \]
and for each $\varphi \in ({\it Prem}({\cal N}) \backslash {\it Prem}({\cal M}))$ 
there holds: \[ \varphi \prec' \alpha. \]

Hence by Definition \ref{prefM} for each ${\cal M} \in \widehat{\alpha}$ 
and ${\cal N} \not\in \widehat{\alpha}$:
\[ \cal N \sqsubset' M. \]
\item
Suppose that ${\cal M,N} \in \widehat{\alpha}$. \\
Since ${\cal M,N} \models \alpha$, for each $\varphi \in ( {\it Prem}({\cal M}) \backslash {\it Prem}({\cal N}))$
and for each \\ $\psi \in ({\it Prem}({\cal N}) \backslash {\it Prem}({\cal M}))$: 
\[ \varphi \prec \psi \mbox{ if and only if } \varphi \prec' \psi. \]

Hence, for each ${\cal M,N} \in \widehat{\alpha}$: \[ \cal N \sqsubset' M
\mbox{ if and only if } \cal N \sqsubset M. \]
\end{itemize}
Let $ {\cal M} \in {\it Mod}_{\sqsubset'} (\langle {\Sigma'},\prec' \rangle) $. \\
The first item above shows that ${\cal M} \in \widehat{\alpha}$ and the second item shows that for no ${\cal N} \in \widehat{\alpha}$: $\cal M \sqsubset N$.

Let ${\cal M} \in \widehat{\alpha}$ and for no ${\cal N} \in \widehat{\alpha}$: $\cal M \sqsubset N$. \\
The first item shows that for no ${\cal N} \not\in \widehat{\alpha}$: $\cal M \sqsubset' N$. \\
The second item shows that ${\cal M}$ is preferred in $\widehat{\alpha}$ given $\sqsubset'$.
\end{proof}

\begin{theorem}
	Let $\langle {\Sigma},\prec \rangle$ be a reliability theory. Moreover, let $\langle S,l,< \rangle$ be a triple where the set of states $S$ is the set of all possible interpretations for the language $L$, where $l : S \to S$ is the identity function, and where for each
	${\cal M,N} \in S$: 
	\begin{list}{}{}
		\item
		$\cal M < N$ if and only if $\cal N \sqsubset M$.
	\end{list}
	
	Then $\langle S,l,< \rangle$ is a \textit{preferential model} \cite{Kra-90}.
\end{theorem}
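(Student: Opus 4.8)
The plan is to verify the two defining requirements of a KLM preferential model for the triple $\langle S, l, < \rangle$: that $<$ is a strict partial order on the states (irreflexive and transitive), and that it satisfies the smoothness (stoppering) condition, namely that for every formula $\alpha$ the set $\widehat{\alpha} = \{ {\cal M} \in S \mid {\cal M} \models \alpha \}$ is smooth with respect to $<$. Since $l$ is the identity, a state is just an interpretation and $\widehat{\alpha}$ is the ordinary set of models of $\alpha$, so no complication enters through the labelling.

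First I would dispatch the order requirement. Because $<$ is by definition the converse of $\sqsubset$ (${\cal M} < {\cal N}$ iff ${\cal N} \sqsubset {\cal M}$), and because the converse of an irreflexive, transitive relation is again irreflexive and transitive, irreflexivity and transitivity of $<$ follow immediately from Property~\ref{prop:irref-trans}. This step is routine.

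The substance is the smoothness condition, and this is the main obstacle. I would first translate it into the language of $\sqsubset$: a state minimal in $\widehat{\alpha}$ with respect to $<$ is exactly a $\sqsubset$-maximal, i.e.\ most preferred, interpretation in $\widehat{\alpha}$. Smoothness then asks that every ${\cal M} \in \widehat{\alpha}$ either be itself most preferred in $\widehat{\alpha}$ or lie strictly below, in $\sqsubset$, some most preferred element of $\widehat{\alpha}$. The key observation I would exploit is that, by Definition~\ref{prefM}, whether ${\cal M} \sqsubset {\cal N}$ holds depends only on the finite sets ${\it Prem}({\cal M})$ and ${\it Prem}({\cal N})$ of Definition~\ref{prem}, and $\sqsubset$ never relates two interpretations with the same satisfied-premiss set. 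Since $\Sigma$ is finite there are only finitely many possible values of ${\it Prem}(\cdot)$ regardless of how many interpretations $S$ contains, so $\sqsubset$ admits no infinite strictly ascending chains.

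With this, the argument closes as follows. Given ${\cal M} \in \widehat{\alpha}$, I would build an ascending chain ${\cal M} \sqsubset {\cal M}_1 \sqsubset {\cal M}_2 \sqsubset \cdots$ inside $\widehat{\alpha}$, continuing as long as the current element is not $\sqsubset$-maximal in $\widehat{\alpha}$. By transitivity all members of such a chain have pairwise distinct premiss sets, and since there are at most $2^{|\Sigma|}$ of these the chain terminates, necessarily at an element ${\cal N} \in \widehat{\alpha}$ that is $\sqsubset$-maximal in $\widehat{\alpha}$ and satisfies ${\cal M} \sqsubseteq {\cal N}$. Reading this back through the definition of $<$, ${\cal N}$ is $<$-minimal in $\widehat{\alpha}$ and either equals ${\cal M}$ or lies $<$-below it, which is precisely smoothness. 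Having established both the order properties and smoothness, $\langle S, l, < \rangle$ meets the KLM definition of a preferential model, completing the proof. The only delicate point is justifying the absence of infinite ascending chains, and the finiteness of $\Sigma$ together with the premiss-set factorisation of $\sqsubset$ is exactly what supplies it.
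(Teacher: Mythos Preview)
Your proposal is correct and follows essentially the same argument as the paper: both derive irreflexivity and transitivity of $<$ from Property~\ref{prop:irref-trans}, and both establish smoothness by observing that any strictly $\sqsubset$-ascending chain must have pairwise distinct values of ${\it Prem}(\cdot)$, hence length bounded by $2^{|\Sigma|}$ since $\Sigma$ is finite. The only cosmetic difference is that the paper routes the smoothness argument through Lemma~\ref{lem} to identify the $\sqsubset$-maximal elements of $\widehat{\alpha}$ with ${\it Mod}_{\sqsubset'}(\langle \Sigma',\prec' \rangle)$, whereas you work directly with $\sqsubset$-maximality in $\widehat{\alpha}$; your version is slightly more direct but the key idea is identical.
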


%
\begin{proof}
	Since $S$ is the set of all interpretations and since $l$ is the identity function, a state corresponds one to one to an interpretation. \\
	Therefore, since the relation $\sqsubset$ is an irreflexive and transitive partial order on interpretations, 
	so is $<$ on $S$.
	
A transitive relation $<$ implies smoothness if $S$ is finite. \\
In case of first order logic, there might exist an infinite long chain of preference, and therefore smoothness has to be proven.
	
Suppose that $<$ is not smooth. \\
Then by Lemma \ref{lem} for some proposition $\alpha$ and some
${\cal M} \in \widehat{\alpha}$ there holds neither that:
\[ {\cal M} \in {\it Mod}_{\sqsubset'}(\langle \Sigma',\prec' \rangle), \]
nor does there exist a $ {\cal N} \in {\it Mod}_{\sqsubset'} (\langle {\Sigma'},\prec' \rangle)$
such that: \[ \cal M \sqsubset N. \]
If ${\cal M} \not\in {\it Mod}_{\sqsubset'}(\langle \Sigma',\prec' \rangle)$,
there must exists an ${\cal L}_1$: ${\cal M} \sqsubset {\cal L}_1$. \\
Suppose that for some ${\cal L}_i$ with $i \geq 1$ 
there does not exist an ${\cal L}_{i+1}$ such that:
\[ {\cal L}_i \sqsubset {\cal L}_{i+1}. \]
Then ${\cal L}_i \in {\it Mod}_{\sqsubset'}(\langle \Sigma',\prec' \rangle)$. \\
According to Property \ref{prop:irref-trans}, 
\[ {\cal M} \sqsubset {\cal L}_{i+1}. \]
Contradiction.

Hence, there exists an infinite sequence ${\cal M} \sqsubset {\cal L}_1 
\sqsubset {\cal L}_2 \sqsubset ...$. 

For each ${\cal L}_i$ there exists a ${\it Prem}({\cal L}_i) \subseteq \Sigma$. \\
Suppose that for some $j <i$: ${\it Prem}({\cal L}_i) = {\it Prem}({\cal L}_j)$. \\
Then ${\cal L}_j \not\sqsubset {\cal L}_i$. \\
Since $\sqsubset$ is transitive, ${\cal L}_j \sqsubset {\cal L}_i$. \\
Contradiction.

Hence, for each ${\cal L}_i, {\cal L}_j$ with $i \not= j$: ${\it Prem}({\cal L}_i) \not= {\it Prem}({\cal L}_j)$. \\
Since $\Sigma$ is finite, for some $i, j$ with $i \not= j$: ${\it Prem}({\cal L}_i) = {\it Prem}({\cal L}_j)$. \\
Contradiction.


Hence, $<$ is smooth.

Hence, $\langle S,l,< \rangle$ is a preferential model according to the definition of
Kraus et al.
\end{proof}
\begin{theorem}
Let $W= \left< S,l,< \right>$ be a preferential model for $\langle {\Sigma},\prec \rangle$.
Then the following equivalence holds:
\begin{list}{}{} \item
$\alpha \nm_W \beta$ if and only if 
\[ \Sigma' = \Sigma \cup \{ \alpha \}, \] 
\[ \prec' \; = ( \prec \cap \ ( \Sigma / \alpha \times 
\Sigma / \alpha ) ) \cup \{ \left< \varphi, \alpha \right> \mid
\varphi \in \Sigma / \alpha \} \] 
and $\beta \in {\it Th}(\langle \Sigma', \prec' \rangle)$.
\end{list}
\end{theorem}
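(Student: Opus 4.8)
The plan is to unfold the Kraus--Lehmann--Magidor definition of the preferential consequence relation $\nm_W$ and then reduce it, by means of Lemma~\ref{lem} and Theorem~\ref{s.c}, to a statement about the set of theorems of the modified reliability theory $\langle \Sigma', \prec' \rangle$ named in the statement.

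First I would recall the KLM reading of $\nm_W$: for a preferential model $W = \langle S, l, < \rangle$ one has $\alpha \nm_W \beta$ exactly when $l(s) \models \beta$ for every state $s$ that is \emph{minimal} with respect to $<$ among the states whose label satisfies $\alpha$. In the present model the states are the interpretations and $l$ is the identity, so ``the label of $s$ satisfies $\alpha$'' is simply ${\cal M} \models \alpha$, i.e.\ ${\cal M} \in \widehat{\alpha}$. Because the order was defined by ${\cal M} < {\cal N}$ iff ${\cal N} \sqsubset {\cal M}$, an interpretation ${\cal M} \in \widehat{\alpha}$ is $<$-minimal inside $\widehat{\alpha}$ precisely when there is no ${\cal N} \in \widehat{\alpha}$ with ${\cal M} \sqsubset {\cal N}$. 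Thus $\alpha \nm_W \beta$ holds iff ${\cal M} \models \beta$ for every ${\cal M} \in \widehat{\alpha}$ that is $\sqsubset$-maximal within $\widehat{\alpha}$. This order reversal is the one place I would check carefully; the smoothness of $W$, already established, is what guarantees that enough minimal states exist for this reading to be the correct one.

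Next I would invoke Lemma~\ref{lem}, which identifies exactly these $\sqsubset$-maximal interpretations of $\widehat{\alpha}$ with the models ${\it Mod}_{\sqsubset'}(\langle \Sigma', \prec' \rangle)$ of the modified theory. Hence $\alpha \nm_W \beta$ iff $\beta$ is true in every ${\cal M} \in {\it Mod}_{\sqsubset'}(\langle \Sigma', \prec' \rangle)$. Applying Theorem~\ref{s.c} to $\langle \Sigma', \prec' \rangle$ rewrites this set as
\[ {\it Mod}_{\sqsubset'}(\langle \Sigma', \prec' \rangle) = \bigcup_{\Delta \in {\cal R}'} {\it Mod}(\Delta), \]
where ${\cal R}'$ is the set of most reliable consistent sets of premisses for $\langle \Sigma', \prec' \rangle$. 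A proposition $\beta$ is true throughout this union iff it is true in every classical model of each $\Delta \in {\cal R}'$, i.e.\ iff $\Delta \models \beta$ (equivalently $\Delta \vdash \beta$) for all $\Delta \in {\cal R}'$. By the definition of the set of theorems this is exactly $\beta \in \bigcap_{\Delta \in {\cal R}'} {\it Th}(\Delta) = {\it Th}(\langle \Sigma', \prec' \rangle)$, closing the chain of equivalences.

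The step I expect to be the main obstacle is not any single computation but the careful handling of the degenerate case $\widehat{\alpha} = \emptyset$ (i.e.\ $\alpha$ unsatisfiable), where the KLM relation makes $\alpha \nm_W \beta$ vacuously true while the right-hand side would collapse to ${\it Th}(\langle \Sigma, \prec \rangle)$ because $\alpha$ is never added to any most reliable consistent set. I would therefore either restrict attention to satisfiable $\alpha$ or verify that the statements of Lemma~\ref{lem} and Theorem~\ref{s.c} are intended under that proviso, and make sure both directions of the biconditional are matched correctly against the (possibly empty) intersection defining ${\it Th}$.
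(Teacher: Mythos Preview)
Your proposal is correct and follows essentially the same route as the paper: both arguments chain Lemma~\ref{lem} together with Theorem~\ref{s.c} and the definition of $\nm_W$, yours starting from the $\nm_W$ side and the paper's starting from the ${\it Th}(\langle \Sigma',\prec'\rangle)$ side. Your explicit unpacking of the order reversal and your caution about the unsatisfiable-$\alpha$ edge case go beyond what the paper spells out, but do not change the substance of the argument.
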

\begin{proof}
According to Theorem \ref{s.c}:
\begin{list}{}{} \item
$\beta \in {\it Th}(\langle \Sigma', \prec' \rangle)$ if and only if 
for each ${\cal M} \in {\it Mod}_{\sqsubset'}
(\langle {\Sigma'},\prec' \rangle)$: \[ {\cal M} \models \beta. \]
\end{list}
Therefore, by Lemma \ref{lem}:
\begin{list}{}{} \item
$\beta \in {\it Th}(\langle \Sigma', \prec' \rangle)$ if and only if for each ${\cal M}
\in \max_{\sqsubset}(\widehat{\alpha})$: \[ {\cal M} \models \beta. \]
\end{list}
Hence, by the definition of the non-monotonic entailment relation $\nm$ we have:
\begin{list}{}{} \item
$\beta \in {\it Th}(\langle \Sigma', \prec' \rangle)$ if and only if $\alpha \nm_W \beta$.
\end{list}
\end{proof}
\setcounter{stelling}{36}
\begin{theorem}
Let belief set $K = {\it Th}(\langle \Sigma, \prec \rangle)$ be the 
set of theorems of the reliability theory 
$\langle \Sigma, \prec \rangle$. \\
Suppose that $K^*[\alpha]$ is the belief set of the premisses 
$\Sigma \cup \{ \alpha \}$ with 
reliability relation: \[ \prec' \; = ( \prec \cap \ ( \Sigma / \alpha \times 
\Sigma / \alpha ) ) \cup \{ \left< \varphi, \alpha \right> \mid
\varphi \in \Sigma / \alpha \}; \]
i.e.\ $K^*[\alpha] = \{ \beta \mid \alpha \nm_W \beta \}$ where $W$
is a preferential model for 
$\langle {\Sigma},\prec \rangle$. \\
Then the following postulates are satisfied.
\begin{enumerate}
\item
$K^*[\alpha]$ is a belief set.
\item
$\alpha \in K^*[\alpha]$.
\setcounter{enumi}{5}
\item
If $\vdash \alpha \leftrightarrow \beta$, then $K^*[\alpha] = K^*[\beta]$.
\end{enumerate}
\end{theorem}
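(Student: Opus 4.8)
The plan is to work entirely from the \emph{semantic} characterization of $K^*[\alpha]$ rather than from its syntactic definition as $Th(\langle \Sigma', \prec' \rangle)$. Combining the preceding theorem (which equates $\alpha \nm_W \beta$ with $\beta \in Th(\langle \Sigma', \prec' \rangle)$), Theorem~\ref{s.c}, and Lemma~\ref{lem}, I would first record the identity
\[ K^*[\alpha] = \{ \beta \mid \alpha \nm_W \beta \} = \{ \beta \mid \mathcal{M} \models \beta \mbox{ for every } \mathcal{M} \in \max_\sqsubset(\widehat{\alpha}) \}, \]
where $\sqsubset$ is the preference relation of the \emph{original} theory $\langle \Sigma, \prec \rangle$. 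The crucial observation to state up front is that this characterization depends on $\alpha$ only through the model class $\widehat{\alpha}$: the relation $\sqsubset$ is fixed by $\langle \Sigma, \prec \rangle$ and is insensitive to which proposition is being conditioned on.

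For postulate 1 I would argue that $K^*[\alpha]$ is deductively closed, hence a belief set. From the display above, $K^*[\alpha]$ is the set of propositions satisfied by every interpretation in the fixed class $\max_\sqsubset(\widehat{\alpha})$, and the set of propositions true throughout a fixed class of interpretations is closed under logical consequence. (Equivalently, one may note $K^*[\alpha] = \bigcap_{D \in {\cal R}'} Th(D)$ is an intersection of deductively closed theories.) For postulate 2 I would use that $\max_\sqsubset(\widehat{\alpha}) \subseteq \widehat{\alpha}$ and that, by Definition~\ref{inter} and the definition of $\widehat{\alpha}$, every $\mathcal{M} \in \widehat{\alpha}$ satisfies $\alpha$; hence every maximal interpretation satisfies $\alpha$, so $\alpha \in K^*[\alpha]$ (that is, $\alpha \nm_W \alpha$).

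Postulate 6 is where the semantic reformulation pays off, and it is the step I expect to be the main obstacle. On the syntactic side $K^*[\alpha]$ and $K^*[\beta]$ are the theories of the genuinely different reliability theories $\langle \Sigma \cup \{\alpha\}, \prec' \rangle$ and $\langle \Sigma \cup \{\beta\}, \prec'' \rangle$, so there is no a priori reason for them to coincide even when $\alpha$ and $\beta$ are logically equivalent. The resolution is that $\vdash \alpha \leftrightarrow \beta$ gives $\widehat{\alpha} = \widehat{\beta}$; combined with the fact that $\sqsubset$ does not depend on the conditioning proposition, this yields $\max_\sqsubset(\widehat{\alpha}) = \max_\sqsubset(\widehat{\beta})$, and therefore $K^*[\alpha] = K^*[\beta]$ directly from the semantic characterization.

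Finally, I would flag the degenerate case in which $\alpha$ (and hence $\beta$) is unsatisfiable, so that $\widehat{\alpha} = \widehat{\beta} = \emptyset$: the vacuous condition makes both revised belief sets equal to all of $L$, so the three postulates still hold and this case needs no separate treatment. Apart from this remark, the entire argument reduces to the single structural insight that conditioning on $\alpha$ affects the construction only through $\widehat{\alpha}$.
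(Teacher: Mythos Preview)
Your proposal is correct and follows essentially the same route as the paper. The paper's proof is terser: having established that $W$ is a preferential model (Theorem~\ref{modP}), it simply invokes the system~{\bf P} rules by name---Reflexivity for postulate~2 and Left Logical Equivalence for postulate~6---and cites Property~\ref{Binf} for postulate~1, whereas you unpack those same rules directly from the semantic description $K^*[\alpha] = \{\beta \mid \mbox{every } {\cal M} \in \max_\sqsubset(\widehat{\alpha}) \mbox{ satisfies } \beta\}$.
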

\begin{proof}
\hspace*{\fill} \\ \vspace{-6mm}
\begin{enumerate}
\item
This follows from Property \ref{Binf}
\item
Since $\alpha \nm_W \alpha$ (reflexivity), $\alpha \in K^* [\alpha]$.
\setcounter{enumi}{5}
\item
Since $\displaystyle \frac{\models \alpha \leftrightarrow \beta, \alpha \nm_W \gamma}{
\beta \nm_W \gamma}$ (left logical equivalence), 
if $\vdash \alpha \leftrightarrow \beta$, \\
then $K^*[\alpha] = K^*[\beta]$.
\end{enumerate}
\end{proof}

\section*{\rm Acknowledgement}
I thank C. Witteveen for the discussions we had on the content of paper.
Furthermore, I thank the reviewers. Due to their comments the paper
has improved significantly.
Finally, I thank the Delft University of Technology (TU-Delft) and 
and the National Aerospace Laboratory (NLR) for supporting
the research reported on.

\end{document}